\documentclass{article}
\usepackage{booktabs}
\usepackage{iclr2026_conference,times}

\usepackage{amsthm}
\usepackage{listings}
\usepackage{xcolor}
\usepackage{wrapfig}

\definecolor{codegreen}{rgb}{0,0.6,0}
\definecolor{codegray}{rgb}{0.5,0.5,0.5}
\definecolor{codepurple}{rgb}{0.58,0,0.82}
\definecolor{backcolour}{rgb}{0.95,0.95,0.92}

\lstdefinestyle{mystyle}{
    backgroundcolor=\color{backcolour},   
    commentstyle=\color{codegreen},
    keywordstyle=\color{magenta},
    numberstyle=\tiny\color{codegray},
    stringstyle=\color{codepurple},
    basicstyle=\ttfamily\footnotesize,
    breakatwhitespace=false,         
    breaklines=true,                 
    captionpos=b,                    
    keepspaces=true,                 
    numbers=left,                    
    numbersep=5pt,                  
    showspaces=false,                
    showstringspaces=false,
    showtabs=false,                  
    tabsize=2
}

\newtheorem{theorem}{Theorem}
\newtheorem{definition}{Definition}
\newtheorem{proposition}{Proposition}
\newtheorem{lemma}{Lemma}

\newtheorem{remark}{Remark}
\newtheorem{example}{Example}

\usepackage{soul}
\usepackage{enumitem}
\usepackage{quiver}
\usepackage{adjustbox}
\usepackage{multirow}
\usepackage{wrapfig}
\usepackage{subcaption}
\usepackage{caption}
\usepackage{rotating}      
\usepackage{bigdelim}      

\usepackage{amsmath, tikz}
\usetikzlibrary{positioning, quotes}
\usetikzlibrary{calc}
\usepackage{xcolor}

\usepackage{algorithm}
\usepackage[noend]{algpseudocode}
\usepackage[normalem]{ulem}
\usepackage{wrapfig}

\usepackage[textsize=scriptsize,linecolor=orange!40,backgroundcolor=yellow!15,bordercolor=orange!40]{todonotes}

\usepackage{mathtools} 
\DeclareMathOperator{\find}{find}
\DeclareMathOperator{\push}{push}
\DeclareMathOperator{\pop}{pop}
\DeclareMathOperator{\nbrs}{nbrs}
\DeclareMathOperator{\merge}{merge}

\newcommand{\PD}{\mathsf{PD}}  
    
\newcommand{\UF}{\mathsf{UF}}

\newcommand{\Bcal}{\mathcal{B}}

\usepackage{tcolorbox}
\usepackage{colortbl}
\tcbuselibrary{skins}
\definecolor{lb}{RGB}{31,119,180}

\tcbset{tab2/.style={enhanced,fonttitle=\bfseries,
colback=white,colframe=gray,colbacktitle=white,
coltitle=black,center title}}

\newtcolorbox{resbox}{standard jigsaw,opacityback=0,boxrule=1pt,top=0.9ex,bottom=0.9ex,colbacktitle=lb!10!white,colframe=lb!70!white}

\definecolor{lb}{RGB}{31,119,180}
\usepackage{booktabs}       
\usepackage{tcolorbox}
\usepackage{tabularx}
\usepackage{colortbl}
\tcbuselibrary{skins}

\usepackage{adjustbox}

\usepackage{wrapfig}

\usepackage[colorlinks=True, linkcolor=lb, citecolor=teal]{hyperref}

\tcbset{tab2/.style={enhanced,fonttitle=\bfseries,
colback=white,colframe=gray,colbacktitle=white,
coltitle=black,center title}}

\usepackage{tikz}
\usetikzlibrary{positioning, quotes}

\usepackage{amsmath,amsfonts,bm}

\def\eqref#1{equation~\ref{#1}}

\def\1{\bm{1}}

\DeclareMathAlphabet{\mathsfit}{\encodingdefault}{\sfdefault}{m}{sl}
\SetMathAlphabet{\mathsfit}{bold}{\encodingdefault}{\sfdefault}{bx}{n}

\def\Rbb{{\mathbb{R}}}

\def\Zbb{{\mathbb{Z}}}

\usepackage[colorlinks=True, linkcolor=lb, citecolor=teal]{hyperref}
\usepackage{url}

\def\ztitle{Contraction and Hourglass Persistence for \\ Learning on Graphs, Simplices, and Cells}
\title{\ztitle}

\author{Mattie Ji$^{1,*}$, Indradyumna Roy$^{2,3,}$\thanks{Equal Contributions.} ,  Vikas Garg$^{2,4}$ \\
$^1$University of Pennsylvania. $^2$Aalto University. $^3$Indian Institute of Technology Bombay. $^4$YaiYai Ltd.\\
\texttt{mji13@sas.upenn.edu}, \texttt{indradyumna.roy@aalto.fi}, \texttt{vgarg@csail.mit.edu}
}

\iclrfinalcopy 

\begin{document}

\maketitle

\begin{abstract}
Persistent homology (PH) encodes global information, such as cycles, and is thus increasingly integrated into graph neural networks (GNNs). PH methods in GNNs typically traverse an increasing sequence of subgraphs. In this work, we first expose limitations of this inclusion procedure. To remedy these shortcomings, we analyze contractions as a principled topological operation, in particular, for graph representation learning. We study the persistence of contraction sequences, which we call Contraction Homology (CH). We establish that forward PH and CH differ in expressivity. We then introduce Hourglass Persistence, a class of topological descriptors that interleave a sequence of inclusions and contractions to boost expressivity, learnability, and stability. We also study related families parametrized by two paradigms. We also discuss how our framework extends to simplicial and cellular networks. We further design efficient algorithms that are pluggable into end-to-end differentiable GNN pipelines, enabling consistent empirical improvements over many PH methods across standard real-world graph datasets. Code is available at \url{https://github.com/Aalto-QuML/Hourglass}.
\end{abstract}

\section{Introduction} 

Graph Neural Networks (GNNs) \citep{Hamilton2020} is a powerful paradigm for learning on structured data, yet their expressive power is fundamentally limited by the Weisfeiler–Lehman (WL) hierarchy \citep{gin, Morris2019}. In particular, message-passing GNNs struggle to capture higher‑order topological signals (ex. the presence, interaction, and disappearance of cycles) that often drive downstream performance in molecular learning, physical systems, and network science \citep{Garg2020, substructures2020, tahmasebi2021countingsubstructureshigherordergraph}. Persistent Homology (PH) \citep{Edelsbrunner2002} from Topological Data Analysis (TDA), offers a way to extract such signals by tracking the life of topological features in a filtration. Accordingly, PH‑based descriptors are increasingly used to augment GNNs and higher‑order neural architectures \citep{tdl_position}, and to boost expressivity \citep{ballester2024expressivitypersistenthomologygraph, Zhang_2025, Li2022}. 

However, many PH pipelines in graph learning rely on \emph{inclusion‑based filtrations} (``forward persistence'') \citep{pmlr-v97-rieck19a, pmlr-v119-hofer20b, immonen2023going, ballester2024expressivitypersistenthomologygraph, ying2024boostinggraphpoolingpersistent, ji2025graphpersistencegoesspectral, ji2025on}. This one-sided view can leave information on the table. For instance, a forward pass on graphs, \emph{cycles born can never die}; and while \emph{new connected components can appear} as vertices arrive. This is also undesirable from the perspective of \emph{metrizability},
as bottleneck distances can be ill-defined due to the presence of different number of permanent features.

To resolve this, we draw our insights from topological \textit{contractions}. Contractions have a wide variety of uses in the study of persistent homology. Discrete Morse theory \citep{FORMAN199890, Forman2002} uses the idea of simple contractions/collapses to simplify complexes. Contractions have since been used in many contexts to speed up PH calculations \citep{Dotko2013SIMPLIFICATIONOC, Botnan_2015, Boissonnat2016AnER, https://doi.org/10.4230/lipics.esa.2016.35, Dey2020, boissonnat2023strongcollapserandomsimplicial}. The PH literature also has ways to resolve the aforementioned metrizability problem, one of which being extended persistence \citep{10.1145/997817.997871, 10.5555/3115465.3115730,10363144, Turner2024} by considering both a forward and reverse pass on the input data.

In this work, we study the PH of sequences of contractions (which we refer to as \textit{contraction homology} (CH)) for graph representation learning and their higher-order counterparts. While PH can be thought of as a \textit{time-forwarding process}, CH can be thought of as a \textit{time-reversing process} that operates by \emph{contracting} substructures. We then study a particular form of CH which we call \textit{backward persistence} (BH), defined by contracting a sequence of \textit{intermediate complexes}. We compare the expressivity of forward PH and backward PH and show that neither subsumes the other.

Motivated by our discovery, we then propose \emph{Forward–Backward (FB) persistence}, which \emph{concatenates} an inclusion phase with a contraction phase. Intuitively, FB links \emph{how features appear} (forward) with \emph{how they subsequently vanish} (backward), assigning finite lifetimes to structures that would otherwise persist to infinity in forward PH. We show FB-persistence is \emph{strictly more expressive} than forward and backward considered in isolation.

Because of the modular nature of how we set up the intermediate complexes, we can interchange the sequence of intermediate complexes that are included and contracted at each step. Guided by this combinatorial insight, we introduce \emph{Hourglass persistence}, which \emph{interleaves} inclusions and contractions in \emph{arbitrary order}, subject only to the constraint that a piece must be included before it can be contracted. Hourglass persistence also gives a tradeoff between runtime and PH information of the whole complex (see Section~\ref{subsec::large_graph}), which may be beneficial for evaluating large graphs.

Finally, we propose a general framework of $(f,g)$-FB persistence with respect to two filtration functions $f$ and $g$, which extends both extended and FB-persistence. We then extend our framework to higher-order structures, design algorithms to compute $(f,g)$-FB persistence over graphs, and integrate them into GNNs for empirical validations.

Our work differs from many prior works in several ways. (1) The typical usage of collapses to speed up PH calculations is for a sequence of inclusions and aims to alter the output as minimally as possible. Instead, we study the PH of contractions that can significantly alter the topology and their interactions with inclusions. (2) Rather than contracting superlevel sets in BH, we make the substructures more granular by quotienting \textit{intermediate complexes}. In fact, we show in Proposition~\ref{prop::FB_vs_extended} that FB-persistence and extended persistence are different. (3) To the best of our knowledge, extended persistence used in GNNs \citep{10.5555/3454287.3455171, pmlr-v108-zhao20d, pmlr-v108-carriere20a, yan2022neural} does not compute their descriptors using the contraction perspective for experiments, but we will demonstrate the utility of contractions in the experiments.

We summarize our paper in Figure~\ref{fig:overview} and leave proofs and additional details to the Appendix.

\begin{figure}[t]
    \centering
    \input{table}
    \vspace{-10pt}
    \caption{Overview of the paper. Each row summarizes a core item and the section where it appears.}
    \label{fig:overview}
    \vspace{-15pt}
\end{figure}

\section{Background and Preliminaries}

Let $G = (V, E)$ be a finite undirected graph, possibly with self-loops and multi-edges. A common theme of enhancing graph neural networks (GNNs) is to incorporate persistent topological descriptors derived from a given filtration of $G$. For our purpose, a filtration is as follows.
\begin{definition}
Let $f: V \cup E \to \Rbb$ be a function. We say $f$ is a \textbf{filtration function} on $G$ if for every edge $e = (v, w)$, $f(v) \leq f(e)$ and $f(w) \leq f(e)$. The ordered set $\{f^{-1}((-\infty, t])\}_{t \in \Rbb}$ has only finitely many elements, which gives a list of sequential subgraphs:
\[G_{-1} = \emptyset \subset G_{0} \subset G_1 \subset ... \subset G_n = G.\]
We say $f$ is a \textbf{vertex-based filtration} if for every edge $e = (v, w)$, $f(e) = \max(v, w)$. We say that $f$ is an \textbf{edge-based filtration} if $f(v) = 0$ for all $v \in V$ and $f(e) > 0$ for all $e \in E$.
\end{definition}
For consistency, the function $f$ should not change after relabeling the vertices and edges, rather $f$ should only depend on the intrinsic features the graph $G$ comes with. This property is called \textbf{permutation equivariance} \citep{ballester2024expressivitypersistenthomologygraph}. Common examples includes degree filtration.

\begin{definition}\label{def::color_filtration}
Suppose $G$ is a \textbf{colored graph} (i.e., vertices have labels) equipped with  an additional structure of a coloring function $c: V \to C$, where $C$ is a collection of colors. A \textbf{vertex-color filtration} is a vertex-based filtration $f: V \cup E \to \Rbb$ such that $f(v) = f(w)$ for all $v, w$ with $c(v) = c(w)$. Intuitively, $f$ preserves the vertex coloring.
\end{definition}
A special case of Example~\ref{def::color_filtration} includes degree filtration where $c$ is the degree function and $f = c$. A common scenario where colored graphs arise is from molecular graphs \citep{pmlr-v162-hoogeboom22a, xu2022geodiff, GLD2023, song2024unified}, where the colors are the type of atoms (e.g., oxygen, carbon). 

A common descriptor to extract data from a given filtration is \textbf{persistent homology}.

\begin{definition}\label{def::persistent_module}
   Let $X_{\bullet} = (X_{0} \xrightarrow{f_1} X_{1} \xrightarrow{f_2} ... \xrightarrow{f_{n-2}} X_{n-1} \xrightarrow{f_{n}} X_n)$ be a sequence of topological spaces (ex. \textbf{graphs}) $X_i$ with maps $f_i: X_i \to X_{i+1}$. The \textbf{$k$-th persistent homology} of $X_{\bullet}$ is the result of applying $H_k(-)$ (in $\Zbb/2$-coefficients) to $X_{\bullet}$, that is, it is the sequence of linear maps:
   \[H_k(X_{\bullet}) = H_k(X_{0}) \xrightarrow{(f_{0})_*} H_k(X_{1}) \xrightarrow{(f_{1})_*}\ ...\ H_k(X_{n-1}) \xrightarrow{(f_{n-1})_*} H_k(X_n).\]
An element $v \in H_k(X_{i})$ is said to \textbf{born} at $i$ if $v \notin \operatorname{im}((f_{i-1})_*)$. We say $v$ \textbf{dies} at $d \geq i$ if $d$ is the first element such that $(f_{d-1})_* \circ ... \circ (f_i)_*(v) = 0$. The \textbf{persistence pair} associated to $v$ is the pair $(b, d)$. If no such $d$ exists, we mark the pair as $(b, \infty)$. If the incoming complex is of the form $X_{\bullet} = (X_{-1} = \emptyset \xrightarrow{f_{-1}} X_{0}  \to ... \to X_{n})$ (e.g. from a filtration), by $H_k(X_{\bullet})$ we mean $H_k(-)$ of the sequence with the $X_{-1}$-term truncated.

Suppose the vector spaces are all finite dimensional, the $k$-th \textbf{persistence diagram} (i.e., barcodes in \cite{Ghrist2007BarcodesTP}) of $X_{\bullet}$ is the collection of persistent pairs for a specific choice of basis of the $H_k(X_i)$'s obtained from an interval decomposition of $H_k(X_{\bullet})$ (see Theorem 2.7-8 of \cite{Chazal2016Persistence} or Theorem 4.7 of \cite{Math840_Notes_22} for more details).
\end{definition}

Appendix~\ref{subsec::graph_PH} shows its equivalence to the usual definition of graph PH. Here $H_0(-)$ are connected components, $H_1(-)$ are independent cycles, $H_2(-)$ are voids, and $H_k(-)$ are $k$-dimensional voids.

\begin{remark}\label{rmk::how_to_compute}
Explicitly, the $k$-th persistence diagram of $X_{\bullet}$ can be computed by first picking a non-zero vector $v$ of $H_k(X_{0})$, and consider the sequence of linear subspace generated by the iterated image of $v$ in $H_k(X_{\bullet})$ until it becomes $0$. Then we remove this sequence of linear subspaces from $H_k(X_{\bullet})$. If there is still a non-zero vector $w$ in $H_k(X_{0})$, we pick $w$ and repeat the process (if the linear map sends $w$ to the complement, we consider it becomes $0$). Otherwise, we choose a non-zero vector from what is left in $H_k(X_{1})$ (if it exists) and look at its iterated images ahead, and so on.
\end{remark}

\section{Persistence Goes Forward and Backward}\label{section::FB}

For now, we will focus on the case of graphs, but many constructions here can be readily generalized to the setting of simplicial and cell complexes (see Section~\ref{sec::generalization}). Classically, there are numerous applications in TDA and GNNs that uses an inclusion-based persistent homology  \citep{Edelsbrunner_Harer_2008, edelsbrunner2012persistent, immonen2023going, ballester2024expressivitypersistenthomologygraph}. Concretely, an inclusion-based PH may be viewed as an example of Definition~\ref{def::persistent_module} as follows.

\begin{example}[Inclusion-based/Forward PH]
    Let $f: G \to \Rbb$ be a filtration function, and consider applying $H_0(-)$ and $H_1(-)$ to $G_{\bullet} : G_{-1} = \emptyset \subset G_{0} \subset ... \subset G_n = G$, where each map is the inclusion map. Then the \textbf{persistent diagram} in Definition~\ref{def::persistent_module} recovers all birth/death pairs $(b, d)$ such that $b < d$ (i.e., it omits trivial births). See Appendix~\ref{subsec::graph_PH} for a formal proof. We can recover the trivial births by counting the number of unmarked simplices at each time.
\end{example}

The interest that initiated this work was: what if we reverse the inclusion steps with \textbf{contractions}?
\begin{definition}
    Let $G$ be a graph, the \textbf{contraction homology} (CH) of $G$ is the persistence diagrams associated to any sequence of subgraph contractions of $G$ down to a single point.
\end{definition}

Figure~\ref{fig::contraction_example} illustrates an example of contraction homology. For now, we would like to focus on a specific class of CH that contracts the following sub-graphs known as \textit{intermediate complexes}.

\begin{definition}
Let $H \subset G$ be a subset. The closure of $H$ is the union of $H$ and all vertices with incident edges contained in $H$. Let $\emptyset = G_{-1} \subset G_0 \subset ... \subset G_n = G$ be a filtration of $G$ induced by $f$, we define the \textbf{intermediate complexes} $\operatorname{IC}_i(G, f)$ as the closure of $G_{i} - G_{i-1}$ for $i = 0, ..., n$. We omit the symbol $f$ when the context is clear.
\end{definition}

\begin{definition}[Backward PH]\label{def::backward_PH}
Let $G_{\bullet} : G_{-1} = \emptyset \subset G_{0} \subset ... \subset G_n = G$ be a filtration of $G$, we consider a \textbf{sequence of contractions} with respect to $G_{\bullet}$ as $(G_{\bullet})^{v}$ below:
\[G \to G_{n+1} := G/\operatorname{IC}_n(G) \to G_{n+2} := G/(*_{n+1} \cup \operatorname{IC}_{n-1}(G)) \to G/(*_{n} \cup \operatorname{IC}_{n-2}(G)) \to ... \to *,\]
where $*_{n+1}$ is the point representing the total contracted subcomplex in the previous step, and so on. The intermediate maps are the natural quotient maps. We define the \textbf{$i$-th backward persistent diagram} as the persistent diagram associated with $H_i((G_{\bullet})^v)$.
\end{definition}

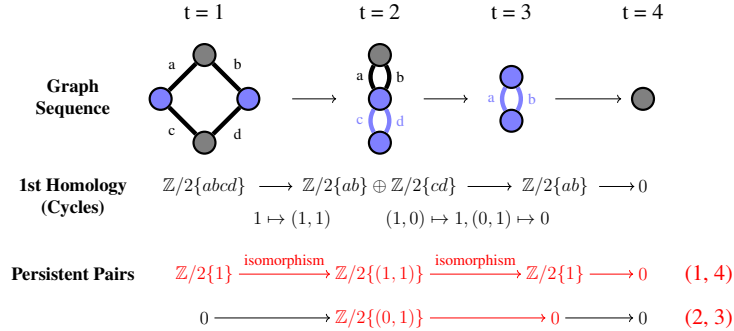
\begin{figure}[htb]
  \centering
  \begin{adjustbox}{width=0.7\textwidth}
  \begin{tikzpicture}
    \node at (-3, 1.25) {\large \textbf{Graph}};
    \node at (-3, 0.75) {\large \textbf{Sequence}};

    \node[draw, circle, fill=black!50, line width=0.5mm, minimum size=0.5cm] (A) at (0,2) {};
    \node[draw, circle, fill=blue!50, line width=0.5mm, minimum size=0.5cm] (B) at (1,1) {};
    \node[draw, circle, fill=blue!50, line width=0.5mm, minimum size=0.5cm] (C) at (-1,1) {};
    \node[draw, circle, fill=black!50, line width=0.5mm, minimum size=0.5cm] (D) at (0,0) {};

    \draw[line width=1.0mm] (A) edge["b"] (B);
    \draw[line width=1.0mm] (C) edge["a"] (A);
    \draw[line width=1.0mm] (D) edge["c"] (C);
    \draw[line width=1.0mm] (B) edge["d"] (D);

    \node at (-3, -1) {\large \textbf{1st Homology}};
    \node at (-3, -1.5) {\large \textbf{(Cycles)}};
    
    \node at (0, -1) {\large $\Zbb/2 \{abcd\}$};

    \draw[->] (2.0,1) -- (3.0,1);

    \node[draw, circle, fill=black!50, line width=0.5mm, minimum size=0.5cm] (A) at (4,2) {};
    \node[draw, circle, fill=blue!50, line width=0.5mm, minimum size=0.5cm] (B) at (4,1) {};
    \node[draw, circle, fill=blue!50, line width=0.5mm, minimum size=0.5cm] (D) at (4,0) {};

    \draw[line width=1.0mm] (B) edge["a", bend left] (A);
    \draw[line width=1.0mm] (A) edge["b", bend left] (B);
    \draw[line width=1.0mm] (D) edge["c", bend left, blue!50] (B);
    \draw[line width=1.0mm] (B) edge["d", bend left, blue!50] (D);

    \node at (4, -1) {\large $\Zbb/2 \{ab\}\oplus \Zbb/2 \{cd\}$};

    \draw[->] (5.0,1) -- (6.0,1);

    \node[draw, circle, fill=blue!50, line width=0.5mm, minimum size=0.5cm] (A) at (7,1.5) {};
    \node[draw, circle, fill=blue!50, line width=0.5mm, minimum size=0.5cm] (B) at (7,0.5) {};
    \draw[line width=1.0mm] (B) edge["a", bend left, blue!50] (A);
    \draw[line width=1.0mm] (A) edge["b", bend left, blue!50] (B);
    \node at (8, -1) {\large $\Zbb/2 \{ab\}$};

    \node at (2, -1.75) {\large $1 \mapsto (1, 1)$};
    \node at (6, -1.75) {\large $(1,0) \mapsto 1, (0, 1) \mapsto 0$};

    \draw[->] (1.25,-1) -- (2.0,-1);
    \draw[->] (6.0,-1) -- (7.0,-1);

    \node[draw, circle, fill=black!50, line width=0.5mm, minimum size=0.5cm] (A) at (10,1) {};
    \draw[->] (8.0,1) -- (9.5,1);

     \draw[->] (9.0,-1) -- (9.8,-1);
     \node at (10, -1) {\large $0$};

    \node at (-3, -3) {\large \textbf{Persistent Pairs}};
    \node[color=red] (X) at (0, -3) {\large $\Zbb/2\{1\}$};
    \node[color=red] (Y) at (4, -3) {\large $\Zbb/2\{(1,1)\}$};
    \node[color=red] (Z) at (8, -3) {\large $\Zbb/2\{1\}$};
    \node[color=red] (W) at (10, -3) {\large $0$};
    \draw[->] (X) edge["isomorphism",color=red] (Y);
    \draw[->] (Y) edge["isomorphism",color=red] (Z);
    \draw[->] (Z) edge["", color=red] (W);

    \node (X) at (0, -4) {\large $0$};
    \node[color=red] (Y) at (4, -4) {\large $\Zbb/2\{(0,1)\}$};
    \node[color=red] (Z) at (8, -4) {\large $0$};
    \node (W) at (10, -4) {\large $0$};
    \draw[->] (X) edge[""] (Y);
     \draw[->] (Y) edge["", color=red] (Z);
    \draw[->] (Z) edge[""] (W);

    \node at (0, 3) {\Large t = 1};
    \node at (4, 3) {\Large t = 2};
    \node at (7, 3) {\Large t = 3};
    \node at (10, 3) {\Large t = 4};

    \node[color=red] at (11.5, -3) {\Large (1, 4)};
    \node[color=red] at (11.5, -4) {\Large (2, 3)};
\end{tikzpicture}
  \end{adjustbox}
  \caption{Contraction Homology of a graph $G$. The blue simplices indicate what is contracted.}\label{fig::contraction_example}
\end{figure}

We observe  that the backward PH introduces new information that forward PH may ignore. A reason for this distinction is that a cycle that is born in forward PH can never die, but backward PH can kill cycles. Conversely, backward PH can never make a new connected component apart from the initial step, but forward PH can spawn new components later.
\begin{proposition}\label{prop::forward_vs_backward}
    There exists graphs $G, H$ with permutation equivariant filtrations such that the forward PH of their filtrations cannot tell apart $G, H$ but backward PH can. Similarly, there are graphs that backward PH cannot tell apart but forward PH can. Examples can be found in Figure~\ref{fig::backward_vs_FB}.
\end{proposition}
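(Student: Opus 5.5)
The proof is by explicit counterexamples, one pair of graphs for each direction, both using the same permutation-equivariant filtration. The simplest choice is a vertex-color filtration (Definition~\ref{def::color_filtration}), or the degree filtration. For each pair we compute forward PH via the standard union-find description (Appendix~\ref{subsec::graph_PH}). We compute backward PH directly from Definition~\ref{def::backward_PH}, tracking $H_0$ and $H_1$ of each quotient $G/(*\cup \operatorname{IC}_i)$ as in Figure~\ref{fig::contraction_example}. The bookkeeping rests on one observation. Contracting a connected subgraph $K$ with $b_1(K)$ independent cycles kills exactly the classes of $H_1$ supported in $K$ and leaves the rest. Contracting a disconnected $K$ into one point merges its components, which creates new cycles when $K$ has several pieces in a single component of the ambient graph, or merges components of the ambient graph.

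\emph{Direction 1 (backward separates, forward does not).} Forward PH records when a cycle is born but never when it dies. So we take two graphs that have the same multiset of vertex colors and the same forward $0$- and $1$-diagrams, but whose cycles run through differently colored intermediate complexes. One candidate is a graph whose cycles all close at the final filtration value. Concretely, take two cycles on the same color multiset with the colors arranged differently: a $6$-cycle colored $a,a,b,b,c,c$ consecutively versus $a,b,c,a,b,c$, with $f(a)<f(b)<f(c)$. Forward PH depends only on the sublevel subgraphs. Here these have the same number of components at each level in both graphs, namely $3$ versus $3$ at level $a$, and the same single cycle born at level $c$. We will verify that equality carefully. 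The backward pass first contracts $\operatorname{IC}_c$, the closure of the $c$-stuff, and this differs between the two graphs. In the first graph it is a connected path, so contracting it preserves the cycle. In the second it has three pieces, so identifying them creates extra cycles, which then die at later steps. Hence the backward $1$-diagrams differ. If this particular pair fails, we adjust the coloring until the numbers of pieces of the intermediate complexes differ while the sublevel component counts match.

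\emph{Direction 2 (forward separates, backward does not).} Backward PH begins with all of $G$, so every component appears at time $0$ and never again. Forward PH instead sees when components are born. So we choose two graphs whose components, or whose attachment pattern, appear at different filtration times, arranged so that after the first contraction the quotients agree. Natural candidates are two disjoint unions of colored paths or cycles with permuted colors. Their backward sequences quotient down to isomorphic pictures: the contracted pieces are identified to the same point and therefore collapse identically. Their forward $0$-diagrams differ, however, because a vertex of a given color starts a new component in one graph and joins an existing one in the other.

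The main obstacle is the second direction. Contracting whole intermediate complexes erases a lot of information, but the two backward sequences must be shown equal as persistence modules, not merely equal in Betti numbers at each step. We will therefore aim for a pair in which every quotient $G/(*\cup\operatorname{IC}_i)$ is isomorphic between the two graphs, compatibly with the quotient maps, so that the modules coincide automatically. The final check is to exhibit these isomorphisms step by step, and the examples will be displayed in Figure~\ref{fig::backward_vs_FB}.
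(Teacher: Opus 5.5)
\textbf{Gap: neither direction supplies a valid witness pair.} Exhibiting witness pairs is the right strategy, but here the witnesses \emph{are} the proof.

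In Direction~1 your $6$-cycle pair is not a counterexample, because forward PH already separates it. Label the vertices $v_1,\dots,v_6$ cyclically.
\begin{itemize}
\item For the coloring $a,a,b,b,c,c$ the two $a$-vertices are adjacent. The sublevel graphs at $a$ and $b$ are the paths $v_1v_2$ and $v_1v_2v_3v_4$, so there is one component throughout, and the only nontrivial $0$-pair is $(a,\infty)$.
\item For $a,b,c,a,b,c$ the $a$-vertices $v_1,v_4$ are antipodal. The sublevel graphs at $a$ and $b$ each have two components: first $\{v_1\},\{v_4\}$, then the edges $v_1v_2$ and $v_4v_5$. These merge only at level $c$, which produces the extra pair $(a,c)$.
\end{itemize}
So your count ``3 versus 3 at level $a$'' is really $1$ versus $2$. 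Likewise, $\operatorname{IC}_c$ of the second graph has two pieces, $v_2v_3v_4$ and $v_5v_6v_1$, not three. Backward PH does differ on this pair, but so does forward PH, so the pair witnesses nothing. ``Adjust the coloring until it works'' is not an argument.

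The paper's pair does work: the path R--B--B--R versus the star with blue centre and leaves B, R, R, with blue below red. Both forward filtrations have the form $\emptyset\subset P_2\subset T$ with $T$ a tree on four vertices. In each, one component is born at the blue level, every other birth is trivial, and there are no cycles, so the forward diagrams agree. In the path, however, $\operatorname{IC}_1$ is two disjoint edges joined in the graph by the blue edge, so contracting them to one point creates a cycle. In the star, $\operatorname{IC}_1$ is connected and no cycle appears.

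In Direction~2 you give no example at all, only candidates. Your plan to make all quotients isomorphic compatibly with the maps is also stronger than necessary. Since the first term of the backward sequence is the graph itself, it forces the underlying graphs to be isomorphic, and for the degree filtration that already forces equal forward PH.

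The missing idea is to make the backward sequence degenerate. The paper uses the degree filtration on two graphs in which every edge touches a vertex of maximal degree $3$. Then every edge appears at the last step, $\operatorname{IC}_n(G)=G$, and the backward sequence is the single collapse $G\to *$. Its diagrams are determined by $b_0$ and $b_1$, which agree for the two graphs. Forward PH, by contrast, records three births at level $1$ for one graph and two for the other.

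Even a tiny pair suffices: the paths $a$--$c$--$b$ and $a$--$b$--$c$ with $f(a)<f(b)<f(c)$. Both have backward diagrams consisting of a single $(0,\infty)$ in $H_0$ and nothing in $H_1$, while only the first has the forward pair $(b,c)$.
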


\begin{figure}[htb]
  \centering
  \begin{adjustbox}{width=0.7\textwidth}
    \input{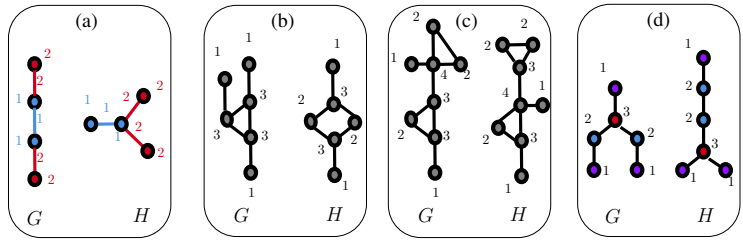}
  \end{adjustbox}
  \caption{Example graph pairs where (a) color filtration with same \textbf{forward-based PH} but different \textbf{backward-based PH}, (b) degree filtration with same \textbf{backward-based PH} but different \textbf{forward-based PH}, (c) degree filtration with same \textbf{forward and backward PH} but different \textbf{FB-persistence}, (d) degree filtration with same \textbf{FB-persistence} but different \textbf{hourlgass persistence}.}\label{fig::backward_vs_FB}
\end{figure}

Another motivation for us to introduce backward-based PH is from the viewpoint of \textbf{metrizability} and \textbf{stability}. Classically, the \textbf{bottleneck distance} \cite{Cohen-Steiner_Edelsbrunner_Harer_2006} between 2 persistence diagrams $\operatorname{PD}_1, \operatorname{PD}_2$ of the same size is the infimum of the value $\max_{p \in \operatorname{PD}_1} ||p - \pi(p)||_{\infty}$ where $\pi$ ranges over all bijections $\pi: \operatorname{PD}_1 \to \operatorname{PD}_2$. The bottleneck distance is finite for persistence diagrams coming from the same graph, but for diagrams, coming from two different graphs, this distance may very well be infinite. The reason why is because the number of tuples of the form $(-, \infty)$ for both graphs may be different, which adds an $\infty$ to the distance function.

One might argue that there can be workarounds by setting the time to die at a finite time $N$ after both filtrations ended, or make it die at $-1$. These modifications however can be quite sensitively altered by, for example, having many features die around $N$ or $-1$. One solution would be to extend the (inclusion-based) PDs by killing off the permanent features topologically via a sequence of contractions. Formally, this motivates our definition of \textbf{Forward-Backward (FB) persistence}.  

\begin{definition}[Forward-Backward Persistence]\label{def::FB_PH}
Let $G_{\bullet} = (\emptyset = G_{-1} \subset G_0 \subset ... \subset G_n = G)$ be a filtration of $G$. The \textbf{$i$-th FB-persistence} is the diagram $H_i(G_{\bullet} + (G_{\bullet})^v)$, where $G_{\bullet}^v$ denotes the sequence of contractions in Definition~\ref{def::backward_PH} and $+$ denotes the concatenation of the two sequences.
\end{definition}

We now establish the expressivity benefits due to FB-persistence. 

\begin{resbox}
\begin{theorem}\label{thm::fb_vs_f_and_b}
    FB-persistence is \textbf{strictly more expressive} than forward and backward persistence combined. More precisely, for any graph $G$ with filtration $f$, the FB-persistence with respect to $f$ can recover the correspondent forward and backward persistence. However, there exists graphs $G, H$ with a permutation equivariant filtration $f$ such that their forward and backward persistence are the same, but their FB-persistence differ (see Figure~\ref{fig::backward_vs_FB}(c)).
\end{theorem}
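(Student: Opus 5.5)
The proof has two parts: a recovery statement and a separating example.

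\textbf{Recovering forward persistence.} The forward part of the FB sequence is $G_\bullet = (G_0 \subset \dots \subset G_n)$, an initial segment of $G_\bullet + (G_\bullet)^v$. Persistence modules restrict to initial segments. By the interval decomposition, each interval summand $[b,d]$ of $H_k(G_\bullet + (G_\bullet)^v)$ restricts to $[b,\min(d,n)]$ on the index set $\{0,\dots,n\}$, or to zero if $b>n$. So the forward diagram is read off from the FB diagram as follows: discard bars with $b>n$, and replace $d$ by $\infty$ whenever $d>n$. For this to be well defined we need the indexing convention that the forward segment occupies times $0,\dots,n$ and the backward segment times $n+1,\dots$.

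\textbf{Recovering backward persistence.} The backward sequence $(G_\bullet)^v$ is the terminal segment starting at $G_n = G$. Restricting to a terminal segment again sends each interval $[b,d]$ to its intersection with $\{n,\dots\}$. The backward diagram therefore consists of the pairs $(\max(b,n), d)$ over all bars with $d \ge n$, or with $d=\infty$. Bars with $b<n$ that survive to $G$ become the bars born at time $n$ in the backward diagram.

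I will make this rigorous by a short lemma. Let $M$ be a persistence module over a totally ordered finite index set and $J$ an interval of indices. Then $M|_J \cong \bigoplus (I_{[b,d]})|_J$, and $I_{[b,d]}|_J = I_{[b,d]\cap J}$. The proof is that restriction commutes with direct sums and sends an interval module to an interval module. By uniqueness of interval decompositions (the theorem cited in Definition~\ref{def::persistent_module}), the diagram of $M|_J$ is determined by that of $M$.

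\textbf{Separating example.} For strictness I take the pair in Figure~\ref{fig::backward_vs_FB}(c) with the degree filtration and compute explicitly. First, list the filtration steps and the intermediate complexes $\operatorname{IC}_i$. Next, compute the forward $H_0$ and $H_1$ diagrams of both graphs and check that they agree. Then compute the backward diagrams by tracking how each quotient $G/(*\cup \operatorname{IC}_{j})$ kills cycles and merges components, using Remark~\ref{rmk::how_to_compute}, and check that these agree as well. Finally, compute the FB diagrams.

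The mechanism I expect is the following. In $H_1$, both graphs have the same multiset of cycle birth times (forward) and the same multiset of death times (backward). However, the pairing of births with deaths differs: in one graph the early-born cycle dies early, and in the other it dies late. Only the glued module sees this pairing.

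\textbf{Main obstacle.} The main obstacle is this explicit verification. The backward step requires care with how the quotient acts on $H_1$, since contracting a subgraph containing part of a cycle can identify cycles or create multi-edge cycles, as in Figure~\ref{fig::contraction_example}. The FB pairing also depends on choosing compatible bases via the interval decomposition. I would do the bookkeeping edge by edge: write $H_1$ of each quotient as the cycle space, with the contracted edges deleted and their vertices identified, and track the image of a fixed cycle basis of $G$.
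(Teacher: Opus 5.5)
Your proposal is correct and follows the paper's proof. The read-off rules match the paper's: forward bars are obtained by sending deaths after $n$ to $\infty$, and backward bars are the bars still alive at $n$, now born at the start of the contraction phase. Your restriction-of-interval-decompositions lemma is a cleaner justification than the paper's elder-rule argument. For strictness, the paper uses the same Figure~\ref{fig::backward_vs_FB}(c) pair and exactly the mechanism you predict. Contracting $\operatorname{IC}_2$ kills the cycle born at degree $3$ in $G$ but the one born at degree $4$ in $H$, and the other cycle dies when $\operatorname{IC}_1$ is contracted, so the forward and backward diagrams agree while the FB pairings differ.
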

\end{resbox}

A high level reason why Theorem~\ref{thm::fb_vs_f_and_b} holds is that deciding how we concatenate the values from the forward persistent diagrams with the values from the backward persistent diagrams is rather subtle.

Motivated by the perspective of \textbf{color-based filtrations} \citep{ballester2024expressivitypersistenthomologygraph, immonen2023going, ji2025graphpersistencegoesspectral}, we observe that there is no canonical reason to contract in the order of $\operatorname{IC}_n(G), \operatorname{IC}_{n-1}(G), ...$ in Definition~\ref{def::backward_PH} and Definition~\ref{def::FB_PH}. Since the intermediate complexes satisfy (i) $\bigcup_{i=0}^n \operatorname{IC}_i(G) = G$ and (ii) $\operatorname{IC}_i(G)$ and $\operatorname{IC}_j(G)$ can only intersect at vertices, any sequence of contracting the subgraphs in the list $\operatorname{IC}_0(G), \operatorname{IC}_1(G), ..., \operatorname{IC}_n(G)$ would have terminated at the single point set $*$. Likewise, we can also see the filtration step as a special case of spawning the pieces $\operatorname{IC}_0(X), \operatorname{IC}_{i}(X), ...$ in ascending order. There is no reason why we should have included them in this order. This motivates the following construction.

\begin{definition}[$(\sigma,\tau)$-FB persistence]\label{def::permutation_FB}
 Let $\sigma, \tau$ be two permutations of the list $[n] = \{0, ..., n\}$. The $i$-th $(\sigma, \tau)-$\textbf{FB persistence} is the persistence diagram associated with the persistence module obtained by applying $H_i(-)$ to the sequence 
\[\emptyset = Y_{-1} \subset ... \subset Y_{n} = G = Z_{0} \to Z_{1} \to ... \to Z_{n} = *,\]
where for $0 \leq i \leq n$, $Y_{i} = \bigcup_{j \leq i} \operatorname{IC}_{\sigma(j)}(G)$, and for $1 \leq j \leq n$, $Z_{j} = Z_{j-1}/(*_{j-1} \cup \operatorname{IC}_{\tau(j)}(G))$ where $*_{j-1}$ is previously defined for $j-1 \geq 1$, and $*_{0}$ is any point in $\operatorname{IC}_{\tau(1)}(G)$.
\end{definition}

In other words, we apply Definition~\ref{def::persistent_module} to the sequence where we filtrate $G$ by spawning the intermediate complexes in the order $\sigma$, and then contracting $G$ in the order $\tau$. In the case where $\operatorname{id}$ is the identity and $\operatorname{re}$ is the reverse list bijection. FB-persistence is $(\operatorname{id}, \operatorname{re})$-FB-persistence.

A core motivation for Definition~\ref{def::permutation_FB} comes from color-based filtrations (Example~\ref{def::color_filtration}). Indeed, recall that degree filtration is a special case of Example~\ref{def::color_filtration}. The perspective of coloring filtration is that we can spawn the vertices in any permutation of the set of possible degree values here, which may lead to more information and reduce bias for the model. Furthermore, to reduce bias, one could also argue why we should wait until the entire graph has been filtrated to start the contraction process.

Indeed, the whole process would still terminate to a point as long as we ensure the intermediate complex being contracted has appeared earlier in time, and this yields further flexibility in the process of switching back and forth between the forward steps and the backward steps (analogous to the imagery of an \textbf{hourglass}). This motivates us to the definition of \textbf{hourglass persistence}, which may be regarded as a ``color-based version" of FB-persistence that also has a well-defined metric for persistence diagrams on different graphs. 

\begin{definition}\label{def::hourglass}
Let $f: X \to \Rbb$ be a filtration function with associated intermediate complexes $\operatorname{IC}(X_i)$. An \textbf{hourglass persistence diagram} is the persistence diagram of any sequence of inclusions and contractions, provided that $\operatorname{IC}(X_i)$ is included in the sequence before it is contracted.
\end{definition}

Figure~\ref{fig::hourglass} gives an illustration of an example of a sequence of maps that occurs in hourglass persistence. To demonstrate its expressivity, we have:
\begin{proposition}\label{prop::perm_vs_FB}
Hourglass persistence is more expressive than FB-persistence (Figure~\ref{fig::contraction_example}(c)).
\end{proposition}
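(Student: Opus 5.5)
The argument has two parts. Hourglass persistence subsumes FB-persistence, and some pair of graphs is separated by hourglass persistence but not by FB-persistence.

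\emph{Subsumption.} FB-persistence is the $(\operatorname{id},\operatorname{re})$-FB persistence of Definition~\ref{def::permutation_FB}. That sequence first includes $\operatorname{IC}_0(G),\dots,\operatorname{IC}_n(G)$ in order, reaching $G$. It then contracts $\operatorname{IC}_n(G),\dots,\operatorname{IC}_0(G)$, each together with the previously collapsed point. Every intermediate complex is included before it is contracted, so this sequence satisfies the constraint of Definition~\ref{def::hourglass}. Hence the FB diagram is itself one of the hourglass persistence diagrams.

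Hourglass persistence is taken as the whole family of diagrams over admissible interleavings. To make this a well-defined, permutation-equivariant descriptor, I would index the family canonically. For a vertex-color or degree filtration, the interleavings are specified by the order of the color/filtration values, so relabelling the graph does not change the family. Consequently, if two graphs have equal hourglass descriptors, in particular their $(\operatorname{id},\operatorname{re})$ members agree, and so their FB-persistence agrees. Thus anything FB-persistence distinguishes, hourglass persistence distinguishes too.

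\emph{Strictness.} Here I would use the pair in Figure~\ref{fig::backward_vs_FB}(d). The statement's reference to Figure~\ref{fig::contraction_example}(c) appears to be a typo for that panel. The example has two graphs $G,H$ with degree filtration, and the computation proceeds as follows.
\begin{enumerate}[label=(\roman*)]
\item Compute the intermediate complexes of both graphs for each degree value $d$: the edges whose larger endpoint degree is $d$, together with their closure.
\item Compute the FB diagrams in $H_0$ and $H_1$ for both graphs, using Remark~\ref{rmk::how_to_compute}. The forward part is the standard union-find computation. In the backward part, contracting $*\cup\operatorname{IC}_j$ kills an $H_0$ class exactly when it merges two components. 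It kills an $H_1$ class exactly when the cycle's image lies in the contracted subgraph, and it can create $H_1$ classes when the contracted subgraph contains a path joining two points already identified. One then checks that the two diagrams coincide.
\item Exhibit one admissible interleaving that gives different diagrams. The natural choice is to contract $\operatorname{IC}_i$ immediately after including it, or to use a non-identity order $\sigma$ of inclusion. The idea is that contracting a low-degree piece early makes cycles through that piece appear in a quotient earlier or later, at times that differ between $G$ and $H$. A single differing $H_1$ pair suffices.
\end{enumerate}

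\emph{Main obstacle.} The hard part is step (iii): finding an interleaving under which the diagrams differ and then verifying it. Backward steps can create loops when a contracted subgraph connects vertices already identified, so the $H_1$ bookkeeping is delicate. I would first try the interleaving "include $\operatorname{IC}_i$, then immediately contract it". This turns each stage into a wedge of the loops formed by the current piece and its attachment to the basepoint. Such stages are easy to count, because each stage's $H_1$ is determined by the first Betti number of the piece together with how many of its vertices lie in the already-collapsed part. If this interleaving does not separate $G$ and $H$, I would try reversing $\sigma$ instead.
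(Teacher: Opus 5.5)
Your subsumption half is correct and matches the paper: FB-persistence is the $(\operatorname{id},\operatorname{re})$ instance of an admissible hourglass sequence. You are also right that the intended witness is Figure~\ref{fig::backward_vs_FB}(d). The genuine gap is the strict half, which is the real content of the proposition. You never name the two graphs, never verify that their FB diagrams agree, and never exhibit a separating sequence; you only describe a search. Moreover, the two concrete candidates you commit to both fail on the intended pair.

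In the paper the pair, under degree filtration, is $G$ with edges $(0,1),(1,2),(2,3),(1,4),(4,5)$ and $H$ with edges $(0,1),(1,2),(1,3),(3,4),(4,5)$. Both are trees with degree sequence $(3,2,2,1,1,1)$. In both, $\operatorname{IC}_0$ is the three leaves and $\operatorname{IC}_2$ is the closed star of vertex $1$. The difference is that $\operatorname{IC}_1(G)$ is two disjoint edges $\{2,3\},\{4,5\}$, while $\operatorname{IC}_1(H)$ is the path $3$--$4$--$5$. Under ``include $\operatorname{IC}_i$, then contract it'' in the order $0,1,2$, the $\operatorname{IC}_1$ stage is a tree hanging off $*$ in both graphs. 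The $\operatorname{IC}_2$ stage is vertex $1$ joined to $*$ by three parallel edges in both. So the diagrams coincide. Reversing the inclusion order also gives identical diagrams, whether with immediate contraction or followed by contracting in the order $0,1,2$ or $2,1,0$.

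The missing idea is that an hourglass sequence may include an intermediate complex \emph{first}, so that its own connectivity is read off in $H_0$. FB-persistence hides the fact that $\operatorname{IC}_1(G)$ is disconnected and $\operatorname{IC}_1(H)$ is not. In the forward pass both hang off leaves born at time $0$ and give only trivial pairs. In the backward pass both are trees through $*$ by the time they are contracted. The paper simply starts the sequence with $\operatorname{IC}_1$: then $G$ has two components born at time $0$ and $H$ only one, which separates them with no $H_1$ bookkeeping.

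Your $H_1$ heuristic can also be made to work, but through the contraction order rather than an interleaving. After the full forward pass, contract $\operatorname{IC}_0$, then $\operatorname{IC}_2$, then $\operatorname{IC}_1$. Identifying the three leaves creates two cycles in both graphs. In $H$ the leaves $0$ and $2$ are both adjacent to $1$, so $*\cup\operatorname{IC}_2(H)$ contains a double edge between $*$ and $1$, and contracting it kills one cycle. In $G$, $*\cup\operatorname{IC}_2(G)$ is a tree, so both cycles survive to the last step.

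Your creation rule in (ii) is also backwards. Contracting a subcomplex $A\ni *$ of $X$ creates $H_1$ classes exactly from $\ker(\tilde H_0(A)\to\tilde H_0(X))$, i.e.\ when $A$ has several components already joined in $X$, as with the three leaves. A path inside $A$ joining two already-identified points is a cycle in $A$, and the contraction kills it.
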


\begin{figure}[htb]
  \centering
  \begin{adjustbox}{width=0.65\textwidth}
    \input{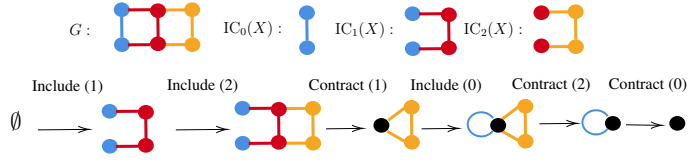}
  \end{adjustbox}
  \caption{Example of a sequence of maps arising in the hourglass persistence for a filtration of $G$.}\label{fig::hourglass}
\end{figure}

Hourglass persistence has promising potential to tackle a problem that PH-based methods sometimes struggle to scale on large inputs. This is because interleaving the filtration and contraction steps avoids the need to filtrate the entire space before starting contractions, which allows the total size of the space that appears in its entire lifetime to be bounded. Hourglass persistence hence gives an option for an \textbf{intermediate size control}. This gives an interesting tradeoff between runtime and expressivity, depending on when and how to contract. See Appendix~\ref{subsec::large_graph} for details.

\section{Relationship to Extended Persistence and Extensions}\label{section::hourglass}

Extended and FB-persistence are different as follows: For a filtration function $f$, extended persistence is obtained by considering a concatenation of PDs with filtrating by $f$ first and by $-f$ back. By Proposition 2.22 of \cite{hatcher2002algebraic}, this is essentially akin to shrinking the superlevel sets $G^a := \{x \in V \cup E\ |\ f(x) \geq a\}$ as $a$ goes down, but the superlevel sets and the intermediate complexes coming from $f$ are in general quite different. We make this precise in Proposition~\ref{prop::FB_vs_extended}.

We will now introduce a unifying perspective of both methods:
\begin{definition}[$(f,g)$-FB Persistence]\label{def::f_g_persistence}
Let $f, g$ be two filtration functions on $G$ with $G_{\bullet}^f, G_{\bullet}^g$ being their induced filtrations, respectively. The \textbf{$i$-th $(f,g)$-FB Persistence} of $G$ is the persistence diagram associated with the sequence:
\[\emptyset = G^f_{-1} \subset G^f_0 \subset ... \subset G^f_n = G  = G^g_0 \to G^g_1 \to ... \to G^g_m = *.\]
where $\{G^f_{i}\}_{i \in \{-1, ..., n\}}$ is the filtration of $G$ induced by $f$, and the sequence $G^g_{\bullet} := G^g_0 \to G^g_1 \to ... \to G^g_m = *$ is a sequence of contractions following the subgraphs that appear in the filtration induced by $g$. More precisely, $G^g_1 := G/\operatorname{IC}_0(G, g)$, $G^g_2 := G/(*_{1} \cup \operatorname{IC}_1(G, g)), G^g_3 := G/(*_{2} \cup \operatorname{IC}_2(G, g))$, etc., where $*_{i}$ is the point representing the total contracted subcomplex from before.
\end{definition}

The idea of combining different filtration directions has also appeared in zigzag \citep{Carlsson2010} and bipath \citep{Aoki2025} filtration. We explain in Appendix~\ref{sec::comparison} how Definition~\ref{def::f_g_persistence} differs from the two. Definition~\ref{def::f_g_persistence} also generalizes extended persistence due to the following result. Note that the following is not difficult to prove; we only include it here for completeness. 
\begin{proposition}\label{prop::extended_(f,-f)}
    The extended persistence of a filtration function $f$ (as defined in Section 2 of \cite{yan2022neural}) has the same expressive power as $(f,-f)$-FB persistence.
\end{proposition}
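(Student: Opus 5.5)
The plan is to realize both constructions on one persistence module up to isomorphism and a fixed, graph-independent reindexing. "Same expressive power" then follows, because each diagram is obtained from the other by a deterministic map that depends only on the values of $f$.

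\textbf{Step 1: the two modules.} Following \cite{yan2022neural}, extended persistence of $f$ comes from the sequence
\[H_*(G^f_0)\to\cdots\to H_*(G)\to H_*(G,G^{a_m})\to\cdots\to H_*(G,G^{a_0}) = H_*(G,G).\]
Here $G^{a}=\{x\ |\ f(x)\ge a\}$ are the superlevel sets, and $a_m>\cdots>a_0$ are the critical values. By construction, $G^{a}$ is also the sublevel filtration of $-f$.

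On the other side, $(f,-f)$-FB persistence contracts $\operatorname{IC}_0(G,-f),\operatorname{IC}_1(G,-f),\dots$ cumulatively. After $j$ steps, the union of the intermediate complexes contracted so far is exactly the $j$-th sublevel set of $-f$, namely $G^{a_{m-j}}$. So
\[G^{-f}_j = G/G^{a_{m-j}},\]
with the convention $G/\emptyset = G$. The point $*_j$ only records this identification and does not change the quotient.

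\textbf{Step 2: comparing quotients with relative homology.} For a subgraph $A\subset G$ that is a subcomplex, $(G,A)$ is a good pair (Proposition 2.22 of \cite{hatcher2002algebraic}). Hence the quotient map induces isomorphisms
\[\tilde H_k(G/A)\cong H_k(G,A).\]
These isomorphisms are natural in $A$. Therefore, for $A\subset A'$, the square formed by the quotient map $G/A\to G/A'$ and the map $H_k(G,A)\to H_k(G,A')$ commutes. On the forward part both modules agree, and at the junction $G\to G/G^{a_m}$ corresponds to $H_k(G)\to H_k(G,G^{a_m})$. Consequently, in degrees $k\ge 1$ the two persistence modules are isomorphic as modules indexed by the same totally ordered set.

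\textbf{Step 3: degree $0$.} This is where I expect the main technical obstacle. Reduced and unreduced homology differ here, and the case $A=\emptyset$ needs separate handling. The unreduced $H_0(G/A)$ is $\tilde H_0(G/A)\oplus\Zbb/2$, and the extra $\Zbb/2$ is a single constant bar running through the entire backward part. I would show that this bar pairs with the oldest $H_0$ class of the forward part: the essential component born at $\min f$ survives until $*$, whereas in extended persistence it is paired with the last relative class. Both diagrams therefore contain exactly one such bar at a predictable location, so each diagram determines the other.

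\textbf{Step 4: reindexing.} By interval decomposition (Definition~\ref{def::persistent_module}), isomorphic modules have identical barcodes. The remaining difference is indexing: time steps $n+1+j$ in $(f,-f)$-FB persistence correspond to the value $a_{m-j}$ in extended persistence. This is an order-preserving relabeling determined by the critical values of $f$ alone. Applying it converts an $(f,-f)$-FB diagram into the extended diagram, and its inverse converts back. So two graphs are distinguished by one descriptor exactly when they are distinguished by the other.

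A secondary check is needed here. Suppose two graphs share extended diagrams but have different sets of critical values. If the diagrams do not record the real values, the relabeling map is not globally well defined. I would handle this by recording values rather than indices in both constructions, so the relabeling becomes the identity on values.
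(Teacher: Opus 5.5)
Your Steps 1, 2 and 4 follow the paper's own argument. You identify the contraction stages with $G/G^{a}$. You then use the good-pair isomorphism $H_k(G,A)\cong\tilde H_k(G/A)$ (Hatcher's Proposition~2.22) and its naturality to get an isomorphism of modules in degrees $k\ge 1$, and you treat degree $0$ through the extra $\Zbb/2$ summand. Two small fixes are needed. First, $-f$ must be read as the vertex-based filtration generated by $-f|_V$, with edges getting $-\min(f(u),f(v))$; otherwise $-f$ is not a filtration function and $G^a$ is not a subgraph. Second, with the paper's convention $G^{-f}_0=G$, the $j$-th stage is $G/G^{a_{m-j+1}}$, not $G/G^{a_{m-j}}$.

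\textbf{The gap is Step 3.} The paper's degree-$0$ paragraph has the same gap. In degree $0$ the natural comparison map runs from FB to extended persistence: $H_0(G/A)\to H_0(G/A,*)\cong H_0(G,A)$. It does not run the other way, as the paper's diagram suggests. This map is surjective, and its kernel is the interval carried by the basepoint over the backward stages. The extension is non-split, so nothing forces the two barcodes to differ in a single bar. Concretely, $H_0(G,G^a)$ is spanned by the components of $G$ that miss $G^a$. So extended persistence pairs every component $C$ with its own maximum, giving $(\min_C f,\overline{\max_C f})$. In $G/G^a$, by contrast, all components meeting $G^a$ are glued together, and the elder rule re-pairs births and deaths across components. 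In particular, your claim that the oldest class dies at the last relative stage is false in general.

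Here is an explicit counterexample. Subscripts denote $f$-values, and $\bar a$ denotes the stage $G/G^a$ (resp.\ $H_0(G,G^a)$).
\begin{itemize}
\item Let $G_1$ be the disjoint union of the edges $u_0u_{10}$ and $w_1w_5$, and $G_2$ the disjoint union of $u_0u_5$ and $w_1w_{10}$.
\item In both forward filtrations, two components are born at $0$ and $1$ and never merge.
\item Both contraction sequences are $G\to G\to\Gamma\to\Gamma'\to *$, where $\Gamma$ is the path $u_0,*,w_1$ and $\Gamma'$ is the edge $u_0*$.
\item Hence the two $(f,-f)$-FB modules are identical in every degree. The degree-$0$ diagram is $\{(0,\infty),(1,\bar 5)\}$ for both, and the degree-$1$ diagram is empty for both.
\item The extended diagrams differ: $\{(0,\overline{10}),(1,\bar 5)\}$ for $G_1$ versus $\{(0,\bar 5),(1,\overline{10})\}$ for $G_2$.
\end{itemize}
So for disconnected graphs no argument can make each diagram determine the other. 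Only the direction extended $\Rightarrow$ FB survives, by re-running the elder rule on the pairs $(\min_C f,\overline{\max_C f})$.

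What your argument, like the paper's, does prove is the equivalence in degrees $k\ge1$, plus the degree-$0$ equivalence for connected $G$. In the connected case $\tilde H_0(G/A)=0$ for every nonempty $A$, so the only difference is $(\min f,\infty)$ versus $(\min f,\overline{\max f})$. Even there, your value-recording fix does not suffice in function time. The infinite FB bar stores no death value, so $\max f$ must be recovered from elsewhere. In combinatorial time that death is simply the first backward index.
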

\captionsetup[figure]{font=small}
\begin{wrapfigure}[10]{r}{0.4\textwidth}
  \centering
  \begin{adjustbox}{width=0.4\columnwidth}
    \input{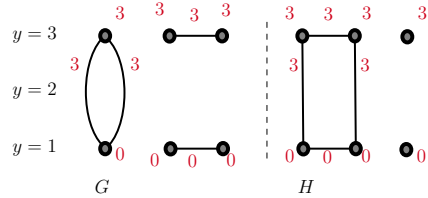}
  \end{adjustbox}
  \caption{Example of graphs $G$ and $H$ with height filtration such that FB-persistence can differ but not $(f,-f)$-FB persistence.}\label{fig::FB_vs_extended}
\end{wrapfigure}

Note that the expressivity analysis of extended persistence had been carried out in \cite{JMLR:v26:23-1424}.

\begin{proposition}\label{prop::sigma_fil_algorithm}
    Let $f$ be a filtration function. There exist filtration functions $f_1, f_2$ such that the sequence of topological maps in $(f_1, f_2)$-FB persistence is exactly the sequence of topological maps in $(\sigma, \tau)$-FB persistence. If $f$ is vertex-based, we provide an explicit $O(N \log N)$ algorithm to compute $(f_1, f_2)$ that becomes linear time for FB-persistence (see Appendix~\ref{sec::algorithm_perm}).
\end{proposition}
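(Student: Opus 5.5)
The plan is to build $f_1$ and $f_2$ by relabeling the time steps of $f$ with the permutations, and to check that the resulting intermediate complexes agree with those of $f$. Let $\emptyset = G_{-1} \subset G_0 \subset \dots \subset G_n = G$ be the filtration induced by $f$. Let $t_0 < t_1 < \dots < t_n$ be the distinct values of $f$, so that $\operatorname{IC}_i(G,f)$ is the closure of $f^{-1}(t_i)$. For a permutation $\sigma$ of $[n]$, define $f_1$ on simplices $x \in V \cup E$ as follows.
\begin{itemize}
\item If $x$ is an edge with $f(x)=t_i$, set $f_1(x) = \sigma^{-1}(i)$.
\item If $x$ is a vertex, set $f_1(x) = \min\{\sigma^{-1}(i) : x \in \operatorname{IC}_i(G,f)\}$.
\end{itemize}
This is the first step in the intermediate order at which $x$ appears. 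Define $f_2$ in the same way using $\tau$. The index shift in the contraction sequence (contraction $j$ uses $\operatorname{IC}_{\tau(j)}$ for $j \geq 1$, while Definition~\ref{def::f_g_persistence} uses $\operatorname{IC}_{j-1}(G,g)$) is absorbed by composing $\tau$ with a shift.

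The first check is that $f_1$ is a filtration function. An edge $e=(v,w)$ with $f(e)=t_i$ lies in $\operatorname{IC}_i(G,f)$, and so do its endpoints because $\operatorname{IC}_i$ is a closure. Hence $f_1(v), f_1(w) \le \sigma^{-1}(i) = f_1(e)$.

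The second check is that the sublevel sets of $f_1$ are exactly $Y_k = \bigcup_{j\le k}\operatorname{IC}_{\sigma(j)}(G)$. The inclusion of $Y_k$ in the sublevel set is immediate from the definition of $f_1$. For the converse, I take a simplex with $f_1 \le k$. If it is an edge, it lies in $\operatorname{IC}_{\sigma(f_1(x))}$. If it is a vertex, it lies in the $\operatorname{IC}_i$ realizing the minimum.

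The third check is that the intermediate complexes of $f_2$ agree with the $\operatorname{IC}_{\tau(j)}(G,f)$ as far as the quotients are concerned. This is the main obstacle. The closure of $f_2^{-1}(j) \setminus f_2^{-1}(\le j-1)$ may drop a vertex of $\operatorname{IC}_{\tau(j)}(G,f)$ whose $f_2$-value is earlier, namely a vertex shared with a previously contracted piece. It may also contain an isolated vertex $v$ with $f(v)=t_i$, which belongs only to $\operatorname{IC}_i$.

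To handle this, I would argue that the quotient $Z_{j-1}/(*_{j-1}\cup \operatorname{IC}_{\tau(j)})$ is unchanged. Every edge of $\operatorname{IC}_{\tau(j)}$ still appears, together with both endpoints by closure. Any endpoint already contracted is identified with $*_{j-1}$ anyway. So the two quotient maps coincide as maps of graphs.

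A further subtlety is that the quotient also identifies separate components of $\operatorname{IC}_{\tau(j)}$ with each other and with $*_{j-1}$. This is the same in both constructions, so the sequence of spaces and maps is literally identical.

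Finally I address the algorithm. For vertex-based $f$, each vertex $v$ with $f(v)=t_i$ lies in $\operatorname{IC}_i$. It also lies in the $\operatorname{IC}_{i'}$ of every edge at $v$, all of which have $i' \ge i$. So $f_1(v)$ is the minimum of $\sigma^{-1}$ over $i$ and the indices of the edges at $v$, and can be computed in one pass over the adjacency lists.

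The cost is $O(N\log N)$. This comes from sorting the values of $f$ to obtain the indices $i$ of the $t_i$, plus linear work afterwards.

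For FB-persistence, $\sigma=\operatorname{id}$ gives $f_1=f$ up to order-preserving relabeling. The reverse permutation gives $f_2(v) = n - \max$ over incident indices and $f_2(e) = n - i$. If the sorted order of $f$ is already available, which is the only setting where linear time is claimed, both are computed in linear time.
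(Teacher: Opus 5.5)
Your construction is the paper's. Its lemma sets $f(x)=\min\{i : x\in G_i\}$ for an increasing sequence of subgraphs. Applied to $Y_k=\bigcup_{j\le k}\operatorname{IC}_{\sigma(j)}(G)$ and to the unions of contracted pieces, this is exactly your first-appearance index. The algorithm is likewise one pass over incident edges after relabeling values by the permutation.

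Where you differ, you are more careful than the paper in two ways:
\begin{itemize}
\item Your third check shows that the intermediate complexes of $f_2$ differ from $\operatorname{IC}_{\tau(j)}(G,f)$ only by vertices already absorbed into $*_{j-1}$. The paper leaves this implicit. Your second worry there is vacuous: a vertex lying only in $\operatorname{IC}_i$ has $f_2$-value exactly the position of $\operatorname{IC}_i$.
\item Your vertex rule takes the minimum over $v$'s own index as well as the indices of incident edges. Algorithm~\ref{alg:sigma_filtration} uses $g(v)$ only for isolated vertices. For the reverse order this is harmless, since $f(e)\ge f(v)$ for every incident edge. For general $\tau$ it mis-times a vertex that is isolated in its own intermediate complex.
\end{itemize}
A concrete case for the second point: take $f(u)=f(w)=1$ and $f(v)=f(uv)=f(wv)=2$, with $\tau$ contracting $\operatorname{IC}_0=\{u,w\}$ first. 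The true sequence $G\to G/\{u,w\}\to *$ creates a $2$-cycle and then kills it. Algorithm~\ref{alg:sigma_filtration} instead gives every simplex the same value and contracts $G$ in one step. So keep your rule. It also never actually uses vertex-basedness.

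The step you should fix is the linear-time claim for FB-persistence. You make it conditional on the sorted order of $f$ being available, and you say that is the only setting where it is claimed. Neither is right. The sequence of maps depends only on the order of values, so you can work with raw values instead of ranks. Take $f_1=f$, $f_2(e)=-f(e)$, and $f_2(v)=-\max_{e\ni v}f(e)$, or $-f(v)$ if $v$ is isolated. Negation reverses the order, so one pass over $V$ and $E$ gives $O(|V|+|E|)$ with no sorting. This is Algorithm~\ref{alg:backward_filtration}. Sorting is needed only for general $\sigma,\tau$, to relabel $a_i\mapsto a_{\sigma(i)}$.

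Finally, a caveat you share with the paper, which hides it by assuming strict inclusions in its lemma. Suppose some $\operatorname{IC}_{\sigma(k)}$ already lies in $Y_{k-1}$, or some $\operatorname{IC}_{\tau(j)}$ already lies inside $*_{j-1}$. For example, with $f(u)=1$ and $f(v)=f(uv)=2$, contracting $\operatorname{IC}_0=\{u\}$ after $\operatorname{IC}_1$ is the identity. Then $f_1$ or $f_2$ skips that value. The induced filtration lists only distinct sublevel sets, so it drops the step. Your ``literally identical'' should therefore read ``identical after deleting identity maps,'' which only reindexes combinatorial time.
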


Now we establish a clear separation between FB persistence and extended persistence.

\begin{proposition}\label{prop::FB_vs_extended}
    There exist embedded graphs $G, H$ with height filtrations (Figure~\ref{fig::FB_vs_extended}) where FB-persistence can tell them apart, but extended persistence cannot.
\end{proposition}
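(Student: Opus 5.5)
The statement is existential, so I will prove it by exhibiting an explicit pair $G,H$ of embedded graphs, namely the pair in Figure~\ref{fig::FB_vs_extended}, and computing both invariants. Since the figure is given only by its drawing, I will first fix a concrete realization.

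For the pair I will take two connected graphs with the same number of vertices at each height and the same number of independent cycles. The height levels, i.e.\ the superlevel sets $G^a$, should have the same topology at every $a$. The cycles, however, should be attached differently relative to the intermediate complexes. A natural candidate is the following. In $G$, a single cycle occupies the top height band, and its top edge forms one intermediate complex together with its endpoints. In $H$, the corresponding cycle closes only via edges whose intermediate complex is a lower-height piece, joined with a pendant path so that vertex counts per height agree. The filtration is the vertex-based height filtration $f(v)=h(v)$, $f(e)=\max$ of the endpoint heights.

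Step 1 shows that extended persistence cannot separate $G$ and $H$. By Proposition~\ref{prop::extended_(f,-f)}, it suffices to compare the $(f,-f)$-FB persistence. Forward PH is determined by the sublevel filtration, and I will check directly that the component merges and cycle births occur at the same heights. The backward part contracts superlevel sets. By Proposition~2.22 of \cite{hatcher2002algebraic}, $H_*(G/G^a)\cong \tilde H_*(G,G^a)$. The persistence pairs therefore depend only on the relative homology of the pairs $(G,G^a)$ and the maps between them, and I will check these level by level. An alternative is to compute the extended diagrams using the formula of \cite{yan2022neural}: ordinary, relative and extended pairs from the critical heights. I will show that both graphs produce identical multisets of pairs, with the essential $H_0$ class paired min-to-max and the $H_1$ class paired (birth height, height of the lowest point of the cycle).

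Step 2 computes the FB-persistence of both graphs. The forward part agrees with Step 1. For the backward part I follow Definition~\ref{def::backward_PH} and contract $\operatorname{IC}_n,\operatorname{IC}_{n-1},\dots$ in turn. At each stage I track which cycle classes die. A class dies exactly when the accumulated contracted set $*_k\cup\operatorname{IC}_{k-1}$ contains a spanning tree of the cycle's support, i.e.\ when the image cycle becomes null in the quotient graph. I also track any new cycles created when an intermediate complex meets the already contracted point at two or more vertices; this matches the mechanism shown in Figure~\ref{fig::contraction_example}. The goal is to show that in $G$ the forward-born cycle dies at the first contraction step. In $H$ it survives until a later step, or else an extra transient $H_1$ class is born by the quotient. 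Either way the concatenated FB diagrams differ.

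The main obstacle is Step 1. Arranging for the extended diagrams to coincide exactly, including the relative $H_1$ pairs and the extended pairing of the essential classes, is delicate, while still keeping the intermediate complexes distinct. I would first try to make $G$ and $H$ differ only by rerouting one edge between two vertices at the same pair of heights. Such a rerouting leaves every sublevel and superlevel set homotopy-equivalent level by level, with compatible maps, so extended persistence is unchanged by functoriality. It does, however, change which intermediate complex closes the cycle. If the equal-height rerouting collapses the distinction, I will fall back to direct enumeration of the extended pairs for the small figure.
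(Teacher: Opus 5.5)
Your proposal has a genuine gap. The statement is purely existential, so the proof \emph{is} an explicit pair of graphs together with the two computations, and you supply neither. The ``natural candidate'' is only a list of desired properties, and Steps 1 and 2 are promises (``I will check directly'', ``fall back to direct enumeration'').

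Both tools you plan to use to fill this in would also mislead you. First, your death criterion in Step 2 is incorrect. A graph has no $2$-cells, so $H_1$ of a quotient graph $X/A$ is its cycle space, and the image of a cycle is just the sum of its edges not in $A$. A forward-born class therefore dies only when all of its edges (modulo older classes) have been contracted. Containing a spanning tree of the cycle is not enough: the last edge becomes a self-loop, which is a nonzero class. That is exactly the mechanism separating the witnesses, so your criterion would declare the distinguishing cycle dead too early and erase the difference. Second, the rerouting heuristic is false. Extended persistence depends on the pairs $(G,G^a)$ and the maps between them, not on sub- and superlevel sets separately. Take bottom vertices $b_1,b_2$ and top vertices $t_1,t_2$. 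The graphs with edge sets $\{b_1t_1,b_2t_2\}$ and $\{b_1t_1,b_1t_2\}$ differ by one such rerouting. They have the same forward persistence and level-wise homotopy-equivalent sub- and superlevel sets. Yet contracting the top superlevel set $\{t_1,t_2\}$ merges two components in the first graph and creates a cycle in the second.

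The idea you are missing is that $\operatorname{IC}_i$ is a \emph{closure}: it contains every edge of value $a_i$ together with its lower endpoint. The superlevel set used for extended persistence (the subgraph spanned by vertices of height $\ge a$) contains only edges with both endpoints high. So you want a cycle in $G$ all of whose edges touch the top level, and a cycle in $H$ with an edge lying entirely at the bottom level. The paper's witness does exactly this. $G$ is a double edge between a height-$3$ and a height-$1$ vertex, plus one edge at height $3$ and one at height $1$. $H$ is a square with two vertices at each height, plus an isolated vertex at each height.

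Forward PH agrees for both: the same component births and merges, and one cycle born at height $3$. In the backward pass of extended persistence, the top superlevel set is, in both graphs, an edge plus an isolated vertex lying in different components. Contracting it kills one component and touches no cycle, so each cycle dies only at the final step. In FB persistence, $\operatorname{IC}_1(G)$ contains both edges of the double edge, so the cycle dies at the first contraction. By contrast, $\operatorname{IC}_1(H)$ contains the top and vertical edges but not the bottom one, so the square survives as a self-loop until $\operatorname{IC}_0(H)$ is contracted. The $H_1$ diagrams therefore differ. A connected variant of the kind you wanted also works: a triangle on two top vertices and one bottom vertex with a pendant bottom edge, against a square.
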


Besides the choice of $-f$ or $f^b$, another candidate can be $f$ itself. Here we show that $(f,f)$-hourglass persistence does not bestow further information.

\begin{proposition}\label{prop::(f,f)-reduction}
    $(f,f)$-FB persistence has the same expressive power as forward-PH for $f$. For graphs, $(f,f)$-FB persistence can be computed by:
    \begin{itemize}[noitemsep,topsep=0pt,leftmargin=*]
        \item Suppose there are $n$-steps in the filtration, a cycle born in filtration at $t=i$ dies at $t=n + i$.
        \item A cycle born in the contraction at $t = n + i$ corresponds exactly to the occurrence of a connected component at $t = i$ that will be merged into a pre-existing connected component in the filtration steps. The death time is $n+j$, and $j > i$ is the 1st time in filtration when the merge above occurs. 
        \item The component death times are recorded as usual by keeping track of vertex representatives.
    \end{itemize}
\end{proposition}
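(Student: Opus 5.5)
The plan is to first identify the contraction half of the $(f,f)$-FB sequence with the quotients $G/G_i$. By Definition~\ref{def::f_g_persistence} with $g=f$, the space $G^f_{i+1}$ is obtained by collapsing $\operatorname{IC}_0(G,f)\cup\dots\cup\operatorname{IC}_i(G,f)$ to a point. This union is exactly $G_i$, since the intermediate complexes are closures of the successive differences. Hence the whole sequence reads
\[\emptyset\subset G_0\subset\dots\subset G_n=G\to G/G_0\to G/G_1\to\dots\to G/G_n=*,\]
and every map is either an inclusion or a quotient map. Because $(G,G_i)$ is a CW pair, we have $\tilde H_k(G/G_i)\cong H_k(G,G_i)$ naturally. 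The long exact sequence of the pair then gives, for graphs,
\[0\to H_1(G_i)\to H_1(G)\to H_1(G,G_i)\xrightarrow{\partial}\tilde H_0(G_i)\to\tilde H_0(G)\to\tilde H_0(G,G_i)\to 0.\]
All of this is natural in $i$, and it is also natural with respect to the map $G\to G/G_i$. Consequently every space and every map in the full module $H_1$ of the concatenated sequence is expressed through the forward data $H_*(G_i)\to H_*(G_j)\to H_*(G)$.

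Next I would compute the barcodes through rank invariants, rather than by choosing bases. The barcode of a module of finite-dimensional spaces indexed by a line is determined by the ranks $r(s,t)$ of all composite maps. So it suffices to express each such rank in terms of forward ranks. There are three cases.
\begin{itemize}[noitemsep,topsep=0pt,leftmargin=*]
\item Both indices in the forward phase: the ranks are the forward ranks themselves.
\item Source $G_s$ and target $G/G_j$ in $H_1$: the composite $H_1(G_s)\to H_1(G)\to H_1(G,G_j)$ has kernel $H_1(G_s)\cap H_1(G_j)$, by exactness. Its rank is therefore $\dim H_1(G_s)-\dim H_1(G_{\min(s,j)})$. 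This yields the first bullet of the proposition: a cycle born at $i$ survives exactly until the contraction of $G_i$, i.e.\ until time $n+i$ with the paper's indexing convention, which I will check against the indexing in Definition~\ref{def::f_g_persistence}.
\item Both indices in the contraction phase, $H_1(G,G_i)\to H_1(G,G_j)$ with $i<j$: the image of $H_1(G)$ is handled as above. The remaining part, modulo $H_1(G)$, is $\ker(\tilde H_0(G_i)\to\tilde H_0(G))\to\ker(\tilde H_0(G_j)\to\tilde H_0(G))$. Its rank is a forward $H_0$ rank between times $i$ and $j$, restricted to components that eventually merge in $G$.
\end{itemize}

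From the third case I read off the second bullet. A new class appearing at contraction time $n+i$ (beyond the image of $H_1(G)$) is a component born at $i$ that is not the oldest in its component of $G$. By the elder rule it is exactly a forward $H_0$ bar $(i,j)$ with $j<\infty$, and its image vanishes precisely at the first $j$ at which it merges, giving death time $n+j$. Infinite forward $H_1$ bars become $(i,n+i)$, and finite forward $H_0$ bars $(i,j)$ reappear as $H_1$ bars $(n+i,n+j)$. The $H_0$ part is the usual one. Components of $G$ remain separate in $G/G_i$ until they meet $G_i$, which happens at the birth time of their oldest vertex, so these bars are recorded by tracking vertex representatives (third bullet).

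This gives an explicit transformation from the forward diagram (plus $n$, itself a function of $f$) to the $(f,f)$-FB diagram. Conversely, the forward diagram is recovered from the $(f,f)$-FB diagram by truncating at time $n$. Hence the two have equal expressive power.

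The main obstacle I expect is the middle-phase bookkeeping in the third case. The splitting $H_1(G,G_i)\cong H_1(G)/H_1(G_i)\oplus\ker(\cdots)$ is not natural, so one cannot simply transport bases. I would handle this by working only with ranks, computing $\operatorname{rank}$ via $\dim\operatorname{im}=\dim\text{source}-\dim\ker$ from the exact sequence. The kernels are then exactly the forward subspaces $H_1(G_{\min(i,j)})$ and the merged-component classes. These are functorial, which sidesteps the splitting issue.
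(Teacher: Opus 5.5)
Your proposal is correct, and it takes a genuinely different route from the paper's. The paper argues with explicit representatives throughout:
\begin{itemize}
\item it treats $H_0$ through the vertex-representative Lemma~\ref{lem::0-dim-persistence-hourglass};
\item it explains the contraction-born cycles by the simplicial-cone picture (two edges to $v_+$ plus a path between representatives, later filled by triangles);
\item for forward cycles it uses the fundamental-cycle basis $C_1<\dots<C_N$ with cycle-creating edges $e_i$, and asserts that in the $(f,f)$ order one may take $e_i$ to be the last edge of $C_i$ contracted.
\end{itemize}
You instead pass to relative homology via the natural isomorphism $\tilde H_*(G/G_i)\cong H_*(G,G_i)$, which is legitimate since $G_0\neq\emptyset$. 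You then read the whole contraction phase off the long exact sequence of the pair, and pin down the barcode by rank invariants. Your route is basis-free, and it actually proves the two steps the paper only sketches: the ``$e_i$ dies last'' claim and the cone picture. It also applies verbatim to multigraphs viewed as $1$-dimensional CW complexes. Finally, it makes the equal-expressivity claim transparent, since the entire module is visibly built from forward data. The paper's route, in exchange, stays closer to the bookkeeping of Algorithm~\ref{alg:backward} and identifies which edge kills which class.

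The one step you should make explicit is why ranks add in your third case. Working with ranks does not by itself defeat the non-natural splitting. For a general map of extensions, the rank need not equal the sum of the ranks on the sub and on the quotient. What saves you is that $H_1(G)/H_1(G_i)\to H_1(G)/H_1(G_j)$ is surjective. Write $K_i=\ker(H_0(G_i)\to H_0(G))$ and $\phi\colon H_1(G,G_i)\to H_1(G,G_j)$. The snake lemma, with vanishing cokernel on the sub-part, gives an exact sequence $0\to H_1(G_j)/H_1(G_i)\to\ker\phi\to\ker(K_i\to K_j)\to 0$. Hence $\operatorname{rank}\phi=\dim H_1(G)/H_1(G_j)+\operatorname{rank}(K_i\to K_j)$. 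Combine this with the fact that $K_\bullet$ is the direct sum of the finite forward $H_0$ bars. Your inclusion--exclusion then yields exactly the claimed bars.

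On indexing: under Definition~\ref{def::f_g_persistence}, $G/G_i$ is the term $G^g_{i+1}$ (with $g=f$), so it sits at position $n+i+1$. The statement's $n+i$ therefore reflects a one-shifted convention, as you suspected.
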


\vspace{-3mm}
\section{Extension to Simplicial and Cellular Complexes}\label{sec::generalization}
\vspace{-3mm}

We now extend the framework to higher dimensions, viewing graphs as 1-dimensional simplicial complexes (for simple graphs) and 1-dimensional cell complexes. This extension can be readily integrated into simplicial and cellular networks \citep{bodnar2021weisfeiler, bodnar2021weisfeilerlehmantopologicalmessage, 10.5555/3495724.3497063}.
\subsection{Extension to Simplicial Complexes}
\begin{wrapfigure}{r}{0.45\textwidth}
\vspace{-20pt} 
  \centering
  \begin{adjustbox}{width=0.45\textwidth}
  \hspace{-5pt}\input{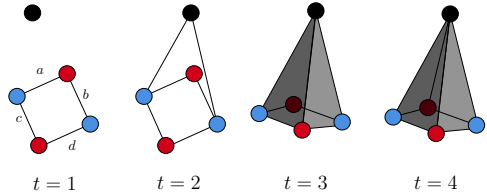}
  \end{adjustbox}
  \caption{The contraction steps in Figure~\ref{fig::contraction_example} interpreted as adding simplices instead.}\label{fig::add_triangles}
\vspace{-15pt} 
\end{wrapfigure}
Definition~\ref{def::persistent_module} works for any sequence of topological spaces, so our constructions may be defined directly. Unfortunately, the quotient of a simplicial complex may not be a simplicial complex without further triangulations.

We can however adapt a trick to conduct ``simplicial quotients" (see \cite{Cohen-Steiner_Edelsbrunner_Harer_2006, Dey_Wang_2022}) as follows. Add a disjoint vertex $v_+$ to the simplicial complex at the beginning, and whenever a simplex $\sigma$ is asked to be contracted, one instead adds a simplex $[v_+, \sigma]$ to keep this as a filtration. On graphs, this means that one adds an edge from $v$ to $v_+$ for every vertex $v$ being contracted and a triangle $(v_1, v_2, v_+)$ for every edge $(v_1, v_2)$ being contracted (see Figure~\ref{fig::add_triangles}). One then computes the persistent diagram of this filtration instead.

\begin{proposition}\label{prop::simplicial_algo}
For all methods defined in Section~\ref{section::FB},~\ref{section::hourglass}, the simplicial quotient method will recover the $k$-th dimensional persistence diagrams for $k > 0$ but may differ on the level of $k = 0$. For $k = 0$, the pairs can be directly computed by keeping track of one vertex representative per component.
\end{proposition}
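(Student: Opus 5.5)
The plan is to compare, level by level, the sequence of quotient spaces $X_0 \to X_1 \to \dots$ (inclusions followed by contractions) with the simplicial-quotient filtration $K_0 \subset K_1 \subset \dots$. Here $K_t$ is the original complex at time $t$ together with the extra vertex $v_+$ and the cones $[v_+,\sigma]$ for every simplex $\sigma$ contracted so far. Write $A_t \subset K_t$ for the total subcomplex contracted by time $t$, with $A_t=\emptyset$ before any contraction. Then $K_t = Y_t \cup C(A_t)$, where $Y_t$ is the space built by inclusions up to time $t$ and $C(A_t)$ is the cone on $A_t$ with apex $v_+$. In every method of Sections~\ref{section::FB} and~\ref{section::hourglass}, the contracted pieces have already been included and $A_t \subseteq A_{t+1}$. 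So $K_\bullet$ is a genuine filtration.

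The key step is a natural identification $H_k(K_t)\cong H_k(X_t)$ for $k>0$, where $X_t = Y_t/A_t$ (and $X_t = Y_t$ when $A_t=\emptyset$).

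1. $(Y_t,A_t)$ is a CW pair, hence a good pair. By Proposition~2.22 of \cite{hatcher2002algebraic}, $\tilde H_k(Y_t/A_t)\cong H_k(Y_t,A_t)$.
2. The cone $C(A_t)$ is contractible, so $K_t \simeq K_t/C(A_t) = Y_t/A_t$, and this quotient map induces the isomorphism. Equivalently, excision and the long exact sequence give $H_k(Y_t\cup CA_t)\cong H_k(Y_t\cup CA_t, CA_t)\cong H_k(Y_t,A_t)$ for $k>0$.
3. The isomorphism must be natural. The square formed by $K_t\hookrightarrow K_{t+1}$ and the map $Y_t/A_t\to Y_{t+1}/A_{t+1}$ (inclusion followed by quotient) commutes on the nose, since both composites collapse $C(A_t)$ to the basepoint. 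So the persistence modules $H_k(K_\bullet)$ and $H_k(X_\bullet)$ are isomorphic for $k\ge1$.
4. By the uniqueness of interval decompositions (the theorems cited in Definition~\ref{def::persistent_module}), the two modules have identical barcodes.

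The main point of care is the convention for $X_t$ when $A_t=\emptyset$. In that case $K_t = Y_t\sqcup\{v_+\}$ and $H_k$ agrees for $k>0$, so the argument still goes through. It also explains why $k=0$ can fail. The extra vertex $v_+$ contributes a spurious component at the start. Components whose vertices are coned to $v_+$ merge with it rather than collapsing to $*$, and the reduced/unreduced discrepancy ($\tilde H_0$ versus $H_0$) shifts the bookkeeping. For example, before the first contraction $H_0(K_t)$ has one more class than $H_0(X_t)$. I will exhibit this explicitly on a single vertex or edge to show that the diagrams can differ.

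For $k=0$, I will argue directly that $H_0(X_t)$ is the free $\Zbb/2$-module on the connected components of $X_t$. Inclusions can only create components (new vertices) or merge them (edges). Contractions can only merge components, namely those meeting the contracted piece together with $*$. The map on $H_0$ sends each component to the component containing it. Tracking one representative vertex per component, as in union--find, therefore records every birth (a new vertex forming a new component) and every death (a merge, where the younger representative dies under the elder rule). This reproduces the interval decomposition of $H_0(X_\bullet)$ directly.
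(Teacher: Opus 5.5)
Your proposal is correct and follows the paper's proof. For $k>0$ the paper likewise takes the closed star of $v_+$ (exactly your cone $C(A_t)$), collapses it using the fact that quotienting a contractible subcomplex is a homotopy equivalence (Hatcher, Proposition~0.17), and reads off an isomorphism of persistence modules; for $k=0$ it reduces to the $1$-skeleton and invokes Lemma~\ref{lem::0-dim-persistence-hourglass}, which is the component-tracking and elder-rule argument you give directly. If anything, your version is more careful in two places: at the stage $A_t=\emptyset$, where the paper's identification $Y_i\cong Z_i/C_i$ is off by the isolated vertex $v_+$ (harmless for $k>0$), and in merging contracted components into the component of $*$ under the elder rule, whereas that lemma always spares a fixed time-$0$ supernode and so can misdate a death when the first contracted piece lies in a different component.
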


\subsection{Extension to Cellular Complexes} The simplicial version adds more simplices to the set-up, which may make the computations more costly. We should try to exploit the contraction steps geometrically, as they would reduce the data. To do this, we would like to relax the collection of objects we are working with.

A \textbf{regular cell complex} \citep{Hansen_2019} is a generalization of simplicial complexes that adds more flexibility by allowing $X$ be built off of $k$-dimensional disks (called $k$-cells) as opposed to $k$-dimensional triangles. Intuitively, a cell complex is built out inductively by starting with the $0$-cells (i.e., discrete points), attaching $1$-cells to $0$-cells, then attaching $2$-cells to the complex, and so on. This process yields a poset structure on the cells where $\tau \leq \sigma$ indicates part of $\sigma$ is attached onto $\tau$. We refer to Section 2 of \cite{bodnar2021weisfeiler} for a thorough explanation.

Now we may extend a filtration function on a regular cell complex $X$ as follows.
\begin{definition}
    Let $\operatorname{Cell}(X)$ denote the collection of cells on $X$. A function $f: \operatorname{Cell}(X) \to \Rbb$ is a filtration function if $f(\tau) \leq f(\sigma)$ for all $\tau \leq \sigma$ in $\operatorname{Cell}(X)$, which induces a filtration on $X$.
\end{definition}

An important feature of regular cell complexes is that they are closed under quotients of subcomplexes. Thus, every construction we discussed in Section~\ref{section::FB} and~\ref{section::hourglass} extends \textit{mutatis mutandis} - that is, one can just replace the word ``graph" with the word ``cell complex".

\section{Stability}\label{sec::stability}
\vspace{-3mm}
Recall one motivation for ``concatenating" inclusions and contractions was to compare metrics for diagrams on different spaces. We would still like to ensure the diagrams are stable if they are on the same space. This is a desired property as we would like perturbations in the input filtration to not affect the output drastically. To properly discuss (bottleneck) stability, however, we need to make a distinction between \textbf{combinatorial time} and \textbf{function time}.

So far, the persistence diagrams in our construction have all been combinatorial time - we use some function(s) to make a sequence of maps, and the pairs $(i, j)$ record the birth and death steps in the sequence. However, if we want a notion of bottleneck stability, we should work with function time - that is, we change the pair $(i, j)$ to the $(a_i, a_j)$.

When extending function time to include contractions, we observe that there are some subtleties in defining function time for $(f,g)$-FB persistence, because the functions $f$ and $g$ can be independent. It can be the case that the values of $g$ are mixed with the values of $f$ on the real line, but to define a function time we would like the values of $g$ to appear after the values of $f$. To get a well-defined notion of function time, we require $f$ and $g$ to both be positive, and that the corresponding function time of (a) an inclusion step at $i$ in $(f,g)$-FB persistence to be the $i$-th value of $f$ and (b) a contraction step at $n+i$ in $(f,g)$-FB persistence to $\max(f) + $ the $i$-th value of $g$.

\begin{resbox}
\begin{theorem}[Stability of $(f,g)$-FB persistence]\label{thm::stability}
Let $X$ be a graph, simplicial or cellular complex. For \textcolor{blue}{2} pairs of filtrations $(f, g), (f', g')$ on $X$, we have the following for all $i \geq 0$:
\[d_B(\operatorname{PH}^{FB}_i(X, f, g), \operatorname{PH}^{FB}_i(X, f', g')) \leq 2 ||f - f'||_{\infty} + ||g - g'||_{\infty} + |\max(f) - \max(f')|.\]
\end{theorem}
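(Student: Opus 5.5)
Our plan is to turn $(f,g)$-FB persistence in function time into ordinary sublevel-set persistence of a single filtration on one enlarged complex, and then apply the classical bottleneck stability theorem of \cite{Cohen-Steiner_Edelsbrunner_Harer_2006} (in its algebraic form for monotone functions on a finite complex). We use the simplicial/cellular quotient trick of Section~\ref{sec::generalization}. Build the cone complex $CX = X \cup \{v_+\} \cup \{[v_+,\sigma] : \sigma \in \operatorname{Cell}(X)\}$. Define $F: \operatorname{Cell}(CX) \to \Rbb$ by
\[ F(\sigma) = f(\sigma), \qquad F([v_+,\sigma]) = \max(f) + g(\sigma), \qquad F(v_+) = \text{a fixed constant} \le \min(f). \]
Because $f,g$ are positive filtration functions, $F$ is monotone on faces. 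This uses $f(\sigma) \le \max(f) < \max(f)+g(\sigma)$ together with monotonicity of $g$ in the cone direction. By Proposition~\ref{prop::simplicial_algo}, for $i>0$ the sublevel persistence of $F$ recovers $\operatorname{PH}^{FB}_i(X,f,g)$ in function time. Contracting $\operatorname{IC}_j(G,g)$ together with the accumulated point at time $\max(f)+g_j$ corresponds exactly to coning off those cells at that value. For $i=0$, the pairs are read off from the same filtration by tracking vertex representatives, with the extra essential class of $v_+$ discarded. This class has identical birth in both filtrations, so it contributes nothing to the bottleneck distance.

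Let $F'$ be built the same way from $(f',g')$, with the same constant for $v_+$. Stability for sublevel persistence on a fixed finite complex gives $d_B(\operatorname{PH}_i(F), \operatorname{PH}_i(F')) \le \|F-F'\|_\infty$. It then remains to bound $\|F-F'\|_\infty$, which we do cell by cell:
\[ |F(\sigma)-F'(\sigma)| \le \|f-f'\|_\infty, \qquad |F([v_+,\sigma])-F'([v_+,\sigma])| \le |\max f - \max f'| + \|g-g'\|_\infty. \]
Hence $\|F-F'\|_\infty \le \|f-f'\|_\infty + \|g-g'\|_\infty + |\max f-\max f'|$. Since $|\max f - \max f'| \le \|f-f'\|_\infty$, this is in fact stronger than the claimed bound. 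The stated bound, with its extra $\|f-f'\|_\infty$ slack, follows a fortiori, which leaves room for any looser bookkeeping in the $H_0$ or essential-class step.

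The main obstacle is the identification step. The paper's function time indexes a contraction step at $n+i$ by ``the $i$-th value of $g$''. We must check that this agrees with the cone filtration $F$ when several cells share a $g$-value and when intermediate complexes overlap at vertices. Concretely, the contraction $G^g_{j-1}\to G^g_j$ must be homotopy-equivalent, compatibly with the maps, to the inclusion of sublevel sets of $F$ at consecutive $g$-values. To show this, we would use that $X \cup \operatorname{Cone}(A) \simeq X/A$ for a subcomplex $A$ (Proposition 0.17 of \cite{hatcher2002algebraic}). The cone over the union of previously contracted pieces plays the role of $*_{j-1}$, and naturality of this equivalence yields an isomorphism of persistence modules, at least in degrees $i>0$.

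For $H_0$ we would argue directly. Before $\max(f)$ the components are the same in both descriptions. After $\max(f)$, components die exactly when they are joined to the cone point, and the same pairing rule applies to $f'$, so the $H_0$ diagrams are again controlled by $\|F-F'\|_\infty$.
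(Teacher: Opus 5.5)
\paragraph{Verdict.} For $i>0$ your argument is the paper's: the same cone function with $v_+$ at the bottom, classical stability, and the same cellwise estimate. You are also right that no factor $2$ is needed in that case; the paper makes the same remark. The gap is in $i=0$, and the slack in the stated constant does not rescue it, because the diagrams you compare are not the FB diagrams.

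\paragraph{Where $H_0$ fails.} With $v_+$ born before all of $X$, the cone point is the elder of every component it ever touches. So in $\operatorname{PH}_0(F)$ every component of $X$ dies the first time it meets a coned cell. In FB persistence the elder rule runs among components of $X$ only. When a component is merged into the supernode, the younger of the two classes dies, and the oldest component of $X$ is essential.

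Consequently, $\operatorname{PH}_0(F)$ with the class of $v_+$ removed is never $\operatorname{PH}^{FB}_0(X,f,g)$. The former has no essential point left, while the latter always has exactly one, since the sequence ends at a point. Your identification step, via $X\cup\operatorname{Cone}(A)\simeq X/A$, only covers the contraction phase. During the inclusion phase the isolated $v_+$ is an extra component, and the resulting module is not ``FB module plus one interval.''

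A concrete example: take two isolated vertices $a,b$ with $f(a)=1$, $f(b)=2$, $g(a)=2$, $g(b)=1$. FB gives $\{(1,\infty),(2,4)\}$, while your cone gives $\{(F(v_+),\infty),(2,3),(1,4)\}$. Your last paragraph (``components die exactly when they are joined to the cone point'') applies the same incorrect rule, so the $H_0$ bound is not established.

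\paragraph{The paper's repair.} Spawn $v_+$ after all of $X$, at value $\max f$, but before any cone cell; this is possible because $g>0$. Then $v_+$ is the youngest vertex and dies at the first contraction, and the remaining pairs are exactly the FB pairs.

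The price is that the discarded point $(\max f,\max f+\min g)$ is now finite and moves with $(f,g)$. You therefore cannot apply stability to the full diagrams and delete it afterwards: an optimal matching need not pair the two $v_+$-points, and patching the matching by hand costs extra.

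The paper instead proceeds as follows:
\begin{enumerate}
\item It interpolates $h_t=(1-t)h+th'$ and cuts $[0,1]$ into intervals on which the order of cells is constant.
\item It matches points across each interval by their generating (vertex, edge) pair. This pairs the $v_+$-points automatically, since $h_t(v_+)$ stays above $h_t|_X$ and below every cone cell for all $t$.
\item It sums over intervals using a pseudo-distance $d'_B$ that ignores those points but still satisfies the triangle inequality.
\end{enumerate}
This is where the $2\|f-f'\|_\infty$ in the statement comes from.
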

\end{resbox}
We remark that establishing an appropriate \textbf{function time} for hourglass persistence is challenging, because the function values for each step is not as canonical since we are interleaving the steps of contractions and inclusions. Hourglass persistence does satisfy the condition of what \cite{algebraic_stability} called ``tame", which has a \textbf{stability} in combinatorial time (see Theorem 4.4 therein).
\vspace{-2mm}
\section{Algorithm Design And Experiments}\label{sec::alg}
\vspace{-2mm}
\begin{wrapfigure}{r}{0.53\textwidth}
\vspace{-32pt}
\captionsetup{type=algorithm}
\begin{minipage}{\linewidth}
\begin{algorithm}[H]
\small
\captionof{algorithm}{\textsc{ForwardInclusion}}
\label{alg:forward}
\begin{algorithmic}[1]
\State \textbf{Input:} Filtration $f$; Graph $G$
\State \textbf{Output:} $\PD_0$, $\PD_1$, cycle basis $\Bcal$, union–find $\UF$
\State Initialize $\UF$ on $V$;\quad $\PD_0,\PD_1,\Bcal\gets\emptyset$
\State Sort edges $e_1,\dots,e_m$ by $f(e_j)$
\For{$j=1..m$}
  \State $(u,v)\gets e_j$
  \If{$\UF.\find(u)=\UF.\find(v)$} 
    \State // Cycle-creating edge
    \State Build  $\gamma\in\{0,1\}^m$ from $e_j$ and path $u\leadsto v$
    \State $\Bcal\gets\Bcal\cup\{\gamma\}$;\quad $\PD_1[\gamma]\gets(f(e_j),\infty)$
  \Else 
    \State $r_u \gets \UF.\find(u)$;\quad $r_v \gets \UF.\find(v)$
    \State $y \gets \arg\max_{r\in\{r_u,r_v\}} f(r)$ 
    \State $\PD_0\gets\PD_0\cup\{(f(y),\,f(e_j))\}$
    \State $\UF.\merge(u,v)$
    \State // record mutual neighbors in spanning forest
    \State $\UF.\nbrs \cup \{u\leftrightarrow v\}$
  \EndIf
\EndFor
\For{each root $r$ of $\UF$} add $(f(r),\infty)$ to $\PD_0$ \EndFor
\State \Return $\PD_0,\PD_1,\Bcal,\UF$
\end{algorithmic}
\end{algorithm}
\end{minipage}
\vspace{-48pt}
\end{wrapfigure}
We now present a practical algorithmic framework that supports any filtration scheme composed of a sequence of inclusions and contractions, assuming that all contraction intermediates have already appeared previously in the forward filtration.

\subsection{Forward inclusion with auxiliary bookkeeping.} 
In addition to the standard union–find structure, which incrementally tracks connected-component memberships during the forward filtration, we maintain two further structures: (i) neighborhood information for the spanning forest being built, and (ii) a fundamental cycle basis over $\mathbb{F}_2$. Each new edge $e=(u,v)$ is then handled in two cases:
\begin{description}[leftmargin=0pt, labelindent=0pt,,topsep=0pt,   itemsep=0.4ex]
    \item[(1)] If $u$ and $v$ are in different components, $e$ is a \emph{spanning-tree edge}. We update the union–find and record $u$ and $v$ as neighbors in the spanning forest, thereby incrementally extending the spanning structure maintained during the filtration.
    \item[(2)] If $u$ and $v$ are already connected, $e$ is a \emph{cycle-creating edge}. Together with the unique forest path  $u\leadsto v$, this defines a new cycle $C_{k+1}$, given an existing basis       $\{C_1,\dots,C_k\}$. Since $e$ does not appear in any earlier cycle, $C_{k+1}$ is linearly independent of $\{C_1,\dots,C_k\}$ in the cycle space over $\mathbb{F}_2$. The basis is thus extended to $\{C_1,\dots,C_k,C_{k+1}\}$, and a new interval $(f(e),\infty)$ is added.
\end{description}

We denote by $\PD_0$ and $\PD_1$ the persistence diagrams of $H_0$ and $H_1$, respectively. Algorithm~\ref{alg:forward} describes the forward stage of our framework. Its input is a filtration function $f:V\cup E \to \mathbb{R}$ on a graph $G=(V,E)$, and its output consists of the persistence diagrams $\PD_0$ and $\PD_1$, a fundamental cycle basis $\Bcal$ represented by indicator vectors in $\{0,1\}^m$, and a union–find structure $\UF$ that maintains connected-component information and parent pointers for the spanning forest.

\subsection{Backward contraction with supernode bookkeeping.}
\begin{wrapfigure}{r}{0.53\textwidth}
\vspace{-25pt}
\captionsetup{type=algorithm}
\begin{minipage}{\linewidth}
\begin{algorithm}[H]
\small
\caption{\textsc{BackwardContraction} } 
\label{alg:backward}
\begin{algorithmic}[1]
\State \textbf{Input:} contraction function $g$; persistence data $(\PD_0,\PD_1,\Bcal)$ and union–find $\UF$ from Alg.~\ref{alg:forward}
\State \textbf{Output:} updated $\PD_0,\PD_1$ with finite deaths
\State Initialize supernode $S\gets\emptyset$, stack $B$, list $L$
\For{elements $x\in V\cup E$ in order of $g(x)$}
  \If{$x\in V$} \Comment{Node contraction}
     \State $S\gets S\cup\{x\}$
     \If{$\UF.\find(x)\neq \UF.\find(S)$}
        \State // kill younger component~$y$
        \State $\PD_0 \gets \PD_0 \cup \{(f(y), g(x))\}$  
     \Else
        \State $B.\push(g(x))$ \quad // Birth supernode cycle
     \EndIf
     \State $\UF.\merge(x,S)$
  \Else \Comment{Edge contraction}
     \State // $x=e=(u,v)\in E$ with $u,v\in S$
     \State remove $e$ from all $\gamma\in\Bcal$; reduce basis
     \If{some $\gamma$ becomes dependent}
        \State // Close forward cycle
        \State $\PD_1[\gamma]\gets(\text{birth}(\gamma), g(x))$ 
        \State $\Bcal\gets\Bcal\setminus\{\gamma\}$
     \Else
        \State // Close supernode cycle
        \State $\tau\gets B.\pop()$;\quad $L\gets L\cup\{(\tau,g(x))\}$ 
     \EndIf
  \EndIf
\EndFor
\State $\PD_1\gets \PD_1 \cup L$
\State \Return $\PD_0,\PD_1$
\end{algorithmic}
\end{algorithm}
\end{minipage}
\vspace{-10pt}
\end{wrapfigure}

The backward stage uses a contraction function $g$ to order contractions. We maintain a \emph{supernode} that accumulates contracted vertices; a vertex merges when scheduled by $g$, and an edge contracts once both endpoints reside in the supernode.
Algorithm~\ref{alg:backward} summarizes the contraction process. 

\vspace{-12pt}
\medskip\noindent
\textbf{Vertex contractions.}
The contraction of a vertex into the supernode has two effects: \textbf{(1)} If the vertex belongs to a different connected component, the younger component is killed, and its $H_0$ interval is closed. \textbf{(2)} If the vertex lies in the same component, a new \emph{supernode cycle} is created. Unlike forward cycles, these are not tied to any edge but arise from merging disconnected subgraphs of the same component.

\vspace{-12pt}
\medskip\noindent
\textbf{Edge contractions.}
An edge is contracted once it becomes a self-loop on the supernode. Each such contraction kills one cycle: \textbf{(1)} If the edge participates in some forward cycle, we remove it from all cycle indicators in $\Bcal$ and reduce the basis. If a younger cycle becomes dependent on older ones, it is killed, and its $H_1$ interval is closed. \textbf{(2)} If no forward cycle is removed, the contraction kills the most recent supernode cycle.
In either case, the contraction assigns a finite death time to an $H_1$ interval, completing the bookkeeping of backward updates.

\subsection{Experimental setup and results}

\paragraph{Datasets.}
We evaluate on four standard graph classification datasets~\citep{morris2020tudataset}: 
NCI109, PROTEINS, IMDB-BINARY, and NCI1 (Accuracy); a  graph-regression dataset ZINC~\citep{dwivedi2023benchmarking}  (MAE) and the OGBG-MOLHIV~\citep{hu2020open} molecular property dataset (AUC). All methods use the same GIN~\citep{gin} or GCN~\citep{kipf2016semi} backbone and the same DeepSets ~\citep{zaheer2017deep} pooling for encoding persistence diagrams.

\paragraph{Baselines and ablations.}
We compare five variants of persistence-based representations.  
(i) \textbf{PH}~\citep{horn2021topological} uses a fixed vertex filtration and assigns each edge the 
maximum filtration of its endpoints, producing standard inclusion-based persistence.  
(ii) \textbf{RePHINE}~\citep{immonen2023going} learns both vertex and edge filtrations and, 
importantly, augments every 0D persistence point with the filtration value of the earliest incident 
edge, providing additional early-connectivity information.  
(iii) \textbf{Fwd-only} learns a general vertex–edge filtration but does \emph{not} include 
RePHINE's augmentation, and disables all contractions; this isolates the effect of learning the 
forward filtration alone.  
(iv) \textbf{Bwd-only}  learns only the contraction 
schedule, isolating the backward component of our design.  
(v) \textbf{Ours} implements the full learnable inclusion–contraction (forward–backward) framework, 
jointly learning both vertex/edge filtrations and contraction order.

\begin{table*}[t]
\centering
\caption{
Comparison of PH variants across six  datasets using GIN and GCN backbones. 
Classification accuracy and AUC scores are reported in percentage (\%, $\uparrow$) and ZINC regression evaluation in MAE ($\downarrow$). 
Best and second-best results per row are shown in \textbf{bold} and \underline{underline}, respectively.}
\label{tab:results}
\small

\setlength{\tabcolsep}{5pt}

\begin{tabular}{p{0cm}c||cc|cc|c}
\toprule
 & \textbf{Dataset} & PH & RePHINE & Fwd-only & Bwd-only & Ours \\
\midrule

\multirow{6}{*}{\hspace{-6pt}\begin{sideways}\textbf{GIN}\end{sideways}}\ldelim\{{6}{1.3mm}
& NCI109 (Acc.\%, $\uparrow$)
& 76.76{\tiny$\pm$0.40} & \underline{77.89{\tiny$\pm$1.19}} & 77.00{\tiny$\pm$1.03} & 76.35{\tiny$\pm$0.50} & \textbf{77.89{\tiny$\pm$1.87}} \\

& PROTEINS (Acc.\%, $\uparrow$)
& 69.35{\tiny$\pm$1.83} & 69.94{\tiny$\pm$2.76} & \underline{70.24{\tiny$\pm$2.95}} & 70.54{\tiny$\pm$2.19} & \textbf{73.51{\tiny$\pm$1.11}} \\

& IMDB-B (Acc.\%, $\uparrow$)
& 68.67{\tiny$\pm$1.25} & 70.67{\tiny$\pm$0.94} & \textbf{74.67{\tiny$\pm$0.47}} & \underline{74.33{\tiny$\pm$0.94}} & 72.00{\tiny$\pm$2.16} \\

& NCI1 (Acc.\%, $\uparrow$)
& 79.24{\tiny$\pm$1.74} & 78.75{\tiny$\pm$2.55} & \underline{76.72{\tiny$\pm$1.13}} & 75.75{\tiny$\pm$0.94} & \textbf{81.27{\tiny$\pm$0.00}} \\

& ZINC (MAE, $\downarrow$)
& 0.43{\tiny$\pm$0.01} & \underline{0.41{\tiny$\pm$0.01}} & 0.62{\tiny$\pm$0.01} & 0.61{\tiny$\pm$0.00} & \textbf{0.40{\tiny$\pm$0.01}} \\

& MOLHIV (AUC\%, $\uparrow$)
& \textbf{74.34{\tiny$\pm$4.57}} & \underline{72.88{\tiny$\pm$2.15}} & 70.00{\tiny$\pm$3.11} & 70.59{\tiny$\pm$1.83} & 72.34{\tiny$\pm$0.74} \\
\midrule

\multirow{6}{*}{\hspace{-6pt}\begin{sideways}\textbf{GCN}\end{sideways}}\ldelim\{{6}{1.3mm}
& NCI109 (Acc.\%, $\uparrow$)
& 76.59{\tiny$\pm$1.32} & \textbf{79.50{\tiny$\pm$0.11}} & 71.91{\tiny$\pm$0.52} & \underline{74.58{\tiny$\pm$0.71}} & 75.87{\tiny$\pm$0.89} \\

& PROTEINS (Acc.\%, $\uparrow$)
& 70.54{\tiny$\pm$0.73} & 68.75{\tiny$\pm$2.53} & 69.35{\tiny$\pm$1.83} & \underline{70.54{\tiny$\pm$1.46}} & \textbf{72.32{\tiny$\pm$1.46}} \\

& IMDB-B (Acc.\%, $\uparrow$)
& 65.00{\tiny$\pm$1.63} & \underline{70.00{\tiny$\pm$0.82}} & 62.67{\tiny$\pm$3.30} & 64.33{\tiny$\pm$3.30} & \textbf{68.00{\tiny$\pm$2.16}} \\

& NCI1 (Acc.\%, $\uparrow$)
& 78.43{\tiny$\pm$0.98} & \textbf{78.91{\tiny$\pm$0.80}} & 75.59{\tiny$\pm$1.00} & 76.24{\tiny$\pm$1.74} & \underline{78.67{\tiny$\pm$1.69}} \\

& ZINC (MAE, $\downarrow$)
& 0.49{\tiny$\pm$0.02} & \underline{0.46{\tiny$\pm$0.01}} & 0.86{\tiny$\pm$0.01} & 0.87{\tiny$\pm$0.01} & \textbf{0.44{\tiny$\pm$0.01}} \\

& MOLHIV (AUC\%, $\uparrow$)
& 75.12{\tiny$\pm$0.68} & \underline{75.40{\tiny$\pm$0.53}} & 71.02{\tiny$\pm$2.18} & 71.55{\tiny$\pm$1.21} & \textbf{76.37{\tiny$\pm$1.45}} \\

\bottomrule
\end{tabular}
\vspace{-5mm}
\end{table*}
\vspace{-9pt}
\paragraph{Observations.} Table~\ref{tab:results} reports performance across all six benchmarks. We observe:
\textbf{(1)} Our method achieves the best or second-best performance in 9 of 12 settings, underscoring the value of jointly learning both inclusion and contraction schedules.  
\textbf{(2)} \textbf{RePHINE} remains a strong baseline and consistently outperforms standard PH in nearly all cases, reflecting the benefit of learning both vertex and edge filtrations.  
\textbf{(3)} The ablations further clarify the role of each component: \textbf{Fwd-only}, which learns vertex–edge filtrations but disables contractions, typically improves over standard PH; \textbf{Bwd-only}, which learns only contraction order, also improves over PH on several datasets and performs roughly on par with \textbf{Fwd-only} overall, with no clear winner between them.  
\textbf{(4)} \textbf{RePHINE} significantly outperforms both \textbf{Fwd-only} and \textbf{Bwd-only} on ZINC and MOLHIV, highlighting the effectiveness of its additional 0D augmentation in capturing early connectivity structure.  
\textbf{(5)} Our full model almost always surpasses both ablations, demonstrating that inclusion and contraction encode complementary structural information, and that jointly learning them is essential for achieving the strongest performance across classification, regression, and molecular prediction tasks.

\subsection{Comparison with extended persistence}
\begin{wraptable}{r}{0.55\textwidth}
\centering
\small
\vspace{-10pt}
\caption{Accuracy (\%) with GIN and GCN backbones. 
Best score per row is in \textbf{bold}.}
\label{tab:ep_results}
\setlength{\tabcolsep}{3pt}
\small
\begin{tabular}{l|ccc|ccc}
\toprule
 & \multicolumn{3}{c|}{GIN} & \multicolumn{3}{c}{GCN} \\
\cmidrule(lr){2-4} \cmidrule(lr){5-7}
Dataset & ExtP & PersLay & Ours & ExtP & PersLay & Ours \\
\midrule
NCI109      & \textbf{78.21} & 68.28 & \textbf{78.21}  
            & \textbf{77.48} & 68.28 & \textbf{77.48} \\

PROTEINS    & \textbf{74.11} & 66.07 & 73.21          
            & \textbf{72.32} & 66.07 & \textbf{72.32} \\

IMDB-B & 63.00 & 70.00 & \textbf{73.00}         
            & 66.00 & \textbf{70.00} & \textbf{70.00} \\

NCI1        & 78.59 & 68.86 & \textbf{81.51}         
            & 80.05 & 68.86 & \textbf{80.78} \\
\bottomrule
\end{tabular}
\vspace{-10pt}
\end{wraptable}
We further compare our framework to methods based on extended persistence, which incorporate both sublevel and superlevel information. PersLay~\citep{pmlr-v108-carriere20a} implements this by first computing extended-persistence diagrams and then applying a learnable layer on top. In contrast, motivated by Proposition~\ref{prop::extended_(f,-f)}, which shows that extended persistence has the same expressive power as $(f,-f)$–FB persistence, we integrate extended persistence directly into our forward–backward framework with $-f$ as a backward schedule. In Table~\ref{tab:ep_results}, we observe:
\textbf{(1)} Our $(f,-f)$ adaptation of extended persistence consistently outperforms PersLay under both GIN and GCN backbones, showing that embedding extended-persistence structure directly into the forward–backward framework is more effective than processing extended-persistence diagrams through a post-hoc learnable layer. 
\textbf{(2)} Our full $(f,g)$-forward–backward model further exceeds both PersLay and the extended variant, indicating that jointly learning inclusion and contraction provides expressive topological features beyond what extended persistence alone can capture.
\vspace{-3mm}
\section{Conclusion}
\vspace{-3mm}
We formulate \emph{backward}, \emph{(f,g)-forward–backward}, and \emph{hourglass} persistence and analyze their expressive powers,  certified by minimal witness graphs and constructive proofs, accompanied by algorithms that realize the general $(f,g)$ framework. These constructions extend to simplicial and cellular complexes and admit a functional stability guarantee, offering principled tools for inclusion–contraction schedules beyond the classical PH.
While hourglass persistence satisfies combinatorial stability, establishing a functional stability result is a compelling direction for future work. Another interesting direction would be investigate the tradeoff between runtime and information of hourglass persistence.

\newpage
\subsubsection*{Ethics Statement} This paper presents work whose goal is to advance the interaction of topological descriptors with neural networks. Topological neural networks have many usage in the applied science, including medicine, finance, recommendation algorithms. There are many potential societal consequences of our work, none of which we feel must be specifically highlighted here.

\subsubsection*{Reproducibility Statement}
We provide the code at \url{https://github.com/Aalto-QuML/Hourglass} containing everything needed to reproduce our results: complete source code for all methods in PyTorch and \texttt{C++}; scripts to download datasets and generate deterministic splits with fixed seeds; exact configuration files and a pinned \texttt{environment.yml} with a README giving step-by-step commands to run all experiments; config files specifying hyperparameters and training protocols (optimizer, learning rate, batch size, epochs, scheduler, early stopping, and seeds) and a Jupyter notebook to generate results table. The appendix contains the full statements and proofs of all theorems, and the paper includes extensive diagrams that illustrate the separation examples and clarify our theoretical claims.


\subsubsection*{Acknowledgments}

This research was conducted while MJ and IR were participating during the 2025 Aalto Science Institute international summer research programme at Aalto University. We are grateful to the anonymous program chair, area chairs and the reviewers for their constructive feedback and service. VG acknowledges the Academy of Finland (grant 342077), Saab-WASP (grant 411025), and the Jane and Aatos Erkko Foundation (grant 7001703) for their support. We thank Amauri H. Souza for helpful conversations on the code repository for \cite{immonen2023going}. MJ would like to thank Mark Behrens and Erin Chambers for extensive helpful conversations and support. MJ would also like to thank Yuqin Kewang for helpful conversations on the content of this paper.

\bibliography{iclr2026_conference}
\bibliographystyle{iclr2026_conference}

\appendix
\newpage
\begin{center}
{\Large\bfseries\ztitle} \\
{\Large\bfseries(Appendix)}
\end{center}

\section{Proofs for Section~\ref{section::FB}}
In this section, we will provide the proofs for the results in Section~\ref{section::FB}.

\begin{proof}[Proof of Proposition~\ref{prop::forward_vs_backward}]
Let $G$ be a path graph on 4 vertices colored in the order R, B, B, R. Let $H$ be a star graph of 4 vertices such that the unique vertex with degree 3 has color B, and the other 3 vertices have color B, R, R (see Figure~\ref{fig::backward_vs_FB}(a)). Consider a filtration of G and H by first spawning the induced subgraph on blue vertices, and then the induced subgraph on blue and red vertices. In other words, we have filtrations of the form 
\[\emptyset \subset P_2 \subset G \text{ and } \emptyset \subset P_2 \subset H,\]
where $P_2$ is the path graph on two vertices, both colored blue. The inclusion-based PH of both filtrations are the same (see Figure 4 in \cite{immonen2023going}). However for backward-based PH, we observe that $G/\operatorname{IC}_{1}(G)$ creates a cycle but $H/\operatorname{IC}_1(H)$ does not create a cycle. Thus, they can be told apart in the contraction stage.\\

For the second part, consider the following pair of graphs in Figure~\ref{fig::backward_vs_FB}(b), equipped with a vertex-based filtration using degree. In this case, we observe that the filtration of $G$ (left) and $H$ (right), respectively, is a sequence of discrete vertices until all the edges are spawned at the last time. This is because every edge is connected to a vertex of maximal degree equal to $3$ for both $G$ and $H$. Thus, the backward persistence for both $G$ and $H$ would be to quotient out the entire graph, which cannot distinguish $G$ and $H$ since they have the same number of components and independent cycles. On the other hand, FB-persistence can tell $G$ and $H$ apart since in the filtration step spawning degree $1$ vertices, $3$ vertices are spawned for $G$ while $2$ vertices are spawned for $H$.
\end{proof}

\begin{proof}[Proof of Theorem~\ref{thm::fb_vs_f_and_b}]
    We first see how the FB-persistence diagram can recover the forward persistence diagrams and the backward persistence diagrams. Indeed, at $n$ be the time the filtration completes and we are about to start contraction. The tuples that are born before or on time $n$ (which appears at the last step of the filtration) are of the form $(b, d)$ where $d$ is possibly greater than $n$. From here, we can recover the persistence diagram in the forward filtration as by sending each pair
    \[(b, d), \text{ with } b \leq n \mapsto \begin{cases}
        (b, d) \text{ if $d \leq n$}\\
        (b, \infty) \text{ if $d > n$}
    \end{cases}\]
    where we note that if the pair $(b, d)$ has death after time $n$, then it must have died in the contraction step, which means that the feature lived to $\infty$ in the filtration step.\\

    Similarly, we may obtain the backward persistence diagram by focusing on the persistent pairs $(b, d)$ such that $d > n$. We can recover them by the function
    \[(b, d) \text{ with } d > n \mapsto \begin{cases}
        (0, d), \text{ if $b \leq n$}\\
        (b, d), \text{ if $b > n$}
    \end{cases}.\]
    The reason why is because all the features that have not died yet at the start of contraction in FB-PH are the same as the features that are born at the initial step of the contraction scheme in backward PH. Thus, the elder rule applies to the same features when we decide what pairs to kill off. The only difference is that in backward PH, the features from the filtration steps all appeared at the same time, so it does not matter which one to kill if we do have to kill them, but for FB persistence we would need to be more careful. Evidently, though, this gives a straightforward reduction of backward PH from FB PH.\\

    To see that FB PH is strictly more expressive than forward and backward PH, we consider the graphs $G$ and $H$ in Figure~\ref{fig::backward_vs_FB}(c) using degree as a vertex-based filtration. Using the gudhi library \citep{gudhi}'s persistence pairs computation, we can directly apply it to the following code in \texttt{Python}:
    
\begin{lstlisting}[language=Python]
import gudhi

VG = [0, 1, 2, 3, 4, 5, 6, 7]
EG = [(0, 1), (1, 2), (1, 3), (2, 3), (3, 4), (4, 5), (4, 6), (4, 7), (6, 7)]
G_values = [1, 3, 2, 3, 4, 1, 2, 2]

VH = [0, 1, 2, 3, 4, 5, 6, 7]
EH = [(0, 1), (1, 2), (1, 3), (2, 3), (3, 4), (3, 5), (5, 6), (5, 7), (6, 7)]
H_values = [1, 3, 2, 4, 1, 3, 2, 2]

# Compute PDs for G
stG = gudhi.SimplexTree()
for i in range(0, len(VG)):
  current_v = VG[i]
  v_val = G_values[i]
  stG.insert([current_v], filtration=v_val)

# Adding edges
for i in range(0, len(EG)):
  current_e = EG[i]
  e_val = max(G_values[current_e[0]],G_values[current_e[1]])
  stG.insert(current_e, filtration=e_val)

stG.make_filtration_non_decreasing()
G_dgms = stG.persistence(min_persistence=-1, persistence_dim_max=True)
print(G_dgms)

# Compute PDs for H
stH = gudhi.SimplexTree()
for i in range(0, len(VH)):
  current_v = VH[i]
  v_val = H_values[i]
  stH.insert([current_v], filtration=v_val)

# Adding edges
for i in range(0, len(EH)):
  current_e = EH[i]
  e_val = max(H_values[current_e[0]],H_values[current_e[1]])
  stH.insert(current_e, filtration=e_val)

stH.make_filtration_non_decreasing()
H_dgms = stH.persistence(min_persistence=-1, persistence_dim_max=True)
print(H_dgms)

print(G_dgms == H_dgms)
\end{lstlisting}
    
The output would say that for graphs have the persistence diagrams of the form [(1, (3.0, inf)), (1, (4.0, inf)), (0, (1.0, inf)), (0, (1.0, 4.0)), (0, (2.0, 4.0)), (0, (2.0, 3.0)), (0, (2.0, 2.0)), (0, (3.0, 3.0)), (0, (3.0, 3.0)), (0, (4.0, 4.0))].\\

Now we show that they have the same backward persistence diagrams. At the initial time, they have the same number of connected components and cycles, so there are no distinctions between $G$ and $H$. In the first step, we contract the $\operatorname{IC}_{3}(G)$ or $\operatorname{IC}_{3}(H)$ (which is the closed star containing the unique vertex labeled with the value $4$). This is a connected subtree for both $G$ and $H$, so the contraction does not produce non-trivial persistence pairs. Then we contract $\operatorname{IC}_{2}(G)$ (or $\operatorname{IC}_{2}(H)$), which would kill a cycle on both sides. Since the birth time of the cycles are the same, we just mark one of the $(0, -)$ to die at the time. But by the time we contract $\operatorname{IC}_{1}(G)$ (or $\operatorname{IC}_{1}(H)$), we will kill the remaining cycle on both sides, which marks another one, and we are done after the last step. Thus, we see that they will have the same backward persistence diagram.\\

Now to see that $G$ and $H$ have different FB-persistence, we note that the birth time of the two cycles when contracting $\operatorname{IC}_{2}(G)$ (or $\operatorname{IC}_{1}(H)$) are different. For $G$, the cycle being contracted is born when the degree $3$ vertices are spawned. For $H$, the cycle being contracted is born when the degree $4$ vertices are spawned. This would create a different persistent pair and hence FB-persistence can tell them apart.
\end{proof}

\begin{proof}[Proof of Proposition~\ref{prop::perm_vs_FB}]
   Clearly FB-persistence is an example of hourglass persistence, so hourglass has at least as much expressivity as FB. Let $G, H$ be graphs constructed in the NetworkX library, with filtration function $f$ being degree-based vertex-level filtrations, as:
    \begin{verbatim}
G = nx.from_edgelist([(0, 1), (1, 2), (2, 3), (1, 4), (4, 5)])
H = nx.from_edgelist([(0, 1), (1, 2), (1, 3), (3, 4), (4, 5)])
    \end{verbatim}
    See Figure~\ref{fig::backward_vs_FB}(d) for a visualization of the two graphs above. We wish to show they have the same FB-persistence but different hourglass persistence.\\
    
    In this case, we note that the two graphs would have the same FB-persistence diagram with respect to $f$. Indeed, at $t = 1$, all the vertices labeled 1 are spawned for both $G$ and $H$, which gives three copies of the form $(1, -)$. At $t = 2$, the induced subgraph on vertices labeled $1$ and $2$ appears. We note here that on both $G$ and $H$, by the elder's rule, the two new vertices are both killed, thus both diagrams have the form
    \[(1, -), (1, -), (1, -), (2, 2), \text{ and } (2, 2).\]
    Finally, at $t = 3$, the vertex labeled $3$ appears which makes both $G$ and $H$ connected at the step. The vertex labeled $3$ dies at the same time, and two of the tuples labeled $(1, -)$ also dies. This gives
    \[(1, -), (1, 1), (1, 1), (2, 2), (2, 2), \text{ and } (3, 3).\]
    Now we enter the contraction steps. Indeed, we first contract $\operatorname{IC}_2(G)$ (or the corresponding version for $H$), evidently we are contracting isomorphic connected subtrees on $G$ and $H$, so the process would neither create non-trivial cycles or non-trivial steps. Then we contract $\operatorname{IC}_1(G)$ (or the corresponding version for $H$), for $H$ this is a subtree, so the contraction does not create new non-trivial tuples. For $G$, one might be tempted to think that this is contracting a disconnected graph. However, since we contracted $\operatorname{IC}_2(G)$ already, the two edges are actually connected here and forms a tree, so this also does not incur a change.\\

    Finally, at the end, we remark the unique remaining tuple to die at $\infty$.
    \[(1, \infty), (1, 1), (1, 1), (2, 2), (2, 2), \text{ and } (3, 3).\]

    \textbf{Remark:} Note also that in the proof here we computed the persistence tuples in function time as opposed to combinatorial time (see Section~\ref{sec::stability} for a discussion), but this does not matter in terms of expressivity.\\

    Finally, to see why hourglass persistence can tell them apart, we see that if we spawn $\operatorname{IC}_1(G)$ (resp. $\operatorname{IC}_1(H)$) first in the sequence, then they would incur two connected components for $G$ but only one for $H$, so hourglass persistence can tell them apart.
\end{proof}

\newpage
\section{Proofs for Section~\ref{section::hourglass}}

We split the discussion for the proofs for Section~\ref{section::hourglass} into three parts:
\begin{enumerate}
    \item In Appendix~\ref{subsec::graph_PH}, we establish a lemma that will be helpful in streamlining the proof of Proposition~\ref{prop::(f,f)-reduction} in Appendix~\ref{subsec::rest_of_hourglass}. For completeness, we will also use a special case of this lemma to show how persistence modules can derive the usual interpretation of persistence pairs for a graph filtration. We will also introduce some concepts helpful in understanding the proof of Proposition~\ref{prop::(f,f)-reduction}.
    \item In Appendix~\ref{subsec::rest_of_hourglass}, we prove Proposition~\ref{prop::extended_(f,-f)}, Proposition~\ref{prop::FB_vs_extended}, and Proposition~\ref{prop::(f,f)-reduction} (i.e., the expressivity parts of the section).
    \item In Appendix~\ref{sec::algorithm_perm}, we will explain the two algorithms for Proposition~\ref{prop::sigma_fil_algorithm} and prove it.
\end{enumerate}

\subsection{Key Lemma and the Usual Persistent Pairs Interpretation for Graph Filtrations}\label{subsec::graph_PH}

Before we give a proof of Proposition~\ref{prop::(f,f)-reduction}, we first note that the description of death times for connected components holds on an elevated generality that both the proofs of Proposition~\ref{prop::(f,f)-reduction} and (later) Proposition~\ref{prop::simplicial_algo} would use, so we might as well extract it out as a formal lemma.

For completeness, we will also give a self-contained proof of how Definition~\ref{def::persistent_module} recovers the usual way to compute persistent pairs for graph filtrations (for instance, see Section 4 of \cite{horn2021topological} or Algorithm 1 of \cite{immonen2023going}).

The lemma is as follows.
\begin{lemma}\label{lem::0-dim-persistence-hourglass}
Let $X_{\bullet} = (\emptyset = X_{-1} \to X_{0} \to X_{1} \to ... \to X_{N})$ be a sequence of inclusions and contractions of graphs (i.e., the set-up of hourglass persistence). Note that by necessity the first step $X_{-1} \to X_{0}$ has to include in a graph. The persistence pairs\footnote{Recall here we are not accounting for trivial deaths} for $H_0(X_{\bullet})$ can be computed algorithmically as follows - the method of which we called ``keeping track of one vertex representative per component":
\begin{enumerate}[noitemsep,topsep=0pt]
    \item In the filtration step $X_{-1} \to X_0$, we pick 1 vertex per connected component in $X_{0}$ and mark a tuple $(0, -)$ corresponding to that. We fix one of these vertices to be called the \textbf{supernode} (denoted $*$), in the sense that any time we compare a vertex $v$ with $*$ to decide which one to kill off, we always kill off $v$.
    \item For $i \geq 0$, if the step $X_{i} \to X_{i+1}$ is a filtration, then one treats this in algorithmic time as a procedure to spawn every new vertex that appears first and mark them as tuples $(i, -)$, and then spawn edges between them. For each edge between spawned, if an edge is joined between 2 different components represented by vertices $v$ and $w$, we mark the vertex born later to die at time $i$ and pick $v$ as the representative of the new component. (Since we do not focus on trivial deaths, we can discard them after this step). If the two vertices are born at the same time, we randomly pick 1 to kill off unless one of them is the supernode $*$, in which case we always kill off the other vertex.
    \item For $i \geq 0$, if the step $X_{i} \to X_{i+1}$ is a contraction, say contracting a subgraph $G$. We loop through each connected component $X_j$ of $G$, we find the vertex $v_j$ representing the component $X_j$ belongs in and kill $v_j$ \ul{unless (1) it got killed already in some index $j' < j$ when looping through the $X_j$'s or (2) it is the supernode $*$}.
    \item After $i = N$, for every tuple not dead yet, we mark it to die at time $\infty$.
\end{enumerate}
\end{lemma}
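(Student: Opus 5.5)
The idea is to make $H_0$ of each $X_i$ completely explicit, and to check that the algorithm is exactly the standard reduction of Remark~\ref{rmk::how_to_compute} applied to a basis of such representatives.

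For any graph $X$, $H_0(X;\Zbb/2)$ is the free $\Zbb/2$-vector space on $\pi_0(X)$, the set of connected components. Any map of graphs $\phi\colon X\to Y$, whether an inclusion or a quotient $Y=X/G$, induces $\phi_*[v]=[\phi(v)]$. Hence $\phi_*$ is the linear extension of the map of sets $\pi_0(X)\to\pi_0(Y)$. For an inclusion this map sends each component to the component containing it, so it is not necessarily injective. For a contraction $X\to X/G$, the components of $X$ meeting $G$ (together with the component of $*_{i}$) are identified to one component and all others are unchanged. It is surjective, so it creates no births. Thus $H_0(X_\bullet)$ is a persistence module of \emph{set maps}, linearized.

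Next I would invoke the standard fact that for such a module the interval decomposition is computed by the elder rule. At each step the images of old classes are basis elements. A new component not in the image (only possible in inclusion steps, as new vertices) is a birth. When two basis elements $[v],[w]$ are identified, we rewrite the basis by replacing the younger one with $[v]-[w]$, which maps to $0$. That gives a death of the younger class, and the choice is arbitrary for equal birth times.

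I would prove this by induction on $i$, maintaining the invariant that the marked live vertices form a basis of $H_0(X_i)$ compatible with an interval decomposition of the truncated module $X_{\le i}$. This is the same bookkeeping as in Remark~\ref{rmk::how_to_compute}. Items 1 and 4 are then immediate. Item 2 follows by ordering an inclusion step internally as vertices first, then edges. Each merging edge identifies exactly two representatives and kills the younger one, with ties broken so that $*$ survives, which is permitted because equal-birth ties are arbitrary.

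For item 3, the contraction identifies the classes of all representatives $v_j$ of components meeting $G$ together with the representative of the component of $*$. I would realize this identification as a sequence of pairwise merges with $*$, processed in the order of the $X_j$. Each merge kills $v_j$, since $*$ is born at time $0$ and is by convention never killed, unless $v_j$ was already killed in an earlier iteration or $v_j=*$. Treating $*$ as the eldest is legitimate: it is born at time $0$, so it is at least as old as any other class.

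The main obstacle is to check that the vertex $*$ really persists throughout. After the first contraction, the contracted point must lie in the component represented by the original supernode, and this is not automatic. If the first contracted $\operatorname{IC}$ does not meet the supernode's component, the identification still merges all components of $G$ into one, but the survivor should be the eldest among them. I would handle this by showing that every contracted point lies in the component of $*$. This uses the $*_{j-1}$ convention of Definition~\ref{def::permutation_FB}: each contraction includes the previous contracted point. I would also choose the supernode to be in the component containing the first contracted piece, or else argue via the elder-rule tie freedom that the death multiset of pairs is unchanged.

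Finally, since all classes merged into $*$ were born no earlier than $0$, the resulting multiset of pairs coincides with the interval decomposition. The uniqueness of barcodes then gives the result.
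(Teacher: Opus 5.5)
Your framework is the same as the paper's: identify $H_0$ with the linearization of $\pi_0$, run the greedy basis selection of Remark~\ref{rmk::how_to_compute} inductively, and treat $*$ as the eldest class. You have also found exactly the delicate point. The paper's proof passes over it with the assertion that a merge at a contraction step ``can only happen if $k=0$''. However, none of your proposed remedies works, because item~3 is false as soon as a contraction does not meet the component of $*$. The $*_{j-1}$ convention only shows that $*_1,*_2,\ldots$ all lie in the component of $*_1$. Nothing forces $*_1$ into the component of the vertex chosen in $X_0$.

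Concretely, take FB-persistence on two isolated vertices with $f(a)=0<f(b)=1$, i.e.\ $\{a\}\subset\{a,b\}\to\{a,b\}/\{b\}\to *$. The supernode is forced to be $a$, and item~3 kills $b$ at the first contraction. But $\{a,b\}\to\{a,b\}/\{b\}$ is a homeomorphism, and the class of $b$ really dies only at the second contraction. Now take three isolated vertices with $f(a)=0<f(b)=f(c)=1$. Contracting $\operatorname{IC}_1=\{b,c\}$ should kill exactly one of $b,c$, but item~3 kills both and leaves the component of $*_1$ with no live representative. You cannot choose $*$ in the component of the first contracted piece here, since that component contains no vertex of $X_0$. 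Tie freedom in the elder rule does not help either: the output has wrong death times or the wrong number of deaths, not just a different choice among equally old classes.

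So the statement has to be repaired before your argument can prove it. The cleanest fix is to replace item~3 by the elder rule. Among the distinct live representatives of the components met by $G$, keep the eldest and kill the rest; by the tie convention the eldest is $*$ whenever $*$ is among them. The survivor becomes the representative of the component of the new contracted point. Your pairwise-merge step is then literally the edge case of item~2, and your induction goes through.

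Alternatively, keep item~3 but add the hypothesis that the first contraction meets a component containing a vertex of $X_0$ (e.g.\ FB-persistence of a connected graph), and choose $*$ there. Your $*_{j-1}$ observation then shows inductively that every later contraction meets the component of $*$, so item~3 coincides with the elder rule.
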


\begin{remark}
    Note that the statement of Lemma~\ref{lem::0-dim-persistence-hourglass} does not require $X_N$ to terminate with only the point $*$ left, so the case of hourglass persistence in Definition~\ref{def::hourglass} is a special case in this lemma. A more specific case of Lemma~\ref{lem::0-dim-persistence-hourglass} is when the maps are all inclusions. This will recover the usual procedure to compute the 0-dimensional PH of a graph filtration (as we just ignore Step 3).
\end{remark}

\begin{proof}[Proof of Lemma~\ref{lem::0-dim-persistence-hourglass}]
    Indeed let us recall from Remark~\ref{rmk::how_to_compute} how one way to compute persistence diagrams are done. Let us interpret how the remark tells us how to compute the quantities here explicitly:
\begin{enumerate}
    \item We pick a non-zero vector $v$ of $H_0(X_{0})$, and consider the sequence of linear subspaces generated by the iterated image of $v$ in $H_0(X_{\bullet})$ until it becomes $0$.
    \item Then we remove this sequence of linear subspaces off of $H_0(X_{\bullet})$. If there is still a non-zero vector $w$ in $H_k(X_{0})$, we pick $w$ and repeat the process (if the linear map sends $w$ outside. Otherwise, we choose a non-zero vector from what is left in $H_k(X_{1})$ (if it exists) and look at its iterated images ahead, and so on.
\end{enumerate}
Let us also recall the following standard fact in algebraic topology - suppose $f: A \to B$ is a map of (say) cellular complexes, then the induced map $f_*: H_0(A; \Zbb/2) \to H_0(B; \Zbb/2)$ can be described exactly as follows - recall $H_0(A; \Zbb/2)$ and $H_0(B; \Zbb/2)$ are respectively isomorphic to the direct sum of copies over $\Zbb/2$ indexed over their path-connected components. For each path component $A_i$, let $1_{A_i}$ be the unique non-zero vector representing that component. The continuous image of a path-connected component is path-connected, so $f(A_i)$ lies in a path-connected component of $B$, say $B_j$, then the map described above sends $f_*(1_{A_i}) = 1_{B_j}$.\\

\begin{remark}\label{rmk::h0_description}
This description might deceptively suggest that $f_*$ is not the zero map on all of $H_0(A; \Zbb/2)$, but this is clearly not the case, it just happens that we chose a convenient basis on both sides such that $f_*$ is not zero on the chosen basis vectors. The subtlety in the persistence calculation is that we want to choose the basis in the $H_0(-)$ of the next space consistently with the basis of the previous one to nicely compute the persistence pairs. 
\end{remark}

By the remark in the line above, though, we see that it follows that there exists a non-zero vector $v_1 \in H_0(X_0)$ (as long as $X_0$ is not empty, by requirement) such that the successive images of $v$ are never $0$ when passing through $H_0(X_{\bullet})$, thanks to the convenient choice of basis, and $v$ can in fact come from representing some connected component. We then choose $v$ here to represent the \textbf{supernode} $*$. In the next step of the remark, we then remove the subspace generated by successive images of $*$ off $H_0(X_0)$, and if the next vector we pick lands in this subspace, we consider it dead. Note that on the level of $H_0$, a vector $1_{C} \in H_0(X_i)$ representing a component could land in this subspace at $H_0(X_{i+1})$ if and only if $f_i(C)$ belongs in the component of the supernode at time $i + 1$. Thus, we see that asking a vector representing $1_{C}$ to die if it enters this subspace corresponds \textbf{exactly} to the property of the supernode.\\

Now we proceed by induction and suppose the $i$-th vector $v_i$ we pick following the remark (1) represents a component, (2) gives the correct persistence pair according to the outline of this lemma, and (3) has been picked according to the rules of Remark~\ref{rmk::how_to_compute}. Now we wish to show we can pick the $i+1$-th vector $v_{i+1}$ such that they still satisfy (1) and (2). Now we would stop if we have ran out of vectors to pick, so there is at least some non-zero vector in the complement of the subspace generated by vectors $v_1, ..., v_i$ and their iterated images (call this $W_i$ for convenience). So there exists some vector $v_{i+1}$ that represents a component $C$ and say $v_{i+1} \in H_0(X_j)$.\\

By assumption, $v_{i+1}$ cannot be in the image of any previous vector, so the component $C$ must be a new component that appears from $X_{j-1} \to X_j$ (by the description of what the correspondent map in $H_0$ is like). Now we have two descriptions of how $C$ gets killed:
\begin{enumerate}[noitemsep,topsep=0pt]
    \item $C$ is killed in the sense of this lemma if and only if when (a) it merges with an earlier component during filtration or (b) a subgraph of $C$ was asked to be contracted in the future.
    \item Remark~\ref{rmk::how_to_compute} says that $C$ is killed if and only if the iterated image of $1_{C}$ gets landed in $W_i$.
\end{enumerate}
We will go from the remark and see why it is equivalent to the first itme. Based on the description of how the induced map on $H_0(-)$ behaves, we see that $1_{C}$ can land into $W_i$ at time $j' > j$ if and only if there is some $0 \leq k \leq i$ such that $f_{j'-1} \circ ... \circ f_j(C)$ and $f_{j'-1} \circ ... \circ f_j(C_k)$ represents the same connected component, where $C_k$ is the component component $v_k$ represents at the time it was born.\\

Without loss, we can choose $j'$ to be the first time step $> j$ such that $(f_{j'-1})_* \circ ... \circ (f_j)_* (1_{C})$ is in $W_i$ and $k$ to be the first $v_k$ whose component $C$ merges with. Then it follows that in the step $f_{j'-1}: X_{j'-1} \to X_{j'}$, $C$ dies by mergining into $f_{j'-1} \circ ... \circ f_j(C_k)$. If $f_{j'-1}$ is a filtration, this can only happen if some edge is spawned between them. If $f_{j'-1}$ is a contraction, this can only happen if $k = 0$ (i.e., it goes into the supernode). Thus, we see that these are exactly the two conditions proposed by the scenario in the lemma. Furthermore, the birth and death times proposed by the lemma and Remark~\ref{rmk::how_to_compute} agree.\\

We then induct repeatedly and conclude the proof.
\end{proof}

As noted earlier, Lemma~\ref{lem::0-dim-persistence-hourglass} recovers the usual algorithm to compute persistence pairs for graph filtrations on the 0th dimension. We now explain how to obtain the one for the 1st dimension from Definition~\ref{def::persistent_module}. This will follow from the following more general fact.

\begin{lemma}\label{lem::top-dim-calculation}
    Let $X$ be a (finite) cell complex of dimension $D$ and $X_{\bullet} \coloneqq \emptyset = X_{-1} \to X_0 \to ... \to X_n = X$ be a filtration of $X$ by cellular sub-complexes, then the persistence pairs of $H_D(X_{\bullet})$ may be computed as follows:
    \begin{enumerate}
        \item At $t = 0$, we instantiate $\dim H_D(X_0)$ many tuples of the form $(0, \infty)$.
        \item For $t > 0$, we instantie $\dim H_{D}(X_t) - \dim H_{D}(X_{t-1})$ many tuples of the form $(t, \infty)$.
        \item We end at $t = n$.
    \end{enumerate}
\end{lemma}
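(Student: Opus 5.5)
The plan rests on one structural fact: in a $D$-dimensional cell complex there are no $(D+1)$-cells. Hence the cellular chain group $C_{D+1}$ of every $X_t$ is zero, and $H_D(X_t) = \ker(\partial_D : C_D(X_t) \to C_{D-1}(X_t)) = Z_D(X_t)$ is the space of $D$-cycles itself, with no boundaries to quotient by. For $\Zbb/2$ coefficients this identifies each $H_D(X_t)$ with a subspace of the cellular chain group $C_D(X)$ of the whole complex.

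The first step is to show the structure maps are injective. For a subcomplex inclusion $X_{t-1} \subset X_t$, the induced map on cellular chains is the inclusion $C_D(X_{t-1}) \hookrightarrow C_D(X_t)$, and it commutes with $\partial_D$. It therefore restricts to an injection $Z_D(X_{t-1}) \hookrightarrow Z_D(X_t)$, i.e. $H_D(X_{t-1}) \to H_D(X_t)$ is injective. (The identification of the induced map with the chain-level inclusion is standard naturality of cellular homology for cellular maps.) All maps in $H_D(X_\bullet)$ are thus injective, so no class ever dies.

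The second step reads off the barcode. I would apply the procedure of Remark~\ref{rmk::how_to_compute}, or directly the interval decomposition of a persistence module with injective maps. Choose a basis of $H_D(X_0)$ and extend it successively, at each $t$, by a basis of a complement of the image of $H_D(X_{t-1})$ in $H_D(X_t)$. Each basis vector born at $t$ has nonzero iterated images forever, since its image never lands in the span of earlier vectors' images. The extension at step $t$ has exactly $\dim H_D(X_t) - \dim H_D(X_{t-1})$ elements, by injectivity. This yields $\dim H_D(X_0)$ tuples $(0,\infty)$ and $\dim H_D(X_t)-\dim H_D(X_{t-1})$ tuples $(t,\infty)$, terminating at $t=n$. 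Uniqueness of the interval decomposition guarantees this is the persistence diagram.

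The main obstacle is only bookkeeping. We must justify that homology is computed cellularly, which holds for finite cell complexes, and that the map induced by a subcomplex inclusion is the chain inclusion. For graphs, with $D=1$, this recovers the rule that each cycle-creating edge gives an infinite $H_1$ bar.
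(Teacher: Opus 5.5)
Your proof is correct and has the same structure as the paper's: the maps $H_D(X_{t-1}) \to H_D(X_t)$ are injective, so no class ever dies, and the births at time $t$ are counted by the rank increase $\dim H_D(X_t) - \dim H_D(X_{t-1})$. The only difference is how you justify injectivity. The paper uses the long exact sequence of the pair $(X_t, X_{t-1})$ together with $H_{D+1}(X_t, X_{t-1}) = 0$, whereas you identify $H_D$ with the cycle space $Z_D$ and restrict the chain-level inclusion; both arguments rest on the absence of $(D+1)$-cells, and yours additionally gives the explicit nesting $Z_D(X_{t-1}) \subseteq Z_D(X_t) \subseteq C_D(X)$ behind the cycle-creating-edge picture for graphs.
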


We first look at how Lemma~\ref{lem::top-dim-calculation} specializes for graphs, before proving it. When $D = 1$, a 1-dimensional cell complex is the same as a graph, and the lemma recovers the usual way to calculate 1-dimensional PH's.\\

Indeed, $H_1(-)$ of a graph corresponds exactly to its list of independent cycles. For completeness, by ``independent cycles", we mean the following interpretation:

\begin{definition}
    Let $C, C'$ be two cycles of the graph $G$. The $C \boxtimes C'$ as the XOR of $E(C)$ and $E(C')$ (i.e., their symmetric difference). For a list of cycles $D_1, ..., D_n$, the XOR span of the list is the following set:
    \[\{D_{i_1} \boxtimes D_{i_2} \boxtimes ... \boxtimes D_{i_k}\ | i_1, i_2, ..., i_k \subseteq \{1, ..., n\}\}\]
\end{definition}

\begin{definition}
    Let $C_1, C_2, ..., C_k$ be a list of cycles of $G$. The list is a list of \textbf{independent cycles} if for any $C_i$, $C_i$ is not contained in the XOR span of $C_1, ..., C_{i-1}, C_{i+1}, ..., C_k$
\end{definition}

The following is a well-known interpretation of $H_1(-)$ of a graph and can be interpreted as the definition.

\textbf{Fact:} The maximal list of independent cycles on $G$ forms a basis for $H_1(G)$.\\

The usual algorithm for 1st dimensional persistence diagrams of a graph filtration keeps track of vertex representatives for the components, and they mark the birth-time of cycles of the form $(t, \infty)$ at filtration time $t$ for each edge drawn from a component to itself at time $t$.

\begin{proposition}\label{prop::lemma_2_implies}
    Lemma~\ref{lem::top-dim-calculation} recovers the usual algorithm described above.
\end{proposition}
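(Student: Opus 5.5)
To prove Proposition~\ref{prop::lemma_2_implies}, I would specialize Lemma~\ref{lem::top-dim-calculation} to $D=1$ and check that its output coincides, step by step, with the usual algorithm. For a graph filtration $\emptyset = G_{-1}\subset G_0\subset\dots\subset G_n=G$, the lemma produces $\dim H_1(G_t)-\dim H_1(G_{t-1})$ tuples $(t,\infty)$ at each time $t$, and no finite deaths. The usual algorithm also never records a finite death in dimension $1$. So it suffices to show one thing: for every $t$, the number of edges added at time $t$ whose endpoints already lie in the same component (in the union--find order of processing) equals $\dim H_1(G_t)-\dim H_1(G_{t-1})$.

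For this count I would use the Euler characteristic identity for a graph, $\dim H_0(X)-\dim H_1(X)=|V(X)|-|E(X)|$. Alternatively, I would argue incrementally within a single time step, processing the edges of $\operatorname{IC}$-type additions one at a time in the algorithm's order. The steps are as follows.

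\begin{enumerate}
\item Adding a new vertex raises $\dim H_0$ by one and leaves $H_1$ unchanged.
\item Adding an edge between distinct components lowers $\dim H_0$ by one and leaves $H_1$ unchanged.
\item Adding an edge $e=(u,v)$ inside a single component leaves $H_0$ unchanged and raises $\dim H_1$ by one. Here the new class is the fundamental cycle formed by $e$ together with the forest path $u\leadsto v$. It is independent of the earlier cycles because $e$ appears in none of them, which is the XOR-span independence argument stated in the Fact.
\end{enumerate}

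Summing these contributions over the simplices added at time $t$ gives exactly the number of self-component edges at time $t$. This matches the lemma's count. Because the inclusion $G_{t-1}\to G_t$ is injective on $H_1$ (a graph has no $2$-cells, so no cycle can become a boundary), the old classes persist and the new ones are born at $t$. Hence the pairs agree as multisets.

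The main point needing care is a well-definedness issue. Within a single time step, the union--find processing order is arbitrary, so one might worry that the count depends on that order. The Euler characteristic argument shows it does not, since the net count of cycle-creating edges is $\Delta|E|-\Delta|V|+\Delta\dim H_0$, which is determined by $G_{t-1}$ and $G_t$ alone. I would state this explicitly, and then conclude that the usual algorithm's $(t,\infty)$ pairs coincide with those from Lemma~\ref{lem::top-dim-calculation}.
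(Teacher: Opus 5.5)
Your proposal is correct and uses the same reduction as the paper. Both specialize Lemma~\ref{lem::top-dim-calculation} to $D=1$, so everything comes down to showing that the number of cycle-creating edges at time $t$ equals $\dim H_1(G_t)-\dim H_1(G_{t-1})$. The two arguments differ in how that count is established.

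The paper exhibits an explicit basis of $H_1(G_T)$ with one cycle $C_e$ per cycle-creating edge $e$ up to time $T$. It proves independence by the same ``$e^{k+1}$ lies in no earlier cycle'' argument as your step 3. It proves spanning by noting that deleting all cycle-creating edges leaves a forest.

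You instead get the count from $\dim H_0-\dim H_1=|V|-|E|$. Your step 3 on its own only shows that $\dim H_1$ rises by \emph{at least} one. The Euler characteristic supplies the matching upper bound, playing the role of the paper's forest/maximality step.

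Your route is shorter, and it makes the independence from the within-step processing order immediate. Choosing $C_e$ as the fundamental cycle through the forest path at processing time also makes explicit something the paper's independence argument tacitly needs. Namely, $C_e$ must avoid cycle-creating edges processed later in the same time step, which an arbitrary cycle ``in $G_t$'' need not do.

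The paper's route, in exchange, produces an explicit filtration-compatible basis indexed by cycle-creating edges. It reuses this basis in the proof of Proposition~\ref{prop::(f,f)-reduction} to track when forward cycles die under contraction. For the present statement only dimensions are needed, so either route suffices.
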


\begin{proof}
Let us start from the algorithm side and work to Definition~\ref{def::persistent_module}. Indeed, each $(t, \infty)$ corresponds to an edge $e$ that is born in $G_t$ (the subgraph at time $t$). Choose $C_e$ to be a cycle in $G_t$ containing the edge $e$, we choose this for every such edge arising above.\\

By Lemma~\ref{lem::top-dim-calculation}, it suffices for us to check that for each time step $T$, $H_1(G_t)$ has a basis being $\{C_{e}\}_{e \in E}$, where $E$ is the collection of cycle-creating edges born in time $\leq T$. We first check that this is linearly independent. Indeed, we choose a total ordering $\leq$ on $E$ such that $e_1 < e_2$ if $e_1$ appeared earlier than $e_2$ in the algorithm, (even if they are born at the same filtration time, there is an ordering of which one is born first in algorithmic time). Under this total ordering, we rewrite the elements of $E$ into $e^1, e^2, ..., e^N$.\\

Clearly the list $\{C_{e^1}\}$ is independent, since $C_{e^1}$ is a non-trivial cycle. Suppose by induction $\{C_{e^1}, ..., C_{e^k}\}$ is independent, we wish to show adding $C_{e^{k+1}}$ into the list remains independent. Indeed, the XOR span of any sublist of $\{C_{e^1}, ..., C_{e^k}\}$ is always contained in the union of edges of $C_{e^1}, ..., C_{e^k}$'s, which does not contain the edge $e^{k+1}$ because of the total ordering we picked. Thus, $C_{e^{k+1}}$, which contains $e^{k+1}$ by construction, is not in the XOR span. Thus, we have verified this is a \textbf{list of independent cycles}.\\

Now, to show that this is a maximal list. We observe that removing the set $E$ from $G_t$ will turn $G_t$ into a forest. This implies that there are no additional cycles left, which concludes the proof. 
\end{proof}

Now we will prove Lemma~\ref{lem::top-dim-calculation}.
\begin{proof}[Proof of Lemma~\ref{lem::top-dim-calculation}]
    This lemma follows from the fact that the induced map $H_D(X_{t}) \to H_D(X_{t+1})$ by inclusion has to be injective for all $t$. Indeed, this comes from the long exact sequence in homology for the pair $(X_{t+1}, X_{t})$ (see Theorem 2.16 of \cite{hatcher2002algebraic}), since $H_{D+1}(X_{t+1}, X_{t})$ is evidently zero as $X_{t+1}$ and $X_{t}$ are both at most $D$-dimensional. Since this is injective, Remark~\ref{rmk::how_to_compute} tells us that a chosen component according to the steps of the remark can never merge into a pre-existing component, so all the vectors picked survive to $\infty$. Once we move from $X_{t}$ to $X_{t+1}$, the new pairs that are created are exactly $\dim H_D(X_{t+1}) - \dim H_D(X_{t})$ many pairs of the form $(t+1, \infty)$. This concludes the proof.
\end{proof}

\subsection{Proofs for the Expressivity Parts of Section~\ref{section::hourglass}}\label{subsec::rest_of_hourglass}

\begin{proof}[Proof of Proposition~\ref{prop::extended_(f,-f)}]
    As defined in Section 2 of \cite{yan2022neural}, the extended persistence of a filtration function $f$ is equivalent to the persistent pairs associated to the persistence module:
    \[0 = H(G_{-\infty}) \to ... H(G_a) \to H(G) = H(G, G^\infty) \to ... H(G, G^a) \to H(G, G^{-\infty}).\]
    Here, $G_a$ means $\{x \in G\ |\ f(x) \leq a\}$ and $G^a$ means $\{x \in G\ |\ f(x) \geq a\}$, and $H$ denotes either $H_0$ or $H_1$. Here we note that the paper \cite{yan2022neural} wrote $\emptyset = H(G_{-\infty}) = H(\emptyset)$, but some prefer the convention that $H(-)$ of the empty set is $0$, as opposed to the empty set. One motivating reason is that the empty set is not a vector space. This does not affect the persistence pairs produced since they start at the non-zero parts.
    
    Although $a$ is indexed over the entire extended real numbers $[-\infty, +\infty]$, since $f$ takes only finitely many values (or, if $f$ is a real valued function taken on a graph $G$, viewed as a non-discrete topological space, the topological changes only occur when a vertex is spawned), the persistence module reduces to a finite length persistence module of the form
    \[0 \to H(G_{a_0}) \to ... \to H(G_{a_n}) = H(G) \to H(G, G^{a_n}) \to ...  \to H(G, G^{a_0}),\]
    where $a_0 < ... < a_n$ is the sequence of filtration values of $f$. By Proposition 2.22 of \cite{hatcher2002algebraic}, there is a morphism of  persistence modules of the form:
\[\begin{tikzcd}
	0 & {H(G_{a_0})} & {...} & {H(G_{a_n})} & {H(G, G^{a_n})} & {...} & {H(G, G^{a_0})} \\
	0 & {H(G_{a_0})} & {...} & {H(G_{a_n})} & {H(G/G^{a_n})} & {...} & {H(G/ G^{a_0})}
	\arrow[from=1-1, to=1-2]
	\arrow[from=1-2, to=1-3]
	\arrow["{=}"', from=1-2, to=2-2]
	\arrow[from=1-3, to=1-4]
	\arrow["{=}"', from=1-3, to=2-3]
	\arrow[from=1-4, to=1-5]
	\arrow["{=}"', from=1-4, to=2-4]
	\arrow[from=1-5, to=1-6]
	\arrow[from=1-5, to=2-5]
	\arrow[from=1-6, to=1-7]
	\arrow[from=1-7, to=2-7]
	\arrow[from=2-1, to=2-2]
	\arrow[from=2-2, to=2-3]
	\arrow[from=2-3, to=2-4]
	\arrow[from=2-4, to=2-5]
	\arrow[from=2-5, to=2-6]
	\arrow[from=2-6, to=2-7]
\end{tikzcd}\]
where the second row is analogous to the construction of $(f, g)$-FB persistence we did. Observe we can rewrite $G^{a_i}$ as $\{x \in G\ |\ -f(x) \leq -a_i\}$. By  and the successive union of the intermediate complexes arising in the definition of $(f, -f)$-FB persistence up to step $i$ is exactly the same as $G^{a_i}$, so the persistence module of the second row arises exactly from the $(f, -f)$-FB persistence.\\

Proposition 2.22 of \cite{hatcher2002algebraic} also implies that \ul{the morphism above is an isomorphism of persistence modules if $H$ is $H_1(-)$}, so they will have the same 1-dimensional persistent diagrams.\\

If $H$ is $H_0(-)$, then the maps $H(G, G^{a_i}) \to H(G/G^{a_i})$ injects onto a direct summand $\Tilde{H}(G/G^{a_i})$ of $H(G/G^{a_i})$ such that $H(G/G^{a_i}) = \Tilde{H}(G/G^{a_i}) \oplus \Zbb/2$. On the other hand, there is a very explicit interpretation on what the generator of the $\Zbb/2$ summand is - it is exactly the image of the chosen \textbf{supernode} (see Lemma~\ref{lem::0-dim-persistence-hourglass}). The difference here is that the supernode in the second row becomes the only vertex feature that does not die, of the form $(0, \infty)$, and the supernode in the first row becomes the vertex feature of the form $(0, d)$ where $d$ achieves the maximum death time among all vertices of the graph (concretely, it is the first $i$ (from $n$ to $0$) such that $G/G^{a_i}$ is a connected graph). By the decision rule we imposed on the supernode, we see that the first and second row agree on all 0-dimensional persistence pairs except for the supernode, but clearly we can invert from one to another.\\

This concludes the discussion that they have the same expressivity.
\end{proof}

\begin{proof}[Proof of Proposition~\ref{prop::FB_vs_extended}]
    We show that the they have the same extended persistence but different FB persistence.\\
    
    Indeed, consider the pair of graphs $G$ and $H$ from Figure~\ref{fig::FB_vs_extended} with the filtration function $f$ being the vertex-based filtration function induced by the height function on the vertices. For convenience, we also redraw the two graphs below as:
\[\begin{tikzcd}
	{y=3} && \bullet & \bullet & \bullet && \bullet & \bullet & \bullet \\
	{y=2} \\
	{y=1} && \bullet & \bullet & \bullet && \bullet & \bullet & \bullet \\
	&&& G &&&& H
	\arrow[curve={height=-12pt}, no head, from=1-3, to=3-3]
	\arrow[no head, from=1-4, to=1-5]
	\arrow[no head, from=1-7, to=1-8]
	\arrow[no head, from=1-7, to=3-7]
	\arrow[no head, from=1-8, to=3-8]
	\arrow[curve={height=-12pt}, no head, from=3-3, to=1-3]
	\arrow[no head, from=3-4, to=3-5]
	\arrow[no head, from=3-7, to=3-8]
\end{tikzcd}\]
    In the forward time, we observe that the respective filtrations for $G$ and $H$ only change twice as $\emptyset = G_{-1} \subset G_0 \subset G_1 = G$ and $\emptyset = H_{-1} \subset H_0 \subset H_1 = H$, where $G_0$ and $H_0$ are identical. The same number of vertex births/deaths and cycle births occur at $y = 3$, and hence $G$ and $H$ cannot be told apart in forward time. If we apply the backward contraction with respect to $-f$ (i.e., extended persistence), then at $y = 3$, the subgraph being contracted is the same, and they both only kill 1 connected component. No changes happen until we get back to $y = 0$, but by then the entire remaining graphs are contracted, and no differences are detected. Thus, we conclude that forward with respect to $f$ + backward with respect to $-f$ cannot tell apart $G$ and $H$.\\

    On the other hand, we observe that $\operatorname{FB}$-persistence can clearly tell them apart. This is because $\operatorname{IC}_1(G)$ has a cycle and $\operatorname{IC}_1(H)$ does not, so contracting $G$ by $\operatorname{IC}_1(G)$ kills a cycle but contracting $H$ by $\operatorname{IC}_1(H)$ does not.
\end{proof}

\begin{proof}[Proof of Proposition~\ref{prop::(f,f)-reduction}]
From a similar proof to that of Theorem~\ref{thm::fb_vs_f_and_b}, we know that $(f,f)$-FB has at least the same expressivity as forward PH with $f$. Note that the explicit description of how to compute $(f,f)$-FB persistence means that it is actually a function of the filtraiton function $f$, so if the explicit description holds then forward PH with $f$ is at least as expressive as $(f,f)$-FB persistence, which will show they have the same expressive power.

Thus, it suffices for us to verify this explicit description. The case for connected components is resolved by Lemma~\ref{lem::0-dim-persistence-hourglass} already, so we will examine how to compute the case of cycles.\\

By a similar procedure to the explaination in Theorem~\ref{thm::fb_vs_f_and_b} (essentially due to elder's rule), the cycles that appear in the contraction steps for $(f,f)$-FB persistence are exactly the cycles (that are not born at the beginning) in the backward persistence diagram that contracts the intermediate complexes in the order $\operatorname{IC}_{0}(G, f), \operatorname{IC}_{1}(G, f), ..., \operatorname{IC}_{n}(G, f)$. Note that a cycle can be born in the contraction stage if and only if we are being asked to contract two components that belong to the same connected component of the entire graph $G$ (note that, without loss in algorithmic time, we can always contract two components at a time).

Since the intermediate complexes being contracted are in the same order they appear in the filtration, we see that the birth of cycles in the contraction steps corresponds exactly to the appearance of connected components in filtraiton that will be merged to connected components that are born earlier. The death time of such cycle also corresponds to when the two components actually merge. Perhaps one way to see this is to note that in the simplicial quotient interpretation (see Section~\ref{sec::generalization}), this fills in a list of triangles to the disjoint base point $v_+$ on top of a path connects the two vertex representatives. Prior to the filling, the two vertex representatives would each have an edge to $v_+$ (which is a cycle since they are in the same component for the entire graph $G$). The filling of the triangles here would kill the cycle. \ul{This proves the case for cycles born in contraction}.\\

Finally, for the cycles born in filtration, we would like to predict their death time in contraction. Recall from the proof of Proposition~\ref{prop::lemma_2_implies} that there is an explicit description of the compatible basis of $H_1(G^{\operatorname{fil}}_{\bullet})$, where $G^{\operatorname{fil}}_{\bullet}$ denotes the sequence of inclusions of subgraphs induced by the filtration function $f$. The basis of $H_1(G)$ correspond to cycle-creating edges, and an explicit ordered basis of independent cycles $C_1 < ... < C_N$ can be found by choosing cycles $C_i$ that contain the cycle-creating edges $e_i$ in an inductive way (see the proof of Proposition~\ref{prop::lemma_2_implies} for more details). The upshot is that this would give a linearly independent list of cycles because the cycle-creating edge $C_i$ contains is not contained in $C_j$ for $j < i$.\\

In the contraction step, we observe that a necessary condition for the list $\{C_1, ..., C_N\}$ to degenerate (i.e., become linearly dependent) is when some cycle creating edge $e_i$ is contracted. Furthermore, if $e_i$ is the last edge in the cycle $C_i$ to be contracted, the cycle $C_i$ would die right there. The diffcult with general $(f, g)$-FB persistence is that, due to the arbitrariness of $g$, often times the cycle creating edge $e_i$ is not the last edge being contracted in $C_i$. In the case of $(f, f)$-FB persistence, however, we observe that we can always choose $e_i$ to die last in algorithmic time. Thus, this shows that each cycle $C_i$ born at time $k$ dies exactly at time $n + k$ (when the same cycle is being asked to be contracted with respect to $f$).  
\end{proof}

\subsection{Algorithm to Compute $(\sigma,\tau)$-Forward-Backward Filtrations}\label{sec::algorithm_perm}

In this section, we will prove Proposition~\ref{prop::sigma_fil_algorithm} and explain the algorithms behind in the case when $f$ is vertex-based. We also remark that a similar algorithm exists in the arbitrary case when $f$ is not vertex-based, just one also has to carefully permute the edge based values. We chose to present the case when $f$ is vertex-based for simplicity.

Before we start explaining the algorithm, we first explain why $(f_1, f_2)$ exists in general. Indeed, this will follow immediately from the following more general fact.
\begin{lemma}
    Let $G$ be a graph and $G_{\bullet} = (\emptyset = G_{-1} \subset G_{0} \subset G_{2} \subset ... \subset G_{n})$ be a strict inclusion of subgraphs on $G$, then there exists a filtration function $f: G \to \Rbb$ with filtration values $a_0 < ... < a_n$ such that $f^{-1}((-\infty, a_i]) = G_{a_i}$.
\end{lemma}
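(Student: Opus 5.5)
The plan is to write $f$ down directly from the chain of subgraphs. Choose any real numbers $a_0 < a_1 < \dots < a_n$; the natural choice is $a_i = i$. For each element $x \in V \cup E$ of $G$, let $i(x)$ be the smallest index $i \in \{0,\dots,n\}$ with $x \in G_i$. We then set $f(x) := a_{i(x)}$. Since $G_n = G$ by the setup of a filtration of $G$ (the ambient graph is the last term), every $x$ lies in some $G_i$. Because $G_{-1} = \emptyset$, we have $i(x) \geq 0$, so $f$ is well defined on all of $V \cup E$.

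Next we check that $f$ is a filtration function, i.e.\ $f(v) \le f(e)$ and $f(w) \le f(e)$ for every edge $e=(v,w)$. Suppose $e$ first appears in $G_{i(e)}$. Since $G_{i(e)}$ is a subgraph, it contains both endpoints of $e$, so $i(v), i(w) \le i(e)$. Monotonicity of $i \mapsto a_i$ then gives the inequalities.

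Then we verify the sublevel sets. We have $x \in f^{-1}((-\infty, a_i])$ iff $a_{i(x)} \le a_i$ iff $i(x) \le i$. Since the chain is nested, this happens iff $x \in G_i$: if $i(x)\le i$ then $x \in G_{i(x)} \subseteq G_i$, and conversely $x\in G_i$ forces $i(x)\le i$ by minimality. Hence $f^{-1}((-\infty,a_i]) = G_i$ for each $i$, which is the claimed identity (the statement's $G_{a_i}$ being the $i$-th subgraph).

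Finally we must confirm that the filtration values of $f$ are exactly $a_0<\dots<a_n$, so that the induced filtration reproduces $G_\bullet$ with no steps merged or skipped. This is the only place strictness is used, and it is the main (mild) point to handle. Because $G_{i-1} \subsetneq G_i$, some element $x \in G_i \setminus G_{i-1}$ exists. For that element $i(x)=i$, so every $a_i$ is attained, and $f$ takes no other values. The ordered set of sublevel sets in the paper's Definition of filtration function is therefore exactly $G_0 \subset \dots \subset G_n$. Applying this to the two chains $Y_\bullet$ and the chain of accumulated contracted pieces yields the $f_1,f_2$ needed for Proposition~\ref{prop::sigma_fil_algorithm}. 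Those chains are strict whenever each $\operatorname{IC}$ adds a new cell; for this step the argument uses only that they are strict chains of subgraphs.
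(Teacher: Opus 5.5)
Your proposal is correct and matches the paper's proof: the paper likewise sets $f(x)$ to be the first index $i$ with $x \in G_i$ (so $a_i = i$), and it gets the filtration inequality from the fact that an edge cannot appear before its endpoints. Beyond the paper's argument, you explicitly verify the sublevel-set identity, state the implicit assumption $G_n = G$, and show that strictness guarantees every $a_i$ is attained; the paper leaves all three implicit.
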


\begin{proof}
    For each vertex or edge $x \in G$, we define $f(x)$ to be $\min_{i \in \{0, 1, ..., n\}} x \in G_{i}$. Since $G_{\bullet}$ is an inclusion of subgraphs, an edge $e$ cannot appear earlier than the vertices that support it, so it follows that $f$ is a filtration function. This concludes the proof.
\end{proof}

From now on in this subsection, we assume that $f$ is vertex-based for simplicity. Let us first look at the algorithm for FB-persistence specifically and see why it is only linear time.
\begin{proposition}
    Let $f$ be a filtration function, then there exists a filtration function $f^b$ such that the sequence of topological maps in $(f, f^b)$-persistence is the same as the ones in FB-persistence with respect to $f$. Furthermore, Algorithm~\ref{alg:backward_filtration} computes $f^b$ in with a runtime of $O(|V| + |G|)$.
\end{proposition}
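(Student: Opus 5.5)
We have to produce $f^b$ so that the contraction sequence in $(f,f^b)$-FB persistence coincides with the backward sequence of Definition~\ref{def::backward_PH}. That sequence contracts the intermediate complexes in reverse order $\operatorname{IC}_n(G),\operatorname{IC}_{n-1}(G),\dots,\operatorname{IC}_0(G)$. In Definition~\ref{def::f_g_persistence}, the contraction side contracts $\operatorname{IC}_0(G,f^b),\operatorname{IC}_1(G,f^b),\dots$ in increasing order. So it suffices to find $f^b$ whose filtration $\emptyset\subset G^b_0\subset\dots\subset G^b_n=G$ satisfies
\[G^b_k=\bigcup_{j\ge n-k}\operatorname{IC}_j(G,f).\]
Then $\operatorname{IC}_k(G,f^b)$ is the closure of $G^b_k-G^b_{k-1}$, and we want this to equal $\operatorname{IC}_{n-k}(G,f)$.

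Step one is to check that each such $G^b_k$ is a subgraph. It is a union of closures, hence closed, so every edge comes with its endpoints. The sets are also nested and exhaust $G$ at $k=n$. Existence of $f^b$ then follows from the preceding lemma: set $f^b(x)=\min\{k: x\in G^b_k\}$. Note the sets might not be strictly increasing, but this only collapses equal steps and the lemma's argument is unaffected.

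Step two is to verify the intermediate complexes match. The elements of $G^b_k-G^b_{k-1}$ are exactly the elements of $\operatorname{IC}_{n-k}(G,f)$ not lying in any later $\operatorname{IC}_j$ with $j>n-k$. Since intermediate complexes meet only in vertices, every edge of $\operatorname{IC}_{n-k}(G,f)$ survives this removal. The closure therefore recovers all of $\operatorname{IC}_{n-k}(G,f)$, apart from possibly isolated vertices that were already covered. This isolated-vertex case is the main subtlety. Such a vertex $v$ with $f(v)=a_{n-k}$ has all incident edges appearing later, so $v$ already lies in some later $\operatorname{IC}_j$. Contracting it again at step $k$ is a no-op, because it is already inside the supernode $*$. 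Hence the quotient spaces $G/(*\cup\operatorname{IC}_{n-k}(G,f))$ agree with those defined by $f^b$, and the two sequences of topological maps coincide.

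Step three is the algorithm. For vertex-based $f$, we first compute the ranks of the distinct values of $f$; after bucketing this is linear, since the ranks are just the indices $0,\dots,n$ of the given filtration. For each edge $e=(v,w)$, we know $e\in\operatorname{IC}_{i}(G,f)$ with $a_i=f(e)=\max(f(v),f(w))$, so we set $f^b(e)=n-i$. For each vertex $v$, we set $f^b(v)=n-\max\{i : v\in\operatorname{IC}_i\}$. This is the minimum of $n-\operatorname{rank}(f(v))$ and $n-\operatorname{rank}(f(e))$ over incident edges $e$, which one pass over the adjacency lists computes. Each vertex and edge is touched $O(1)$ times, giving $O(|V|+|G|)$. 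Finally we check that $f^b(v)\le f^b(e)$ for incident $e$, which holds by construction of the minimum, so $f^b$ is a filtration function.
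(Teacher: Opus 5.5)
Your proposal is correct and is essentially the paper's argument. After the order-preserving relabeling $-a_i\mapsto n-i$, your $f^b(e)=n-\operatorname{rank}f(e)$, $f^b(v)=n-\max\{i : v\in\operatorname{IC}_i(G,f)\}$ is exactly the output $f^b(e)=-f(e)$, $f^b(v)=-\max_{e\ni v}f(e)$ (or $-f(v)$ for isolated $v$) of Algorithm~\ref{alg:backward_filtration}, and your observation that $\operatorname{IC}_k(G,f^b)$ may omit vertices already absorbed into the supernode is more careful than the paper's claim of exact equality $\operatorname{IC}_i(G;f^b)=\operatorname{IC}_{n-i}(G;f)$.

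There are three soft spots, none affecting the core argument.
\begin{itemize}
\item Ranking the values of a real-valued $f$ needs a sort. To get a genuinely linear bound, negate the raw values as Algorithm~\ref{alg:backward_filtration} does.
\item Algorithm~\ref{alg:backward_filtration}'s edge loop must update both endpoints of each edge, as your adjacency-list pass does; as written it only updates $v$.
\item For you and the paper alike, ``the same sequence'' holds only up to deleting the identity steps of FB-persistence that occur when $G^b_k=G^b_{k-1}$. No choice of $f^b$ can reproduce those steps.
\end{itemize}
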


\begin{algorithm}
\caption{Computing the Backward Filtration Function}\label{alg:backward_filtration}
\hspace*{\algorithmicindent} \textbf{Input $\to$ Output:} The graph $G$ and filtration function $f$ $\to$ The function $f^b$
\begin{algorithmic}[1]
\State $f^b_{\operatorname{vertex}} \gets \{v: -\infty\ |\ \text{for each $v \in V(G)$}\}$ \Comment{Initialize $f^b$ on vertices with values in $-\infty$.}
\For{edge $e = (v, w)$ in $E(G)$}
    \State $f^b_{\operatorname{vertex}}[v] \gets \max(f(e), f^b_{\operatorname{vertex}}[v])$.
\EndFor
\For{vertex $v$ in $V(G)$}
    \If{$f^b_{\operatorname{vertex}}[v] = -\infty$}
    \State $f^b_{\operatorname{vertex}}[v] \gets f(v)$ \Comment{Mark isolated vertices to their value under $f$.}
    \EndIf
\EndFor
\State $(f^{b}_{\operatorname{vertex}}, f^{b}_{\operatorname{edge}}) \gets (\operatorname{map}(x \mapsto -x; f^{b}_{\operatorname{vertex}}), \operatorname{map}(x \mapsto -x; f_{\operatorname{edge}})).$
\State \Return $(f^b_{\operatorname{vertex}}, f^b_{\operatorname{edge}})$.
\end{algorithmic}
\end{algorithm}

\begin{proof}
    To show that Algorithm~\ref{alg:backward_filtration} correctly computes $f^b$, it suffices for us to show that $\operatorname{IC}_i(G; f^b)$ is exactly $\operatorname{IC}_{n-i}(G; f)$. Indeed, the intermediate complexes produced for the pair $(G, f)$ are ``upward closed" in the sense that the edges of $\operatorname{IC}_i(G;f)$ all have the same value $a_i$, but the value of vertices are in general only $\leq a_i$.\\

    If we want $\operatorname{IC}_n(G; f)$ to be the first intermediate complex that appears in the filtration $f^b$, we would want to relabel all of its vertices to the maximal value $a_n$, which is precisely what the algorithm does. Since $f$ is vertex-based, we only need to do this on vertices and we can modify the edges later.\\
    
    If we want $\operatorname{IC}_{n-1}(G; f)$ to be the second intermediate complex that appears in the filtration $f^b$, we would want all vertices that have not been marked $a_n$ already to be marked $a_{n-1}$. This amounts to a maximality comparison in the for loop in the algorithm.\\

    If we want $\operatorname{IC}_{n-2}(G;f)$ to be the third that appears, we similarly want to label all vertices not marked $a_{n}, a_{n-1}$ yet to be marked as such, so repeating this process yields the correctness of the algorithm.\\

    The algorithm has runtime $O(|V| + |G|)$ since it only requires looping through the vertex set and the edge set linearly.
\end{proof}

Now we move on to Proposition~\ref{prop::sigma_fil_algorithm}. In fact, Algorithm~\ref{alg:backward_filtration} before is a special case of the algorithms for this. In order to create a backward-filtration function that contracts in a permutation of the intended order specified by a permutation $\sigma$, we observe that Algorithm~\ref{alg:backward_filtration} would actually hold if the $\operatorname{max}$ function is operated with respect to an ``ordering given by $\sigma$" as opposed to the natural ordering of the reals.

Instead of changing the ordering system of the reals in our algorithm, we will instead change the filtration function $f$ inside the parameter of the $\operatorname{max}$ function to this ordering. To do this, we define the following variant of $f$:

\begin{definition}
    Let $f: G \to \Rbb$ be a filtration function with filtration values $a_1 < a_2 < ... < a_n$. Let $\sigma$ be a permutation of the list $\{1, ..., n\}$, we define $\sigma \cdot f$ as the function
    \[\sigma \cdot f: G \to \Rbb,\quad \sigma \cdot f(x) = a_{\sigma(i)} \text{ if } f(x) = a_i,\]
    where $x$ is either a vertex or an edge.
\end{definition}

\begin{algorithm}
\caption{Computing the $\tau$-Backward Filtration Function}\label{alg:sigma_filtration}
\hspace*{\algorithmicindent} \textbf{Input $\to$ Output:} The graph $G$, function $f$, permutation $\tau^{-1}$ $\to$ The function $f^\tau$
\begin{algorithmic}[1]
\State $g \gets (\operatorname{re} \circ \tau^{-1}) \cdot f$
\State $f^\tau_{\operatorname{vertex}} \gets \{v: -\infty\ |\ \text{for each $v \in V(G)$}\}$ \Comment{Initialize $f^\tau$ on vertices with values in $-\infty$.}
\For{edge $e = (v, w)$ in $E(G)$}
    \State $f^\tau_{\operatorname{vertex}}[v] \gets \max(g(e), f^\tau_{\operatorname{vertex}}[v])$.
\EndFor
\For{vertex $v$ in $V(G)$}
    \If{$f^\tau_{\operatorname{vertex}}[v] = -\infty$}
    \State $f^\tau_{\operatorname{vertex}}[v] \gets g(v)$ \Comment{Mark isolated vertices to their value under $f$.}
    \EndIf
\EndFor
\State $f^{\tau}_{\operatorname{vertex}} \gets \operatorname{map}(x \mapsto -x; f^{\tau}_{\operatorname{vertex}}).$
\State $f^{\tau}_{\operatorname{edge}} \gets \operatorname{map}(x \mapsto -x; g_{\operatorname{edge}}).$ \Comment{$g_{\operatorname{edge}}$ is the function $g$ on $E(G)$.}
\State \Return $(f^\tau_{\operatorname{vertex}}, f^\tau_{\operatorname{edge}})$.
\end{algorithmic}
\end{algorithm}

In practice, a permutation $\sigma$ can be realized as a dictionary with entry being $i$ and output being $\sigma(i)$. We can use this to make an associated dictionary whose entry is $a_i$ and output is $a_{\sigma(i)}$. Depending on how the filtration function $f$ is represented as a data structure, this may require a sorting of the list of filtration values if the values are not sorted already. We can then produce $\sigma \cdot f$ by a linear scan through the entries of $f$ and swap out its output using the associated dictionary.\\

A slight modification of Algorithm~\ref{alg:backward_filtration} now produces Algorithm~\ref{alg:sigma_filtration}. In the special case when $\tau = \operatorname{re}$ (the reverse list permutation), this will recover Algorithm~\ref{alg:backward_filtration}. This gives $f_2$ as requested in Proposition~\ref{prop::sigma_fil_algorithm}, but note that plugging $\sigma$ into $\tau$ also gives the desired $f_1$ in the proposition. A slight modification of the proof for Algorithm~\ref{alg:backward_filtration} will ensure the correctness of the algorithm here. Because we possibly may need to sort a list, this would incur a worst-case runtime of $O(N \log N)$, where $N = |V(G)| + |E(G)|$.

\newpage
\section{Proofs for Section~\ref{sec::generalization}}

\begin{proof}[Proof of Proposition~\ref{prop::simplicial_algo}]
Let $Y$ be a simplicial complex, and let $Y_{\bullet}$ be a sequence of inclusions and contractions in some arbitrary order, of the form:
\[\emptyset = Y_{-1} \to Y_0 \to Y_1 \to ... \to Y_m \to Y_{m+1} = *.\]
Using the simplicial quotient method, we adjoin a disjoint base point $v_+$ to $Y$ and also form a sequence of inclusions $Z_{\bullet}$ of the form
\[v_+ \to Z_0 \to Z_1 \to ... \to Z_m \to Z_{m+1}.\]
Taking $H_k(-)$ gives the sequence
\[H_k(Z_{\bullet}): 0 \to H_k(Z_0) \to H_k(Z_1) \to ... \to H_k(Z_m) \to H_k(Z_{m+1}).\]
Fix $k > 0$, we also let $H_k(Y_{\bullet})$ be the $k$-th persistent module for this from Definition~\ref{def::persistent_module}.\\

Now for each $Z_i$, let $C_i$ be the closed star of the vertex $v_+$ (i.e., the union of all simplices containing $v_+$ and their faces). Observe that $Y_i$ is isomorphic to the quotient $Z_i/C_i$. We also note that $C_i$ is actually contractible - indeed, in the standard geometric realization of $Z_{i}$, there is an explicit straight line homotopy based on line segments from the simplicial link of $v_+$ to $v_+$ from construction, which shows that $C_i$'s are contractible.\\

It is a general fact that for a simplicial complex $K$ and contractible subcomplex $K'$, the quotient map $K \to K/K'$ is a homotopy equivalence (see Proposition 0.17 of \cite{hatcher2002algebraic}). \\
\[\begin{tikzcd}
	0 & {H_k(Z_0)} & {H_k(Z_1)} & {...} & {H_k(Z_m)} & {H_k(Z_{m+1}) = 0} \\
	0 & {H_k(Z_0/C_0)} & {H_k(Z_1/C_1)} & {...} & {H_k(Z_m/C_m)} & {H_k(\{*\}) =0} \\
	0 & {H_k(Y_{0})} & {H_k(Y_1)} & {...} & {H_k(Y_m)} & {H_k(\{*\}) = 0}
	\arrow[from=1-1, to=1-2]
	\arrow[from=1-2, to=1-3]
	\arrow["\cong"', from=1-2, to=2-2]
	\arrow[from=1-3, to=1-4]
	\arrow["\cong"', from=1-3, to=2-3]
	\arrow[from=1-4, to=1-5]
	\arrow["\cong"', from=1-4, to=2-4]
	\arrow[from=1-5, to=1-6]
	\arrow["\cong"', from=1-5, to=2-5]
	\arrow["{=}"{description}, from=1-6, to=2-6]
	\arrow[from=2-1, to=2-2]
	\arrow[from=2-2, to=2-3]
	\arrow["\cong"', from=2-2, to=3-2]
	\arrow[from=2-3, to=2-4]
	\arrow["\cong"', from=2-3, to=3-3]
	\arrow[from=2-4, to=2-5]
	\arrow["\cong"', from=2-4, to=3-4]
	\arrow[from=2-5, to=2-6]
	\arrow["\cong"', from=2-5, to=3-5]
	\arrow["{=}"{description}, from=2-6, to=3-6]
	\arrow[from=3-1, to=3-2]
	\arrow[from=3-2, to=3-3]
	\arrow[from=3-3, to=3-4]
	\arrow[from=3-4, to=3-5]
	\arrow[from=3-5, to=3-6]
\end{tikzcd}\]
Here $H_k(Z_{m+1}) = 0$ because $C_{m+1} = Z_{m+1}$ in this case. Thus, we have an isomorphism of persistence modules between $H_k(Z_{\bullet})$ and $H_k(Y_{\bullet})$, so they have the same persistence diagrams.\\

To see what goes wrong in the case $k = 0$, we observe that the placement of the node $v_+$ can affect the birth / death time of vertices since $v_+$ is now the oldest node, so the birth times of the vertices would shift. This can be remedied, for FB-persistence or $(\sigma, \tau)$-persistence, where we instead place $v_+$ to spawn after filtration finishes and before contraction begins. However, this does not work for hourglass persistence, since we can contract before all the filtration finishes.

Thus, we would like to consider a different method, as described in the proposition. For a simplicial complex $K$, we write $(K)^1$ to denote the 1-skeleton of the simplicial complex $K$ (i.e., the union of all vertices and edges). On the level of $k = 0$, observe that the inclusion of the $1$-skeleton of $X$ induces a map of persistence modules of the form:
\[\begin{tikzcd}
	{0} & {H_0((Y_{0})^1)} & {H_0((Y_1)^1)} & {...} & {H_0((Y_m)^1)} & {H_0(\{*\})} \\
	{0} & {H_0(Y_{0})} & {H_0(Y_1)} & {...} & {H_0(Y_m)} & {H_0(\{*\})}
	\arrow[from=1-1, to=1-2]
	\arrow[from=1-2, to=1-3]
	\arrow[from=1-2, to=2-2]
	\arrow[from=1-3, to=1-4]
	\arrow[from=1-3, to=2-3]
	\arrow[from=1-4, to=1-5]
	\arrow[from=1-4, to=2-4]
	\arrow[from=1-5, to=1-6]
	\arrow[from=1-5, to=2-5]
	\arrow["{=}"{description}, from=1-6, to=2-6]
	\arrow[from=2-1, to=2-2]
	\arrow[from=2-2, to=2-3]
	\arrow[from=2-3, to=2-4]
	\arrow[from=2-4, to=2-5]
	\arrow[from=2-5, to=2-6]
\end{tikzcd}.\]
Here each verticial arrow is an isomorphism because the inclusion map $(Y_i)^1 \to Y_i$ is a bijection on connected components. Thus, we have an isomorphism of persistence modules and hence the top and bottom row have the same persistence diagrams. What this means is that $0$-th dimensional persistent homology for the simplicial complex is equivalent to the $0$-th dimensional persistent homology for its restriction to the 1-skeleton. Thus, this reduces to the scenario in Lemma~\ref{lem::0-dim-persistence-hourglass}, and this concludes the proof.
\end{proof}

\newpage
\section{Proofs for Section~\ref{sec::stability}}

\begin{proof}[Proof of Theorem~\ref{thm::stability}]
We split the proof of stability into two parts - the case where $k > 0$ and the case where $k = 0$.

Let $X$ be a simplicial complex and $H_{k}(X_{\bullet}, f, g)$ and $H_{k}(X_{\bullet}, f', g')$ be the two persistence modules in $(f,g)$-FB persistence and $(f', g')$-FB persistence respectively.

For $k > 0$, and when $X$ is a simplicial complex, we use the simplicial quotient interpretation of the persistence modules here (in the sense of Section~\ref{sec::generalization}). By Proposition~\ref{prop::simplicial_algo}, we see that the persistence pairs associated to $H_{k}(X_{\bullet}, f, g)$ is the same as the persistence pairs associated to the persistence homology of a simplicial filtrations of a simplicial complex $Z$, which we will write the filtration function as $h$. Similarly, for $H_{k}(X_{\bullet}, f', g')$, we will get a (possibly) different simplicial filtration of the same simplicial complex $Z$, which we will write the filtration function as $h'$.

Thus, we see that $d_B(H_k(X_{\bullet}, f, g), H_K(X_{\bullet}, f', g') = d_B(H_k(Z_{\bullet}, h), H_k(Z_{\bullet}, h'))$ is the bottleneck distance of the persistence diagrams coming from two filtration functions $h$ and $h'$. By the classic bottleneck stability of \cite{Cohen-Steiner_Edelsbrunner_Harer_2006}, we have that
\[d_B(H_k(Z_{\bullet}, h), H_k(Z_{\bullet}, h')) \leq || h - h' ||_{\infty}.\]
Let us unwrap the construction of $h$ and $h'$ here. Indeed, recall $Z$ is the simplicial cone of $X$. For $(f, g)$, the function $h$ agrees with $f$ when restricted to the base of the simplicial cone (which is a copy of $X$). Every other simplex must contain the disjoint basepoint $v_{+}$, and is obtained by joining $v_{+}$ to a base simplex $\sigma$ in $X$. We write all such simplices as $(\sigma, v_{+})$. In this case, $h(\sigma, v_{+})$ is defined to be $\max(f) + g(\sigma)$. Finally, one also specifies that $h(v_{+})$ to be born before all the other persistence values, say $0$ for uniformity (this does not affect the persistence pairs of dimension higher than $0$). There is a similar description for $h'$, and hence we see that
\begin{align*}
|| h - h' ||_{\infty} &= \max(||f - f'||_{\infty}, ||g - g' + (\max(f) - \max(f'))||_{\infty})\\
&\leq ||f - f'||_{\infty} + ||g - g' + (\max(f) - \max(f'))||_{\infty}\\
&\leq ||f - f'||_{\infty} + ||g - g'||_{\infty} + |\max(f) - \max(f')|.
\end{align*}
\textbf{Remark:} Note that there is no coefficient 2 here. For brevity, in the main paper, we added a 2 because that is the bound we will get for $k = 0$.

When $X$ is a (regular) cell complex, we evidently can still take the cone of a cell complex, which has cell decomposition according to the subcomplex of the base. The arguments above would still hold in this case. This is because the original Bottleneck stability \cite{Cohen-Steiner_Edelsbrunner_Harer_2006} was proven for all triangulable spaces and tame functions on them. Although not every cell complex is trianguable, it turns out every regular cell complex is trianguable (see Theorem 3.4.1 of \cite{fritsch1990cellular}). The filtration is clearly still tame on the triangulation as the proof constructs the triangulation one skeleton at a time. It follows that the bound above applies through.

The case of $k = 0$ is different because the persistence tuples differ. First of all, by the same argument in the proof of Proposition~\ref{prop::simplicial_algo}, we can reduce this to the case where $X$ is a graph. We will still work in the simplicial quotient perspective. The reader might wonder - there are some graphs that are technically not simplicial complexes (i.e., has self-loops or multiple edges), does the simplicial quotient also work for them? The answer is yes because for such graphs, one can always discretize further by labeling the mid-point of exceptional edges (i.e., self-loops and multi-edges) as a vertex of the graph. This then becomes a simplicial complex and makes no difference in filtration since we will be spawning the midpoints at the same time as the edges, so we can ignore these trivial deaths.

Thus, \ul{we without loss reduce to the case where $X$ is a simplicial graph, and we can try to apply a variant of the simplicial quotient method.} Indeed, we observe that, - if we move the adjoined formal basepoint $v_+$ to be spawned after we have filtrated the entire graph $X$ first using $f$ and before we start the contraction using $g$, then the \ul{$0$-th dimensional persistence diagram with the pair representing $v_+$ removed is the $0$-th dimensional persistence diagram from the 0-th dimensional $(f,g)$-persistence diagram}. Indeed, this is because, $v_+$, being the last vertex spawned, will die immediately when the first vertex in $g$ appears and asks to be contracted (which will not change the other outputs).\\

To go further, we use a proof strategy of bottleneck stability (ex. \cite{skraba2021noteselementaryproofstability}, \cite{Schnider2024_TDA_Chapter6}) and adapt it to our scenario. Indeed, let $Z$ be the simplicial cone of $X$, and $h$ and $h'$ be the filtrations corresponding to the pairs $(f, g)$ and $(f', g')$. But also, because we are working with $k = 0$, we can again restrict to the 1-skeleton $Z'$ of $Z$, which is now a graph.\\

For each simplex (either an edge or a vertex) $x \in Z'$, we consider a linear interpolation function $h_t(y) = (1 - t) h(x) + t h'(x)$ with $t$ values in $[0, 1]$. We can divide the interval $[0, 1]$ into a finite collection of intervals $[t_0, t_1], [t_1, t_2], ..., [t_n, t_{n+1}]$ with $t_0 = 0$ and $t_{n+1} = 1$ such that, for every pair of simplices $x$ and $y$ and $t \in [t_i, t_{i+1}]$, we either have that $h_t(x) \geq h_t(y)$ for all $t$ or $h_t(x) \leq h_t(y)$ for all $t$.\\

Since the relative ordering of simplices are not changed in this interval, the 0-dimensional persistence diagrams for $h_{t_i}$ and $h_{t_{i+1}}$ are the same in \textbf{combinatorial time}. Now choose any bijection $\pi$ of the \textbf{function time} persistence diagrams for $\operatorname{PH}^{F}_0(Z'; h_{t_i})$ and $\operatorname{PH}^{F}_0(Z'; h_{t_{i+1}})$ (by which we denote their forward time 0-diemnsional persistence diagrams) such that $\pi(b_t, d_t) = (b_{t+1}, d_{t+1})$ if and only if both tuples represent the same tuple in \textbf{combinatorial time}. Note that due to the placement of $v_+$, this will necessarily pair the two pairs associated to $v_+$ together (call them $(b_+, d_+)$ and $(b'_+, d'_+)$. Notably the restriction $\pi_1$ of $\pi$ to the tuples excluding the one corresponding to $v_+$ is still a bijection.\\

For any two filtrations $\phi, \psi$ of $Z'$ (the 1-skeleton of the cone of $X$). We write $d'_B(\operatorname{PH}^{F}_0(Z'; \phi), \operatorname{PH}^{F}_0(Z'; h_{t_{i+1}}))$ as the term
\[\inf_{\varphi\text{ bijections }\operatorname{PH}^{F}_0(Z'; \phi) - (b_+, d_+) \to \operatorname{PH}^{F}_0(Z'; \psi) - (b_+', d_+')} ||(b, d) - \varphi(b, d) ||_{\infty}.\]
Observe that when $\phi = h$ and $\psi = h'$, one has that
\[d_B^{\operatorname{FB}}(H_0(X, f, g), H_0(X, f', g')) = d'_B(\operatorname{PH}^{F}_0(Z'; h), \operatorname{PH}^{F}_0(Z'; h')).\]
Now $d'_B$ can fail the non-degeneracy condition for being a metric, but it will still have a triangle inequality!\\

Now each tuple in the persistence diagram here actually comes from a \textbf{vertex-edge pair} $(v, e)$, where $v$ makes birth to the tuple and $e$ kills it. Choosing $\pi_1$ as the bijection here, we have
\begin{align*}
     d'_B(\operatorname{PH}^{F}_0(Z'; h_{t_i}), \operatorname{PH}^{F}_0(Z'; h_{t_{i+1}})) &\leq ||(b, d) - \pi_1(b, d)||\\
     &\leq \max_{(v, e) \in X} ||(h_{t_i}(v), h_{t_i}(e)) - (h_{t_{i+1}}(v), h_{t_{i+1}}(e) ||_{\infty} \tag*{Definition of $\pi_1$ and associating pairs}\\
     &\leq \max_{v \in X} ||h_{t_i}(v) - h_{t_{i+1}}(v)||_{\infty} + \max_{e \in X} ||h_{t_i}(e) - h_{t_{i+1}}(e)||_{\infty}\\
     &\leq (t_{i+1} - t_i) ||f - f'||_{\infty} + \max_{e \in X} ||h_{t_i}(e) - h_{t_{i+1}}(e)||_{\infty}
\end{align*}
For the second term, there are two possibilities for the edge $e$ - either it comes from the filtration or it comes from the contraction. Thus, we have that
\begin{align*}
\max_{e \in X} ||h_{t_i}(e) - h_{t_{i+1}}(e)||_{\infty} &\leq \max((t_{i+1} - t_i) || f - f' ||_{\infty}, \\&(t_{i+1} - t_i) ||(g - g') + (\max(f) - \max(f') ||_{\infty})\\
&\leq (t_{i+1} - t_i)(|| f - f' ||_{\infty} + ||(g - g')||_{\infty} + |\max(f) - \max(f')|)
\end{align*}
Now decomposing $d'_B(\operatorname{PH}^{F}_0(Z'; h), \operatorname{PH}^{F}_0(Z'; h'))$ into the sum $\sum_{i = 0}^{m} d'_B(\operatorname{PH}^{F}_0(Z'; h_{t_i}), \operatorname{PH}^{F}_0(Z'; h_{t_{i+1}}))$ and using the bound above, we have that
\[d_B^{\operatorname{FB}}(H_0(X, f, g), H_0(X, f', g')) \leq 2 || f - f' ||_{\infty} + ||(g - g')||_{\infty} + |\max(f) - \max(f')|.\]
\end{proof}

\section{More Discussions on Comparison with Other Methods}\label{sec::comparison}

\subsection{Zigzag Filtration}

In the definition of $(f,g)$-FB-persistence, the filtration function g is specifying the order of subgraphs being contracted. This is different from the use case of zigzag filtration \cite{Carlsson2010}, where they are looking at a sequence of insertions and removals of a graph, but the removal process is fundamentally not continuous, so their persistence diagrams do not follow a linear time (whereas our set-up does).\\\\
To give an example, consider $G$ the path graph on 2 vertices, $H_1$ the emptyset, $H_2$ a subgraph of $G$ which is discrete with $2$ vertices. Then the following is a diagram that can occur in zigzag filtration
$$H_1 \subset G \supset H_2$$
Observe that there is no map from $G$ to $H_2$ that is continuous, since we cannot split $G$ into 2 components. In general, the arrows in zigzag filtration go in different directions, and they are all inclusion maps. On the other hand, the class of diagrams we always consider in (f,g)-FB persistence is a sequence of continuous maps
$$G_1 \to G_2 \to G_3 \to ... \to G_n$$
which are inclusions followed by contractions. Here the arrows all go in the same direction, but they can be either inclusions or contractions (this is even more apparent in the setting of hourglass persistence).\\\\
More technically, zigzag filtration is considering quiver spaces on the path graph $P_n$ with any possible orientations but with inclusions, whereas we are considering quiver spaces on $P_n$ oriented in a uniform direction with both inclusions and contractions. Generally, we expect zigzag persistence and $(f,g)$-FB persistence to be incomparable, and their methods can complement each other.

\subsection{Bipath Filtration}

In \cite{Aoki2025}, the authors introduced bipath persistence built on bipath filtrations, which are quiver spaces on the Hasse quiver $B_{n, m}$, which is of the form
\[\begin{tikzcd}
	n & \bullet & {...} & \bullet \\
	\bullet &&&& \bullet \\
	m & \bullet & {...} & \bullet
	\arrow[from=1-2, to=1-3]
	\arrow[from=1-3, to=1-4]
	\arrow[from=1-4, to=2-5]
	\arrow[from=2-1, to=1-2]
	\arrow[from=2-1, to=3-2]
	\arrow[from=3-2, to=3-3]
	\arrow[from=3-3, to=3-4]
	\arrow[from=3-4, to=2-5]
\end{tikzcd}\]
Similar to the case of zigzag filtration, the filtrations that appear in bipath filtration on Page 1 of \cite{Aoki2025} is again using inclusions only, whereas our settings consider the usage of contractions along with inclusions. Furthermore, bipath filtration considers quiver spaces on the Hasse quiver $B_{n,m}$ with inclusions, but we study quiver spaces on $P_n$ oriented in a uniform direction with both inclusions and contractions. Therefore, we expect that the two methods in general do not subsume one another but are complementary ideas.

\subsection{Efficiency Comparison}

For persistence diagrams (PDs) of dimension $> 0$ on simplicial complexes, hourglass persistence itself can be computed using the "simplicial quotient" trick mentioned in Section~\ref{sec::generalization} of the paper using Proposition~\ref{prop::simplicial_algo}. Proposition~\ref{prop::sigma_fil_algorithm} reduces the question to computing inclusion-based PH on a simplicial complex, which has a standard runtime dominated by the runtime to perform matrix reduction algorithms (see \cite{Otter_2017}). For PDs of dimension $0$, this can be reduced to looking at PDs of just underlying 1-skeleton (i.e., a graph). In this case, there is a way to optimize the calculation with union-find with a runtime dominated by $O(n \log n + m)$, where $n$ is the number of edges and $m$ is the number of vertices (see \cite{horn2021topological} for more details). Although \cite{horn2021topological} only discuss the runtime in the filtration steps, tracking vertex representatives in the contraction steps would also have the same runtime.\\\\
When focusing on FB-persistence and $(f,g)-$FB-persistence, though, we expect the algorithms presented in Section~\ref{sec::alg} (which are not using the ``simplicial quotient" trick) to be faster empirically. In general, the ability to reduce the size of object using contractions (ie. using the cellular complex extension in Section~\ref{sec::generalization} instead) would lead to a decrease in the memory complexity. We therefore expect the cellular-based methods of hourglass persistence, $(f,g)-$FB persistence, etc. to have more efficient algorithms in the future.

\subsection{Scaling on Large Graphs}\label{subsec::large_graph}

Interleaving the filtration and contraction steps avoids the need to filtrate the entire graph before starting contractions, which allows the total size of the graph that appears in its entire lifetime to be bounded.\\\\
This can improve over the runtime on general simplicial complexes. Let $K$ be a simplicial complex with $n$ total simplicies. To give a heuristic/informal estimation from a practitioner perspective - the typical runtime of inclusion-based PH on $K$ scales approximately $O(n^3)$ as it requires a matrix reduction algorithm.\\\\
Suppose we bound the threshold to $d$ simplicies, and a practitioner equipped with the contraction framework can implement a specific instance of hourglass persistence as follows: as soon as we reach more than $d$ simplicies in a given step, we contract everything to a point, and so on. Then we believe the runtime should roughly to be $O(\frac{n}{d} \cdot (d)^3)$.\\\\
For practical purposes, it may also be beneficial to not contract everything to a point when the threshold is exceeded. If the contractions are done in multiple stages, then we believe a similar runtime analysis holds. We are excited at the possibilities that this framework can allow our community to address the problem of PH scaling on large graphs/simplicial complexes.

\subsection{Miscellaneous Discussions}

Although we did not discuss this in the main paper, we remark that there is a suitable generalization of hourglass persistence to ``$(f,g)$-hourglass persistence".

\begin{definition}
Let $(f,g)$ be a pair of filtration functions on $G$, then the \textbf{$(f,g)$-hourglass persistence} is the persistence diagram of any sequence of inclusions of $\operatorname{IC}_i(G, f)$ and contractions of $\operatorname{IC}_j(G, g)$, provided that all the elements in $\operatorname{IC}_j(G,g)$ have appeared before it is being contracted.
\end{definition}

Unlike $(f,f)$-FB persistence, $(f,f)$-hourglass persistence would be strictly more expressive than forward persistence using $f$, due to the presence of permutations.

To conclude we also give one last remark about graph-pooling.

\begin{remark}
    Edge contractions have been used in graph pooling \citep{edge_contraction_pooling, limbeck2025geometryaware} independently from persistent homology. We spectulate whether our approach can be applied with edge contractions in graph pooling.
\end{remark}

\section{The Use of Large Language Models}
We used large language models to aid in the writing process of the introduction, to help check grammar / suggest edits for the main paper, and to double-check some algorithmic discussions arising from the paper.

\end{document}